\documentclass{article}

\usepackage{iclr2021_conference,times}
\iclrfinalcopy %

\setcitestyle{numbers,square}

\usepackage[utf8x]{inputenc}
\usepackage[T1]{fontenc}    %
\usepackage{hyperref}       %
\usepackage{url}            %
\usepackage[inline]{enumitem}
\usepackage{siunitx}
\usepackage[symbol]{footmisc}
\usepackage{stringstrings}  %

\usepackage{chngcntr}
\usepackage{microtype}
\usepackage{graphicx}
\usepackage{booktabs} %
\usepackage{multirow}
\usepackage{xcolor}
\usepackage{xintexpr}
\usepackage{subcaption}
\usepackage{placeins}

\newcommand\Bstrut{\rule[-1.1ex]{0pt}{0pt}}
\newcommand\Tstrut{\rule{0pt}{2.2ex}}

\newcommand{\compactheatmap}[3]{
	\begin{subfigure}{1.90in}
		\includegraphics{figs/hand_designed/epic/#1/bootstrap_#2_all}
		\captionsetup{margin={3.3em,0em}}
		\subcaption{EPIC#3}
	\end{subfigure}%
	\hspace{0.045in}%
	\begin{subfigure}{1.40in}
		\includegraphics{figs/hand_designed/npec/#1/bootstrap_#2_all}
		\subcaption{NPEC#3}
	\end{subfigure}%
	\hspace{0.045in}%
	\begin{subfigure}{2.10in}
		\includegraphics{figs/hand_designed/erc/#1/#2_all}
		\captionsetup{margin={-5.4em,0em}}
		\subcaption{ERC#3}
	\end{subfigure}%
}

\usepackage{pifont}
\definecolor{darkred}{HTML}{990000}
\definecolor{darkgreen}{HTML}{38761d}
\newcommand{\yessymbol}{{\color{darkgreen} \ding{51}}}
\newcommand{\nosymbol}{{\color{darkred} \ding{55}}}

\usepackage{amsmath}
\usepackage{amsfonts}
\usepackage{amssymb}
\usepackage{amsthm}
\usepackage{bbm}
\usepackage{thmtools}
\usepackage{nicefrac}       %

\newtheorem{defn}{Definition}[section]
\newtheorem{theorem}[defn]{Theorem}
\newtheorem{lemma}[defn]{Lemma}

\newtheorem{prop}[defn]{Proposition}
\declaretheoremstyle[%
	spaceabove=0pt,%
	spacebelow=6pt,%
	headfont=\normalfont\itshape,%
	postheadspace=1em,%
	qed=\qedsymbol%
]{compact}
\declaretheorem[name={Proof},style=compact,unnumbered]{compact-prf}
\usepackage{environ}
\NewEnviron{optional-prf}{}

\usepackage{figsymbols}
\usepackage{mlsymbols}

\usepackage[disable]{todonotes}
\newcommand{\ag}[1]{\todo{AG: #1}}

\newcommand{\envheatmapcaption}[1]{Approximate distances between hand-designed reward functions in \env{#1}.
	The \transitiondatasetname{} $\transitiondataset{}$ is sampled from rollouts of a policy $\randompolicy{}$ taking actions uniformly at random.}
\newcommand{\pointmasscaption}{\textbf{Key}: The agent has position $x \in \mathbb{R}$, velocity $\dot{x} \in \mathbb{R}$ and can accelerate $\ddot{x} \in \mathbb{R}$, producing future position $x' \in \mathbb{R}$.
	\controlpenalty{}~quadratic penalty on control $\ddot{x}^2$, \nocontrolpenalty{}~no control penalty.
	\sparse{}~is $\mathrm{Sparse}(x) = \mathbbm{1}[|x| < 0.05]$, \dense{}~is shaped $\mathrm{Dense}(x, x') = \mathrm{Sparse}(x) + |x'| - |x|$, while \magnitude{}~is $\mathrm{Magnitude}(x) = -|x|$.}
\newcommand{\cicaption}{\textbf{Confidence Interval (CI)}: 95\% CI computed by bootstraping over \num{10000} samples.}
\newcommand{\checkpointscaption}{Distance of reward checkpoints from the ground-truth in \pointmaze{} and policy regret for reward checkpoints during reward model training. Each point evaluates a reward function checkpoint from a single seed. \canonicalizeddistanceabbrevonly{}, \nearestpointabbrevonly{} and \returncorrelationabbrevonly{} distance use the \mixturepolicy{}ture distribution. Regret is computed by running RL on the checkpoint. The shaded region represents the bootstrapped 95\% confidence interval for the distance or regret at that checkpoint, calculated following Section~\ref{sec:supp:approx:ci}.}

\ificlrfinal
\newcommand{\projectsource}{\url{https://github.com/HumanCompatibleAI/evaluating-rewards}}
\else
\newcommand{\projectsource}{--- double blind: supplementary material ---}
\fi

\title{Quantifying Differences in Reward Functions}

\author{
	Adam Gleave\textsuperscript{1,2}\quad%
    Michael Dennis\textsuperscript{1}\quad%
    Shane Legg\textsuperscript{2}\quad%
    Stuart Russell\textsuperscript{1}\quad%
    Jan Leike\textsuperscript{3}\thanks{Work partially conducted while at DeepMind.}
}

\begin{document}

\begin{center}
\maketitle
\ificlrfinal
\vspace{-2.8em}
\textsuperscript{1}UC Berkeley \quad \textsuperscript{2}DeepMind \quad \textsuperscript{3}OpenAI \\
\texttt{gleave@berkeley.edu}
\fi
\end{center}

\begin{abstract}
For many tasks, the reward function is inaccessible to introspection or too complex to be specified procedurally, and must instead be learned from user data.
Prior work has evaluated learned reward functions by evaluating policies optimized for the learned reward.
However, this method cannot distinguish between the learned reward function failing to reflect user preferences and the policy optimization process failing to optimize the learned reward.
Moreover, this method can only tell us about behavior in the evaluation environment, but the reward may incentivize very different behavior in even a slightly different deployment environment.
To address these problems, we introduce the \emph{\canonicalizeddistancelongnameonly{} (\canonicalizeddistanceabbrevonly{})} distance to quantify the difference between two reward functions directly, without a policy optimization step.
We prove \canonicalizeddistanceabbrevonly{} is invariant on an equivalence class of reward functions that always induce the same optimal policy.
Furthermore, we find \canonicalizeddistanceabbrevonly{} can be efficiently approximated and is more robust than baselines to the choice of \transitiondatasetname{}.
Finally, we show that \canonicalizeddistanceabbrevonly{} distance bounds the regret of optimal policies even under different transition dynamics, and we confirm empirically that it predicts policy training success.
Our source code is available at \projectsource{}.
\end{abstract}

\section{Introduction}

Reinforcement learning~(RL) has reached or surpassed human performance in many domains with clearly defined reward functions, such as games~\citep{silver:2016,openai:2018,vinyals:2019} and narrowly scoped robotic manipulation tasks~\citep{openai:2019}.
Unfortunately, the reward functions for most real-world tasks are difficult or impossible to specify procedurally.
Even a task as simple as peg insertion from pixels has a non-trivial reward function that must usually be learned~\citep[IV.A]{vecerik:2019}.
Tasks involving human interaction can have far more complex reward functions that users may not even be able to introspect on.
These challenges have inspired work on learning a reward function, whether from demonstrations~\citep{ng:2000,ramachandran:2007,ziebart:2008,fu:2018,bahdanau:2019}, preferences~\citep{akrour:2011,wilson:2012,christiano:2017,sadigh:2017,ziegler:2019} or both~\citep{ibarz:2018,brown:2019}.

Prior work has usually evaluated the learned reward function $\learnedreward{}$ using the ``rollout method'': training a policy $\pi_{\learnedreward{}}$ to optimize $\learnedreward{}$ and then examining rollouts from $\pi_{\learnedreward{}}$.
Unfortunately, using RL to compute $\pi_{\learnedreward{}}$ is often computationally expensive.
Furthermore, the method produces \emph{false negatives} when the reward $\learnedreward{}$ matches user preferences but the RL algorithm 
fails to optimize with respect to $\learnedreward{}$.

The rollout method also produces \emph{false positives}.
Of the many reward functions that induce the desired rollout in a given environment, only a small subset align with the user's preferences.
For example, suppose the agent can reach states $\{A,B,C\}$.
If the user prefers $A > B > C$, but the agent instead learns $A > C > B$, the agent will still go to the correct state $A$.
However, if the initial state distribution or transition dynamics change, misaligned rewards may induce undesirable policies.
For example, if $A$ is no longer reachable at deployment, the previously reliable agent would misbehave by going to the least-favoured state $C$.

We propose instead to evaluate learned rewards via their distance from other reward functions, and summarize our desiderata for reward function distances in Table~\ref{table:desiderata}.
For benchmarks, it is usually possible to directly compare a learned reward $\learnedreward{}$ to the true reward function $\gtreward{}$.
Alternatively, benchmark creators can train a ``proxy'' reward function from a large human data set.
This proxy can then be used as a stand-in for the true reward $\gtreward$ when evaluating algorithms trained on a different or smaller data set.

Comparison with a ground-truth reward function is rarely possible outside of benchmarks.
However, even in this challenging case, comparisons can at least be used to cluster reward models trained using different techniques or data.
Larger clusters are more likely to be correct, since multiple methods arrived at a similar result.
Moreover, our regret bound (Theorem~\ref{thm:epic-regret-bound-discrete}) suggests we could use interpretability methods~\cite{michaud:2020} on one model and get some guarantees for models in the same cluster.

\begin{table}
    \caption{Summary of the desiderata satisfied by each reward function distance. \textbf{Key} -- the distance is:
        a \emph{pseudometric} (section~\ref{sec:background});
        \emph{invariant}
        to potential shaping~\citep{ng:1999} and positive rescaling (section~\ref{sec:background});
        a computationally \emph{efficient} approximation achieving low error (section~\ref{sec:experiments:hand-designed});
        \emph{robust} to the choice of \transitiondatasetname{} (section~\ref{sec:experiments:visitation-sensitivity});
        and \emph{predictive} of the similarity of the trained policies (section~\ref{sec:experiments:ground-truth}).}
    \label{table:desiderata}
    \centering
    \begin{tabular}{@{}lccccc@{}}
        \toprule
        \textbf{Distance} & \textbf{Pseudometric} & \textbf{Invariant} & \textbf{Efficient} & \textbf{Robust} & \textbf{Predictive} \\
        \midrule
        \canonicalizeddistanceabbrevonly{} & \yessymbol{} & \yessymbol{} & \yessymbol{} & \yessymbol{} & \yessymbol \\
        \nearestpointabbrevonly{} & \nosymbol{} & \yessymbol{} & \nosymbol{} & \nosymbol{} & \yessymbol{} \\
        \returncorrelationabbrevonly{} & \yessymbol{} & \nosymbol{} & \yessymbol{} & \nosymbol{} & \yessymbol{} \\
        \bottomrule
    \end{tabular}
\end{table}

We introduce the \emph{\canonicalizeddistancelongnameonly{} (\canonicalizeddistanceabbrevonly{})} distance that meets all the criteria in Table~\ref{table:desiderata}.
We believe \canonicalizeddistanceabbrevonly{} is the first method to quantitatively evaluate reward functions without training a policy.
\canonicalizeddistanceabbrevonly{} (section~\ref{sec:comparing}) canonicalizes the reward functions' potential-based shaping~\citep{ng:1999}, then takes the correlation between the canonical rewards over a \emph{\transitiondatasetname{}} $\transitiondataset$ of transitions.
We also introduce baselines \emph{\nearestpointabbrevonly{}} and \emph{\returncorrelationabbrevonly{}}~(section~\ref{sec:baselines}) which partially satisfy the criteria.

\canonicalizeddistanceabbrevonly{} works best when $\transitiondataset$ has support on all realistic transitions.
We achieve this in our experiments by using uninformative priors, such as rollouts of a policy taking random actions.
Moreover, we find \canonicalizeddistanceabbrevonly{} is robust to the exact choice of distribution $\transitiondataset{}$, producing similar results across a range of distributions, whereas \returncorrelationabbrevonly{} and especially \nearestpointabbrevonly{} are highly sensitive to the choice of $\transitiondataset{}$~(section~\ref{sec:experiments:visitation-sensitivity}).

Moreover, low \canonicalizeddistanceabbrevonly{} distance between a learned reward $\learnedreward$ and the true reward $\gtreward{}$ predicts low regret.
That is, the policies $\policy_{\learnedreward}$ and $\policy_{\reward}$ optimized for $\learnedreward{}$ and $\gtreward{}$ obtain similar returns under $\gtreward{}$.
Theorem~\ref{thm:epic-regret-bound-discrete} bounds the regret even in unseen environments; by contrast, the rollout method can only determine regret in the evaluation environment.
We also confirm this result empirically~(section~\ref{sec:experiments:ground-truth}).

\section{Related work}
There exist a variety of methods to learn reward functions.
\ag{This paragraph could be condensed a little; I go into much more detail here than for preference comparison.}
Inverse reinforcement learning (IRL)~\citep{ng:2000} is a common approach that works by inferring a reward function from demonstrations.
The IRL problem is inherently underconstrained: many different reward functions lead to the same demonstrations.
Bayesian IRL~\citep{ramachandran:2007} handles this ambiguity by inferring a posterior over reward functions.
By contrast, Maximum Entropy IRL~\citep{ziebart:2008} selects the highest entropy reward function consistent with the demonstrations; this method has scaled to high-dimensional environments~\citep{finn:2016,fu:2018}.

An alternative approach is to learn from \textit{preference comparisons} between two trajectories~\citep{akrour:2011,wilson:2012,christiano:2017,sadigh:2017}.
\mbox{T-REX}~\citep{brown:2019} is a hybrid approach, learning from a \textit{ranked} set of demonstrations.
More directly, \citet{cabi:2019} learn from ``sketches'' of cumulative reward over an episode.

To the best of our knowledge, there is no prior work that focuses on evaluating reward functions directly.
The most closely related work is \citet{ng:1999}, identifying reward transformations guaranteed to not change the optimal policy.
However, a variety of ad-hoc methods have been developed to evaluate reward functions.
The rollout method -- evaluating rollouts of a policy trained on the learned reward -- is evident in the earliest work on IRL~\citep{ng:2000}.
\citet{fu:2018} refined the rollout method by testing on a transfer environment, inspiring our experiment in section~\ref{sec:experiments:ground-truth}.
Recent work has compared reward functions by scatterplotting returns~\citep{ibarz:2018,brown:2019}, inspiring our \returncorrelationabbrevonly{} baseline (section~\ref{sec:baselines:return-correlation}).

\section{Background}
\label{sec:background}

This section introduces material needed for the distances defined in subsequent sections.
We start by introducing the \emph{Markov Decision Process (MDP)} formalism, then describe when reward functions induce the same optimal policies in an MDP, and finally define the notion of a distance \emph{metric}.

\begin{defn}
A \emph{Markov Decision Process (MDP)} $M = \mdp$ consists of
a set of states $\statespace$ and a set of actions $\actionspace$;
a discount factor $\discount \in [0,1]$;
an initial state distribution $\initialstatedist(\state)$;
a transition distribution $\transitiondist(\nextstate \mid \state,\action)$ specifying the probability of transitioning to $\nextstate{}$ from $\state{}$ after taking action $\action{}$;
and a reward function $\reward(\state, \action, \nextstate)$ specifying the reward upon taking action $\action$ in state $\state$ and transitioning to state $\nextstate$.
\end{defn}

A trajectory $\tau = (\state_0, \action_0, \state_1, \action_1, \cdots)$ consists of a sequence of states $\state_i \in \statespace$ and actions $\action_i \in \actionspace$. The \emph{return} on a trajectory is defined as the sum of discounted rewards, $\episodereturn{}(\tau; \reward{}) = \sum_{t=0}^{|\tau|} \discount^t \reward{}(\state_t, \action_t, \state_{t+1})$, where the length of the trajectory $|\tau|$ may be infinite.

In the following, we assume a discounted ($\gamma < 1$) infinite-horizon MDP.
The results can be generalized to undiscounted ($\gamma = 1$) MDPs subject to regularity conditions needed for convergence.

A \emph{stochastic policy} $\policy(\action \mid \state)$ assigns probabilities to taking action $\action \in \actionspace$ in state $\state \in \statespace$. The objective of an MDP is to find a policy $\policy$ that maximizes the expected return $\policyreturn(\policy) = \expectation_{\tau(\policy)}\left[\episodereturn{}(\tau; \reward{})\right]$, where $\tau(\policy)$ is a trajectory generated by sampling the initial state $s_0$ from $\initialstatedist$, each action $a_t$ from the policy $\policy(\action_t \mid \state_t)$ and successor states $\state_{t+1}$ from the transition distribution $\transitiondist(\state_{t+1} \mid \state_t, \action_t)$. An MDP $M$ has a set of optimal policies $\policy^*(M)$ that maximize the expected return, $\policy^*(M) = \argmax_{\policy} \policyreturn(\policy)$.

In this paper, we consider the case where we only have access to an \mdpnorliteral{}, $M^{-} = \mdpnor$.
The unknown reward function $\reward$ must be learned from human data.
Typically, only the state space $\statespace$, action space $\actionspace$ and discount factor $\discount$ are known exactly, with the initial state distribution $\initialstatedist$ and transition dynamics $\transitiondist$ only observable from interacting with the environment $M^{-}$.
Below, we describe an equivalence class whose members are guaranteed to have the same optimal policy set in \emph{any} \mdpnorliteral{} $M^{-}$ with fixed $\statespace$, $\actionspace$ and $\discount$ (allowing the unknown $\transitiondist$ and $\initialstatedist$ to take arbitrary values).

\begin{defn}
	\label{defn:potential-shaping}
	Let $\discount \in [0,1]$ be the discount factor, and $\potential:\statespace \to \mathbb{R}$ a real-valued function. Then $\reward(\state,\action,\nextstate) = \discount \potential(\nextstate) - \potential(\state)$ is a \emph{potential shaping} reward, with \emph{potential} $\potential$~\citep{ng:1999}.
\end{defn}

\begin{defn}[Reward Equivalence]
	\label{defn:equivalence}
	We define two bounded reward functions $\firstreward{}$ and $\secondreward{}$ to be \emph{equivalent}, $\firstreward{} \equivreward{} \secondreward{}$, for a fixed $(\statespace, \actionspace, \discount)$ if and only if there exists a constant $\lambda > 0$ and a bounded potential function $\potential:\statespace \to \mathbb{R}$ such that for all $\state,\nextstate \in \statespace$ and $\action \in \actionspace$:
	\begin{equation}
	\secondreward{}(\state,\action,\nextstate) = \lambda\firstreward{}(\state,\action,\nextstate) + \discount \potential(\nextstate) - \potential(\state).
	\end{equation}
\end{defn}

\begin{restatable}{prop}{rewardequivisequivalence}
The binary relation $\equiv$ is an equivalence relation. Let $\firstreward{}, \secondreward{}, \thirdreward{}:\statespace \times \actionspace \times \statespace \to \mathbb{R}$ be bounded reward functions. Then $\equiv$ is reflexive, $\firstreward{} \equivreward \firstreward{}$; symmetric, $\firstreward{} \equivreward \secondreward{}$ implies $\secondreward{} \equivreward \firstreward{}$; and transitive, $\left(\firstreward{} \equivreward \secondreward{}\right) \land \left(\secondreward{} \equivreward \thirdreward{}\right)$ implies $\firstreward{} \equivreward \thirdreward{}$.
\end{restatable}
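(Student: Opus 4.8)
The plan is to verify the three defining properties of an equivalence relation directly from Definition~\ref{defn:equivalence}, in each case exhibiting an explicit witnessing scalar $\lambda > 0$ and bounded potential $\potential$, and then checking that these witnesses satisfy the side conditions (positivity and boundedness).

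For reflexivity, I would take $\lambda = 1$ and $\potential \equiv 0$, so that $\discount\potential(\nextstate) - \potential(\state) = 0$ and hence $\firstreward{}(\state,\action,\nextstate) = 1 \cdot \firstreward{}(\state,\action,\nextstate) + \discount\potential(\nextstate) - \potential(\state)$, giving $\firstreward{} \equivreward{} \firstreward{}$. For symmetry, suppose $\firstreward{} \equivreward{} \secondreward{}$ is witnessed by $\lambda > 0$ and bounded $\potential$, i.e. $\secondreward{}(\state,\action,\nextstate) = \lambda\firstreward{}(\state,\action,\nextstate) + \discount\potential(\nextstate) - \potential(\state)$. Solving for $\firstreward{}$ gives $\firstreward{}(\state,\action,\nextstate) = \frac{1}{\lambda}\secondreward{}(\state,\action,\nextstate) + \discount\left(-\frac{1}{\lambda}\potential(\nextstate)\right) - \left(-\frac{1}{\lambda}\potential(\state)\right)$; since $\lambda > 0$ the scalar $1/\lambda$ is positive and $-\potential/\lambda$ is bounded, so this witnesses $\secondreward{} \equivreward{} \firstreward{}$.

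For transitivity, suppose $\firstreward{} \equivreward{} \secondreward{}$ via $(\lambda_1, \potential_1)$ and $\secondreward{} \equivreward{} \thirdreward{}$ via $(\lambda_2, \potential_2)$. Substituting the first identity into the second and collecting the potential terms yields $\thirdreward{}(\state,\action,\nextstate) = \lambda_1\lambda_2\firstreward{}(\state,\action,\nextstate) + \discount\left(\lambda_2\potential_1(\nextstate) + \potential_2(\nextstate)\right) - \left(\lambda_2\potential_1(\state) + \potential_2(\state)\right)$. The product $\lambda_1\lambda_2$ is strictly positive and the combined potential $\lambda_2\potential_1 + \potential_2$ is bounded, so $\firstreward{} \equivreward{} \thirdreward{}$, completing the argument.

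The algebra here is just routine substitution; the only point requiring attention is bookkeeping of the side conditions — that each witnessing scalar stays strictly positive (reciprocals and products of positive reals are positive) and each witnessing potential stays bounded (scalar multiples and finite sums of bounded functions are bounded). So the ``hard part'' is minor: threading the positivity and boundedness hypotheses through every step rather than the computation itself.
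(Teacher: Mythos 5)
Your proposal is correct and follows essentially the same route as the paper's proof: the same witnesses in each case ($\lambda=1$, $\potential\equiv 0$ for reflexivity; $1/\lambda$ and $-\potential/\lambda$ for symmetry; $\lambda_1\lambda_2$ and $\lambda_2\potential_1+\potential_2$ for transitivity), with the same bookkeeping of positivity and boundedness. Nothing is missing.
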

\begin{compact-prf}
See section~\ref{sec:supp:proofs:background} in supplementary material.
\end{compact-prf}

The expected return of potential shaping $\gamma \potential(s') - \potential(s)$ on a trajectory segment $(s_0, \cdots, s_T)$ is $\gamma^T \potential(s_T) - \potential(s_0)$.
The first term $\gamma^T \potential(s_T) \to 0$ as $T \to \infty$, while the second term $\potential(s_0)$ only depends on the initial state, and so potential shaping does not change the set of optimal policies.
Moreover, any additive transformation that is not potential shaping will, for some reward $\reward$ and transition distribution $\transitiondist$, produce a set of optimal policies that is disjoint from the original~\citep{ng:1999}. %

The set of optimal policies is invariant to constant shifts $c \in \mathbb{R}$ in the reward, however this can already be obtained by shifting $\potential$ by $\frac{c}{\gamma - 1}$.\footnote{Note constant shifts in the reward of an undiscounted MDP would cause the value function to diverge. Fortunately, the shaping $\gamma\potential(\nextstate) - \potential(\state)$ is unchanged by constant shifts to $\potential$ when $\discount = 1$.}
Scaling a reward function by a positive factor $\lambda > 0$ scales the expected return of all trajectories by $\lambda$, also leaving the set of optimal policies unchanged.

If $\firstreward{} \equivreward{} \secondreward{}$ for some fixed $(\statespace, \actionspace, \discount)$, then for any \mdpnorliteral{} \mbox{$M^{-} = \mdpnor$} we have
$\policy^*\left(\left(M^{-}, \firstreward{}\right)\right) = \policy^*\left(\left(M^{-}, \secondreward{}\right)\right)$,
where $\left(M^{-}, R{}\right)$ denotes the MDP specified by $M^{-}$ with reward function $R$.
In other words, $\firstreward{}$ and $\secondreward{}$ induce the same optimal policies for all initial state distributions $\initialstatedist$ and transition dynamics $\transitiondist$.

\begin{defn}
Let $X$ be a set and $d:X \times X \to [0, \infty)$ a function. $d$ is a \emph{premetric} if $d(x,x) = 0$ for all $x \in X$. $d$ is a \emph{pseudometric} if, furthermore, it is symmetric, $d(x,y) = d(y,x)$ for all $x, y \in X$; and satisfies the triangle inequality, $d(x,z) \leq d(x,y) + d(y,z)$ for all $x, y, z \in X$. $d$ is a \emph{metric} if, furthermore, for all $x, y \in X$, $d(x,y) = 0 \implies x = y$.
\end{defn}

We wish for $d(\firstreward{},\secondreward{}) = 0$ whenever the rewards are equivalent, $\firstreward{} \equivreward{} \secondreward{}$, even if they are not identical, $\firstreward{} \neq \secondreward{}$.
This is forbidden in a metric but permitted in a pseudometric, while retaining other guarantees such as symmetry and triangle inequality that a metric provides.
Accordingly, a pseudometric is usually the best choice for a distance $d$ over reward functions.

\section{Comparing reward functions with \canonicalizeddistanceabbrevonly{}}
\label{sec:comparing}

In this section we introduce the \emph{\canonicalizeddistancelongnameonly{} (\canonicalizeddistanceabbrevonly{})} pseudometric.
This novel distance canonicalizes the reward functions' potential-based shaping, then compares the canonical representatives using Pearson correlation, which is invariant to scale.
Together, this construction makes \canonicalizeddistanceabbrevonly{} invariant on reward equivalence classes.
See section~\ref{sec:supp:proofs:canonicalizeddistance} for proofs.

We define the \emph{canonically shaped reward} $\canonical{\reward{}}$ as an expectation over some arbitrary distributions $\statedistribution$ and $\actiondistribution$ over states $\statespace$ and actions $\actionspace$ respectively.
This construction means that $\canonical{\reward{}}$ does not depend on the MDP's initial state distribution $\initialstatedist$ or transition dynamics $\transitiondist$.
In particular, we may evaluate $\reward{}$ on transitions that are impossible in the training environment, since these may become possible in a deployment environment with a different $\initialstatedist$ or $\transitiondist$.

\begin{defn}[Canonically Shaped Reward]
\label{defn:canonically-shaped-reward}
Let $\reward{}:\statespace \times \actionspace \times \statespace \to \mathbb{R}$ be a reward function.
Given distributions $\statedistribution \in \Delta(\statespace)$ and $\actiondistribution \in \Delta(\actionspace)$ over states and actions, let $\staterv{}$ and $\nextstaterv{}$ be random variables independently sampled from $\statedistribution$ and $\actionrv{}$ sampled from $\actiondistribution$.
We define the \emph{canonically shaped} $\reward{}$ to be:
\ag{Strictly speaking need some conditions for expectation to converge -- bounded would be sufficient but not necessary.}
\begin{equation}
\canonical{\reward{}}(\state, \action, \nextstate) = \reward{}(\state,\action,\nextstate) + \expectation\left[\discount\reward{}(\nextstate, \actionrv{}, \nextstaterv{}) - \reward{}(\state, \actionrv{}, \nextstaterv{}) - \discount\reward{}(\staterv{}, \actionrv{}, \nextstaterv{})\right].
\end{equation}
\end{defn}

Informally, if $\reward'$ is shaped by potential $\potential$, then increasing $\potential(\state)$ decreases $\reward'(\state,\action,\nextstate)$ but increases $\expectation\left[-\reward'(\state,\actionrv{},\nextstaterv{})\right]$, canceling.
Similarly, increasing $\potential(\nextstate)$ increases $\reward'(\state,\action,\nextstate)$ but decreases $\expectation\left[\discount \reward'(\nextstate,\actionrv{},\nextstaterv{})\right]$.
Finally, $\expectation[\discount R(\staterv{},\actionrv{},\nextstaterv{})]$ centers the reward, canceling constant shift.

\begin{restatable}[The Canonically Shaped Reward is Invariant to Shaping]{prop}{canonicallyshapedrewardinvariant}
\label{prop:canonically-shaped-reward-invariant-shaping}
Let $\reward{}:\statespace \times \actionspace \times \statespace \to \mathbb{R}$ be a reward function and $\potential:\statespace \to \mathbb{R}$ a potential function.
Let $\discount \in [0,1]$ be a discount rate, and $\statedistribution \in \Delta(\statespace)$ and $\actiondistribution \in \Delta(\actionspace)$ be distributions over states and actions.
Let $\reward{}'$ denote $\reward{}$ shaped by $\potential$: $\reward{}'(\state,\action,\nextstate) = \reward{}(\state, \action, \nextstate) + \discount \potential(\nextstate) - \potential(\state)$.
Then the canonically shaped $\reward{}'$ and $\reward{}$ are equal:
\ag{Sam pointed out the converse is also true; if the canonicalized versions are equal at all points then they must be up to shaping. Probably worth proving this and rephrasing proposition to be an if-and-only-if condition.}
$\canonical{\reward'} = \canonical{\reward{}}$.
\end{restatable}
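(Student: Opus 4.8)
The plan is to prove the identity by direct substitution: expand $\canonical{\reward{}'}(\state,\action,\nextstate)$ using Definition~\ref{defn:canonically-shaped-reward} with $\reward{}'(\state,\action,\nextstate) = \reward{}(\state,\action,\nextstate) + \discount\potential(\nextstate) - \potential(\state)$, collect the $\reward{}$-terms to recover $\canonical{\reward{}}(\state,\action,\nextstate)$, and then check that all remaining $\potential$-terms cancel.

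First I would substitute $\reward{}'$ into each of the four reward evaluations appearing in the definition. The four shaping contributions are: $\discount\potential(\nextstate) - \potential(\state)$ from the leading term $\reward{}'(\state,\action,\nextstate)$; $\discount\bigl(\discount\potential(\nextstaterv{}) - \potential(\nextstate)\bigr)$ from $\discount\reward{}'(\nextstate,\actionrv{},\nextstaterv{})$; $-\bigl(\discount\potential(\nextstaterv{}) - \potential(\state)\bigr)$ from $-\reward{}'(\state,\actionrv{},\nextstaterv{})$; and $-\discount\bigl(\discount\potential(\nextstaterv{}) - \potential(\staterv{})\bigr)$ from $-\discount\reward{}'(\staterv{},\actionrv{},\nextstaterv{})$. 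By linearity of expectation, the net $\potential$-contribution to $\canonical{\reward{}'}(\state,\action,\nextstate)$ is $\discount\potential(\nextstate) - \potential(\state) + \expectation\bigl[\discount^2\potential(\nextstaterv{}) - \discount\potential(\nextstate) - \discount\potential(\nextstaterv{}) + \potential(\state) - \discount^2\potential(\nextstaterv{}) + \discount\potential(\staterv{})\bigr]$.

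The key observation is that $\staterv{}$ and $\nextstaterv{}$ are both drawn from $\statedistribution$, so $\expectation[\potential(\staterv{})] = \expectation[\potential(\nextstaterv{})]$; hence the two $\discount^2\potential(\nextstaterv{})$ terms cancel each other, and the $\discount\expectation[\potential(\staterv{})]$ term cancels the $-\discount\expectation[\potential(\nextstaterv{})]$ term. Since $\state$ and $\nextstate$ are fixed arguments rather than integration variables, the terms $-\discount\potential(\nextstate)$ and $\potential(\state)$ pass through the expectation unchanged, so what remains from inside the expectation is exactly $-\discount\potential(\nextstate) + \potential(\state)$, which cancels the leading $\discount\potential(\nextstate) - \potential(\state)$. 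Therefore every $\potential$-term vanishes and $\canonical{\reward{}'}(\state,\action,\nextstate) = \canonical{\reward{}}(\state,\action,\nextstate)$ for all $\state,\action,\nextstate$.

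This is bookkeeping rather than a conceptual obstacle; the only points requiring care are that $\reward{}$ and $\potential$ be bounded (or otherwise integrable) so that the expectations are finite and the rearrangement is valid, and that one must not conflate the fixed arguments $\state,\nextstate$ with the dummy variables $\staterv{},\nextstaterv{}$. As a bonus, the computation shows the cancellation is pointwise and independent of $\potential$, so $\reward{} \mapsto \canonical{\reward{}}$ is constant on each potential-shaping orbit; combined with the scale-invariance of Pearson correlation, this is exactly what is needed downstream to conclude that \canonicalizeddistanceabbrevonly{} is invariant on reward equivalence classes.
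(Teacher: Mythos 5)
Your proposal is correct and follows essentially the same route as the paper's proof: direct substitution of $\reward{}'$ into Definition~\ref{defn:canonically-shaped-reward}, linearity of expectation, and the observation that $\staterv{}$ and $\nextstaterv{}$ are identically distributed so $\expectation[\potential(\staterv{})] = \expectation[\potential(\nextstaterv{})]$, which makes all $\potential$-terms cancel pointwise. No gaps; the bookkeeping matches the paper's term-by-term cancellation exactly.
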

\begin{optional-prf}
See section~\ref{sec:supp:proofs:canonicalizeddistance}.
\end{optional-prf}

Proposition~\ref{prop:canonically-shaped-reward-invariant-shaping} holds for arbitrary distributions $\statedistribution$ and $\actiondistribution$.
However, in the following Proposition we show that the potential shaping applied by the canonicalization $\canonical{\reward}$ is more influenced by perturbations to $\reward$ of transitions $(\state, \action, \nextstate)$ with high joint probability.
This suggests choosing $\statedistribution$ and $\actiondistribution$ to have broad support, making $\canonical{\reward}$ more robust to perturbations of any given transition.

\begin{restatable}{prop}{smoothnesscanonicalization}
\label{prop:smoothness-canonicalization}
Let $\statespace$ and $\actionspace$ be finite, with $|\statespace| \geq 2$.
Let $\statedistribution \in \Delta(\statespace)$ and $\actiondistribution \in \Delta(\actionspace)$.
Let $\reward,\:\nu:\statespace \times \actionspace \times \statespace \to \mathbb{R}$ be reward functions, with $\nu(\state, \action,\nextstate) = \lambda \mathbb{I}[(\state,\action,\nextstate) = (x, u, x')]$, $\lambda \in \mathbb{R}$, $x,x' \in \statespace$ and $u \in \actionspace$.
Let $\potential_{\statedistribution, \actiondistribution}(\reward)(\state, \action, \nextstate) = \canonical{\reward}(\state, \action, \nextstate) - \reward(\state, \action, \nextstate)$.
Then:
\begin{equation}
\left\Vert \potential_{\statedistribution, \actiondistribution}(\reward + \nu) - \potential_{\statedistribution, \actiondistribution}(\reward) \right\Vert_{\infty} = \lambda\left(1 + \discount\statedistribution(x)\right)\actiondistribution(u)\statedistribution(x').
\end{equation}
\end{restatable}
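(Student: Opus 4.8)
The plan is to exploit the fact that the canonicalization map $\reward \mapsto \potential_{\statedistribution,\actiondistribution}(\reward)$ is \emph{linear} in its reward argument, which collapses the claim to the evaluation of a single $\ell_\infty$ norm. Indeed, by Definition~\ref{defn:canonically-shaped-reward} the canonically shaped reward $\canonical{\reward}$ is linear in $\reward$ (it is $\reward$ plus an expectation of a linear expression in $\reward$), and therefore so is $\reward \mapsto \canonical{\reward} - \reward = \potential_{\statedistribution,\actiondistribution}(\reward)$. Consequently $\potential_{\statedistribution,\actiondistribution}(\reward + \nu) - \potential_{\statedistribution,\actiondistribution}(\reward) = \potential_{\statedistribution,\actiondistribution}(\nu)$, independent of $\reward$, so it suffices to compute $\Vert \potential_{\statedistribution,\actiondistribution}(\nu) \Vert_\infty$.

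Next I would evaluate $\potential_{\statedistribution,\actiondistribution}(\nu)$ pointwise. Unfolding the definition, $\potential_{\statedistribution,\actiondistribution}(\nu)(\state,\action,\nextstate) = \expectation[\discount\nu(\nextstate,\actionrv,\nextstaterv) - \nu(\state,\actionrv,\nextstaterv) - \discount\nu(\staterv,\actionrv,\nextstaterv)]$ with $\staterv,\nextstaterv \sim \statedistribution$ and $\actionrv \sim \actiondistribution$ independent. Since $\nu$ is $\lambda$ times the indicator of the single triple $(x,u,x')$, each of the three expectations equals $\lambda$ times the probability that the relevant triple hits $(x,u,x')$: the first term is $\discount\lambda\,\mathbb{I}[\nextstate=x]\,\actiondistribution(u)\,\statedistribution(x')$, the second is $\lambda\,\mathbb{I}[\state=x]\,\actiondistribution(u)\,\statedistribution(x')$, and the third is $\discount\lambda\,\statedistribution(x)\,\actiondistribution(u)\,\statedistribution(x')$. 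Collecting terms gives $\potential_{\statedistribution,\actiondistribution}(\nu)(\state,\action,\nextstate) = \lambda\,\actiondistribution(u)\,\statedistribution(x')\,\bigl(\discount\,\mathbb{I}[\nextstate=x] - \mathbb{I}[\state=x] - \discount\,\statedistribution(x)\bigr)$, which in particular does not depend on $\action$.

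Finally I would take the supremum over $(\state,\action,\nextstate)$. Because $\action$ drops out, this reduces to maximizing $\bigl|\discount\,\mathbb{I}[\nextstate=x] - \mathbb{I}[\state=x] - \discount\,\statedistribution(x)\bigr|$ over the four cases given by whether $\state=x$ and whether $\nextstate=x$. Using $0 \le \discount\,\statedistribution(x) \le \discount \le 1$, the four candidate values simplify to $\discount(1-\statedistribution(x))$, $1-\discount(1-\statedistribution(x))$, $\discount\,\statedistribution(x)$, and $1 + \discount\,\statedistribution(x)$; the last strictly dominates the other three, and it is attained by taking $\state = x$ together with any $\nextstate \ne x$, which is possible precisely because $|\statespace| \ge 2$. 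Multiplying by the prefactor $\lambda\,\actiondistribution(u)\,\statedistribution(x')$ (read $\lambda$ as $|\lambda|$ when $\lambda<0$) yields the stated identity. The only mildly delicate step is this last case analysis: one must confirm via $\discount\,\statedistribution(x)\le 1$ that $1+\discount\,\statedistribution(x)$ beats $1-\discount(1-\statedistribution(x))$, and observe that the hypothesis $|\statespace|\ge 2$ is exactly what guarantees the maximizing configuration $(\state=x,\ \nextstate\ne x)$ exists; the rest is bookkeeping.
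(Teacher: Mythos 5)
Your proposal is correct and follows essentially the same route as the paper's proof: exploit linearity to reduce the difference to $\potential_{\statedistribution,\actiondistribution}(\nu)$, evaluate it pointwise as $\lambda\,\actiondistribution(u)\statedistribution(x')\bigl(\discount\,\mathbb{I}[\nextstate=x]-\mathbb{I}[\state=x]-\discount\statedistribution(x)\bigr)$, and maximize by taking $\state=x$ and $\nextstate\neq x$ (using $|\statespace|\geq 2$). Your explicit four-case check of the maximum is merely a more detailed version of the paper's final step, so there is nothing to add beyond the observation (which you already make) that the stated equality implicitly treats $\lambda$ as nonnegative.
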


We have canonicalized potential shaping; next, we compare the rewards in a scale-invariant manner.

\begin{defn}
The \emph{Pearson distance} between random variables $\firstrv{}$ and $\secondrv{}$ is defined by the expression
$\pearsondistance{}(\firstrv{},\secondrv{}) = \sqrt{1 - \rho(\firstrv{},\secondrv{})}/\sqrt{2}$,
where $\rho(\firstrv{},\secondrv{})$ is the Pearson correlation between $\firstrv{}$ and $\secondrv{}$.
\end{defn}

\begin{restatable}{lemma}{pearsondistanceproperties}
\label{lemma:pearson-distance-properties}
The Pearson distance $\pearsondistance{}$ is a pseudometric.
Moreover, let $a,b \in (0,\infty)$, $c, d \in \mathbb{R}$ and $\firstrv{}, \secondrv{}$ be random variables.
Then it follows that $0 \leq \pearsondistance{}(a\firstrv{} + c, b\secondrv{} + d) = \pearsondistance{}(\firstrv{}, \secondrv{}) \leq 1$.
\end{restatable}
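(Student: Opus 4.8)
The plan is to reduce the whole statement to the geometry of the Hilbert space $L^2$ of square-integrable random variables. Throughout, I would restrict attention to random variables with finite and strictly positive variance, so that the Pearson correlation $\rho$, and hence $\pearsondistance{}$, is well-defined; this is the case of interest for \canonicalizeddistanceabbrevonly{}, where the arguments are canonicalized rewards evaluated over a \transitiondatasetname{}.

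The bounds and the affine invariance fall out of standard properties of $\rho$ with essentially no work. Cauchy--Schwarz gives $-1 \le \rho(\firstrv{}, \secondrv{}) \le 1$, so $1 - \rho(\firstrv{}, \secondrv{}) \in [0, 2]$ and therefore $0 \le \pearsondistance{}(\firstrv{}, \secondrv{}) = \sqrt{1 - \rho(\firstrv{}, \secondrv{})}/\sqrt{2} \le 1$. For invariance, I would expand the definition of $\rho$ and use $\mathrm{Cov}(a\firstrv{} + c,\, b\secondrv{} + d) = ab\,\mathrm{Cov}(\firstrv{}, \secondrv{})$ together with $\sigma_{a\firstrv{}+c} = a\,\sigma_{\firstrv{}}$ and $\sigma_{b\secondrv{}+d} = b\,\sigma_{\secondrv{}}$, which hold because $a, b > 0$; the factors $ab$ cancel, giving $\rho(a\firstrv{}+c,\, b\secondrv{}+d) = \rho(\firstrv{}, \secondrv{})$ and hence equality of the corresponding Pearson distances.

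For the pseudometric properties, $\pearsondistance{}(\firstrv{}, \firstrv{}) = \sqrt{1 - 1}/\sqrt{2} = 0$ since $\rho(\firstrv{}, \firstrv{}) = 1$, and symmetry is inherited from the symmetry of $\rho$. The only substantive part is the triangle inequality, which I would obtain by an isometric embedding. For a random variable $\firstrv{}$, let $\hat{\firstrv{}} = (\firstrv{} - \expectation[\firstrv{}])/\sqrt{\mathrm{Var}(\firstrv{})}$ be its standardization; then $\hat{\firstrv{}}$ is a unit vector in $L^2$ under the inner product $\langle U, V \rangle = \expectation[UV]$, and $\langle \hat{\firstrv{}}, \hat{\secondrv{}} \rangle = \rho(\firstrv{}, \secondrv{})$. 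Expanding the squared norm of a difference of unit vectors gives
\begin{equation}
\lVert \hat{\firstrv{}} - \hat{\secondrv{}} \rVert_2^2 = \lVert \hat{\firstrv{}} \rVert_2^2 + \lVert \hat{\secondrv{}} \rVert_2^2 - 2\langle \hat{\firstrv{}}, \hat{\secondrv{}} \rangle = 2\bigl(1 - \rho(\firstrv{}, \secondrv{})\bigr),
\end{equation}
so that $\pearsondistance{}(\firstrv{}, \secondrv{}) = \tfrac{1}{2}\lVert \hat{\firstrv{}} - \hat{\secondrv{}} \rVert_2$. The triangle inequality for $\pearsondistance{}$ is then precisely the triangle inequality for the $L^2$ norm applied to the standardizations $\hat{\firstrv{}}$, $\hat{\secondrv{}}$, $\hat{Z}$ of any three random variables $\firstrv{}$, $\secondrv{}$, $Z$.

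The main obstacle is just spotting the identity $\pearsondistance{}(\firstrv{}, \secondrv{}) = \tfrac{1}{2}\lVert \hat{\firstrv{}} - \hat{\secondrv{}} \rVert_2$; once it is available the triangle inequality comes for free, and it also transparently explains why $\pearsondistance{}$ is only a pseudometric rather than a metric, since the standardization map is many-to-one, collapsing each positive-affine equivalence class to a single unit vector. A minor point I would handle explicitly is the degenerate case of zero variance, where $\rho$ is undefined: I would either exclude such variables from the domain or fix a convention for them and check the axioms separately in that case.
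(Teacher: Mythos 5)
Your proposal is correct and follows essentially the same route as the paper's proof: both standardize the random variables and use the identity $4\pearsondistance{}(\firstrv{},\secondrv{})^2 = \expectation\left[\left(Z(\firstrv{}) - Z(\secondrv{})\right)^2\right]$, with the paper expanding the square and applying Cauchy--Schwarz where you invoke the $L^2$ triangle inequality directly (the same argument, packaged as Minkowski's inequality). Your explicit caveat about zero-variance random variables is a reasonable refinement that the paper leaves implicit.
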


We can now define \canonicalizeddistanceabbrevonly{} in terms of the Pearson distance between canonically shaped rewards.

\begin{defn}[\canonicalizeddistancewithabbrev{}]
\label{defn:canonicalized-distance}
Let $\transitiondataset{}$ be some \transitiondatasetname{} over transitions $\state \overset{a}{\to} \nextstate$.
Let $\staterv{},\actionrv{},\nextstaterv{}$ be random variables jointly sampled from $\transitiondataset{}$.
Let $\statedistribution$ and $\actiondistribution$ be some distributions over states $\statespace$ and $\actionspace$ respectively.
The \emph{\canonicalizeddistancelongnameonly{} (\canonicalizeddistanceabbrevonly{})} distance between reward functions $\firstreward{}$ and $\secondreward{}$ is:%
\begin{equation}
\label{eq:canonicalized-distance}
\canonicalizeddistance{}(\firstreward{}, \secondreward{}) = \pearsondistance{}\left(\canonical{\firstreward{}}(\staterv{},\actionrv{},\nextstaterv{}),\canonical{\secondreward{}}(\staterv{},\actionrv{},\nextstaterv{})\right).
\end{equation}
\end{defn}

\begin{restatable}{theorem}{canonicalizeddistancepseudometric}
\label{thm:canonicalized-distance-pseudometric}
The \canonicalizeddistancelongnameonly{} distance is a pseudometric.
\end{restatable}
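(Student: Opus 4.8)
The plan is to recognize $\canonicalizeddistanceabbrevonly{}$ as a \emph{pullback} of the Pearson distance along the canonicalization map, and then invoke the elementary fact that the pullback of a pseudometric along an arbitrary function is again a pseudometric. Concretely, fix the \transitiondatasetname{} $\transitiondataset{}$, the distributions $\statedistribution{}$ and $\actiondistribution{}$, and the discount $\discount$, and define the map $\Phi$ sending a bounded reward function $\reward{}$ to the real-valued random variable $\Phi(\reward{}) = \canonical{\reward{}}(\staterv{}, \actionrv{}, \nextstaterv{})$ on the probability space underlying $\transitiondataset{}$ (the canonicalizing expectation inside $\canonical{\reward{}}$ converges and is itself bounded because $\reward{}$ is bounded, so $\Phi(\reward{})$ is a genuine square-integrable random variable). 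By Definition~\ref{defn:canonicalized-distance}, $\canonicalizeddistance{}(\firstreward{}, \secondreward{}) = \pearsondistance{}(\Phi(\firstreward{}), \Phi(\secondreward{}))$, so $\canonicalizeddistance{}$ is exactly $\pearsondistance{}$ composed with $\Phi \times \Phi$.

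Next I would transfer the three defining properties from $\pearsondistance{}$ to $\canonicalizeddistance{}$ by direct substitution, using Lemma~\ref{lemma:pearson-distance-properties}, which states $\pearsondistance{}$ is a pseudometric. First, $\canonicalizeddistance{}(\reward{}, \reward{}) = \pearsondistance{}(\Phi(\reward{}), \Phi(\reward{})) = 0$, so it is a premetric. Second, symmetry of $\pearsondistance{}$ gives $\canonicalizeddistance{}(\firstreward{}, \secondreward{}) = \pearsondistance{}(\Phi(\firstreward{}), \Phi(\secondreward{})) = \pearsondistance{}(\Phi(\secondreward{}), \Phi(\firstreward{})) = \canonicalizeddistance{}(\secondreward{}, \firstreward{})$. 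Third, applying the triangle inequality for $\pearsondistance{}$ to the three points $\Phi(\firstreward{}), \Phi(\secondreward{}), \Phi(\thirdreward{})$ yields $\canonicalizeddistance{}(\firstreward{}, \thirdreward{}) \leq \canonicalizeddistance{}(\firstreward{}, \secondreward{}) + \canonicalizeddistance{}(\secondreward{}, \thirdreward{})$; non-negativity and the bound $\canonicalizeddistance{} \leq 1$ likewise carry over. This establishes the claim. (As a byproduct, Proposition~\ref{prop:canonically-shaped-reward-invariant-shaping} together with the rescaling invariance in Lemma~\ref{lemma:pearson-distance-properties} shows equivalent-but-unequal rewards have distance $0$, so $\canonicalizeddistance{}$ is a pseudometric but not a metric.)

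The one place needing care — and the main obstacle — is the degenerate case where $\Phi(\reward{})$ has zero variance (a constant reward, or one constant $\transitiondataset{}$-almost surely), so the Pearson correlation $\rho$ is a $0/0$ indeterminate and $\pearsondistance{}$ is not literally defined by the given formula. One must fix a convention for $\rho$ on such inputs that is consistent with Lemma~\ref{lemma:pearson-distance-properties} — e.g.\ treat two almost-surely-constant variables as perfectly correlated and a constant versus a non-constant variable as uncorrelated — and re-check the triangle inequality on mixed triples where one or two of $\Phi(\firstreward{}), \Phi(\secondreward{}), \Phi(\thirdreward{})$ are constant; with this convention those cases reduce to $0 \leq 0 + 0$, $\tfrac{1}{\sqrt{2}} \leq \tfrac{1}{\sqrt{2}} + 0$, and $\tfrac{1}{\sqrt{2}} \leq \tfrac{1}{\sqrt{2}} + \tfrac{1}{\sqrt{2}}$, all of which hold. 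Assuming this bookkeeping is folded into the proof of Lemma~\ref{lemma:pearson-distance-properties}, so that $\pearsondistance{}$ is genuinely a pseudometric on all pairs of bounded random variables, the pullback argument above goes through verbatim and nothing further is required.
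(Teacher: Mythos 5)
Your proposal is correct and is essentially the paper's own argument: the paper likewise treats $\canonicalizeddistance{}$ as the Pearson distance applied to the canonicalized rewards and transfers identity, symmetry, and the triangle inequality directly from Lemma~\ref{lemma:pearson-distance-properties}. Your extra bookkeeping for the zero-variance (constant canonicalized reward) case is a reasonable refinement that the paper glosses over, but it does not change the route.
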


Since \canonicalizeddistanceabbrevonly{} is a pseudometric, it satisfies the triangle inequality.
To see why this is useful, consider an environment with an expensive to evaluate ground-truth reward $\gtreward{}$.
Directly comparing many learned rewards $\learnedreward{}$ to $\gtreward{}$ might be prohibitively expensive.
We can instead pay a one-off cost: query $\gtreward{}$ a finite number of times and infer a proxy reward $\reward_P$ with $\canonicalizeddistance{}(\gtreward{}, \reward_P) \leq \epsilon$.
The triangle inequality allows us to evaluate $\learnedreward{}$ via comparison to $\reward_P$, since $\canonicalizeddistance(\learnedreward{}, \gtreward{}) \leq \canonicalizeddistance(\learnedreward{}, \reward_P) + \epsilon$.
This is particularly useful for benchmarks, which can be expensive to build but should be cheap to use.

\begin{restatable}{theorem}{canonicalizeddistanceproperties}
\label{thm:canonicalized-distance-properties}
Let $\firstreward{}$, $\firstreward'$, $\secondreward{}, \secondreward' : \statespace \times \actionspace \times \statespace \to \mathbb{R}$ be reward functions such that $\firstreward' \equivreward \firstreward{}$ and $\secondreward' \equivreward \secondreward{}$.
Then $0 \leq \canonicalizeddistance{}(\firstreward', \secondreward') = \canonicalizeddistance{}(\firstreward{}, \secondreward{}) \leq 1$.
\end{restatable}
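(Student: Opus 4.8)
The plan is to combine the two facts already in hand: that canonicalization neutralizes potential shaping (Proposition~\ref{prop:canonically-shaped-reward-invariant-shaping}) and that the Pearson distance is invariant to positive rescaling (Lemma~\ref{lemma:pearson-distance-properties}). The only ingredient not stated explicitly is that the map $\reward{} \mapsto \canonical{\reward{}}$ is $\mathbb{R}$-linear, which I would record first. Inspecting Definition~\ref{defn:canonically-shaped-reward}, $\canonical{\reward{}}(\state,\action,\nextstate)$ equals $\reward{}(\state,\action,\nextstate)$ plus the expectation of a fixed linear combination of values of $\reward{}$ at points governed by $\statedistribution$ and $\actiondistribution$; linearity of expectation then gives $\canonical{\lambda \reward{}} = \lambda \canonical{\reward{}}$ for every $\lambda \in \mathbb{R}$ and $\canonical{\reward{}_1 + \reward{}_2} = \canonical{\reward{}_1} + \canonical{\reward{}_2}$. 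The boundedness hypotheses baked into Definition~\ref{defn:equivalence} (bounded rewards, bounded potentials) ensure all the expectations involved converge, so these manipulations are legitimate.

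Next I would canonicalize each of the two equivalent rewards. Since $\firstreward' \equivreward \firstreward{}$, Definition~\ref{defn:equivalence} supplies $\lambda_A > 0$ and a bounded potential $\potential_A : \statespace \to \mathbb{R}$ with $\firstreward'(\state,\action,\nextstate) = \lambda_A \firstreward{}(\state,\action,\nextstate) + \discount \potential_A(\nextstate) - \potential_A(\state)$; that is, $\firstreward'$ is precisely the reward $\lambda_A \firstreward{}$ shaped by the potential $\potential_A$. Applying Proposition~\ref{prop:canonically-shaped-reward-invariant-shaping} to the reward $\lambda_A \firstreward{}$ together with the potential $\potential_A$ yields $\canonical{\firstreward'} = \canonical{\lambda_A \firstreward{}}$, and then homogeneity gives $\canonical{\firstreward'} = \lambda_A \canonical{\firstreward{}}$. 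The symmetric argument gives $\canonical{\secondreward'} = \lambda_B \canonical{\secondreward{}}$ for some $\lambda_B > 0$.

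Finally I would substitute into Definition~\ref{defn:canonicalized-distance}. Writing $\firstrv{} = \canonical{\firstreward{}}(\staterv{},\actionrv{},\nextstaterv{})$ and $\secondrv{} = \canonical{\secondreward{}}(\staterv{},\actionrv{},\nextstaterv{})$ for the random variables induced by sampling $(\staterv{},\actionrv{},\nextstaterv{})$ from $\transitiondataset{}$, the preceding step gives $\canonical{\firstreward'}(\staterv{},\actionrv{},\nextstaterv{}) = \lambda_A \firstrv{}$ and $\canonical{\secondreward'}(\staterv{},\actionrv{},\nextstaterv{}) = \lambda_B \secondrv{}$, so that $\canonicalizeddistance{}(\firstreward', \secondreward') = \pearsondistance{}(\lambda_A \firstrv{}, \lambda_B \secondrv{})$. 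Invoking Lemma~\ref{lemma:pearson-distance-properties} with $a = \lambda_A > 0$, $b = \lambda_B > 0$ and $c = d = 0$ delivers simultaneously the equality $\pearsondistance{}(\lambda_A \firstrv{}, \lambda_B \secondrv{}) = \pearsondistance{}(\firstrv{}, \secondrv{}) = \canonicalizeddistance{}(\firstreward{}, \secondreward{})$ and the bounds $0 \le \pearsondistance{}(\lambda_A \firstrv{}, \lambda_B \secondrv{}) \le 1$, which is exactly the claim.

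There is no deep obstacle here; the statement is essentially a corollary of Proposition~\ref{prop:canonically-shaped-reward-invariant-shaping} and Lemma~\ref{lemma:pearson-distance-properties}. The two points that require care are (i) establishing \emph{homogeneity} of $\canonical{\cdot}$, not merely additivity, so the positive scale factor $\lambda_A$ can be pulled through the canonicalization; and (ii) applying Proposition~\ref{prop:canonically-shaped-reward-invariant-shaping} to the rescaled reward $\lambda_A \firstreward{}$ rather than to $\firstreward{}$ itself, since the shaping term in Definition~\ref{defn:equivalence} is added \emph{after} scaling.
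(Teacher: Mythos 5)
Your proposal is correct and follows essentially the same route as the paper's own proof: decompose each equivalence as scaling followed by shaping, apply Proposition~\ref{prop:canonically-shaped-reward-invariant-shaping} to the rescaled reward, pull the positive scale factor through $\canonical{\cdot}$ by linearity of expectation, and finish with the affine invariance and bounds of Lemma~\ref{lemma:pearson-distance-properties}. The two points of care you flag (homogeneity of canonicalization, and shaping applied after scaling) are exactly the steps the paper makes explicit.
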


The following is our main theoretical result, showing that $\canonicalizeddistance{}(\firstreward{}, \secondreward{})$ distance gives an upper bound on the difference in returns under \emph{either} $\firstreward{}$ or $\secondreward{}$ between optimal policies $\policy^*_{\firstreward}$ and $\policy^*_{\secondreward}$.
In other words, \canonicalizeddistanceabbrevonly{} bounds the regret under $\firstreward$  of using $\policy^*_{\secondreward}$ instead of $\policy^*_{\firstreward}$.
Moreover, by symmetry $\canonicalizeddistance{}(\firstreward{}, \secondreward{})$ also bounds the regret under $\secondreward$ of using $\policy^*_{\firstreward}$ instead of $\policy^*_{\secondreward}$.
\begin{restatable}{theorem}{epicregretbounddiscrete}
\label{thm:epic-regret-bound-discrete}
Let $M$ be a $\discount$-discounted \mdpnorliteral{} with finite state and action spaces $\statespace$ and $\actionspace$.
Let $\firstreward{}, \secondreward{}:\statespace \times \actionspace \times \statespace \to \mathbb{R}$ be rewards, and $\policy_A^*, \policy_B^*$ be respective optimal policies.
Let $\transitiondataset_{\policy}(t, \state[t], \action[t], \state[t+1])$ denote the distribution over transitions $\statespace \times \actionspace \times \statespace$ induced by policy $\policy$ at time $t$, and $\transitiondataset{}(\state,\action,\nextstate)$ be the \transitiondatasetname{} used to compute $\canonicalizeddistance$.
Suppose there exists $K > 0$ such that $K\transitiondataset(\state[t], \action[t], \state[t+1]) \geq \transitiondataset_{\policy}(t, \state[t], \action[t], \state[t+1])$ for all times $t \in \mathbb{N}$, triples $(\state[t],\action[t],\state[t+1]) \in \statespace \times \actionspace \times \statespace$ and policies $\pi \in \{\policy_A^*, \policy_B^*\}$.
Then the regret under $\firstreward$ from executing $\policy_B^*$ instead of $\policy_A^*$ is at most
\begin{equation*}
\policyreturn{}_{\firstreward{}}(\policy_A^*) - \policyreturn{}_{\firstreward{}}(\policy_B^*) \leq 16K\lVert\firstreward{}\rVert_2\left(1 - \discount\right)^{-1} \canonicalizeddistance{}(\firstreward{}, \secondreward{}),
\end{equation*}
where $\policyreturn{}_{\reward}(\policy)$ is the return of policy $\policy$ under reward $\reward$.
\end{restatable}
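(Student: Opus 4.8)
The plan is to reduce the claim to \emph{canonically shaped} rewards, use the scale- and shift-invariance of Pearson correlation to replace $\secondreward{}$ by an affine image that $\policy_B^*$ still optimizes and that is $L^2(\transitiondataset{})$-close to $\canonical{\firstreward{}}$, and finally turn this pointwise closeness of rewards into closeness of returns via the coverage hypothesis. First I would set $\widehat{R}_A = \canonical{\firstreward{}}$ and $\widehat{R}_B = \canonical{\secondreward{}}$. By Definition~\ref{defn:canonically-shaped-reward}, $\canonical{\reward{}}$ equals $\reward{}$ plus potential shaping (with potential $\state \mapsto \expectation[\reward{}(\state,\actionrv{},\nextstaterv{})]$) plus an additive constant, so $\widehat{R}_A \equivreward{} \firstreward{}$ and $\widehat{R}_B \equivreward{} \secondreward{}$. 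Hence $\policy_A^*$ is optimal for $\widehat{R}_A$ and $\policy_B^*$ for $\widehat{R}_B$ (Section~\ref{sec:background}), and since potential shaping and constant shifts change the return of \emph{every} policy by the same policy-independent amount, the regret is unchanged, $\policyreturn{}_{\firstreward{}}(\policy_A^*) - \policyreturn{}_{\firstreward{}}(\policy_B^*) = \policyreturn{}_{\widehat{R}_A}(\policy_A^*) - \policyreturn{}_{\widehat{R}_A}(\policy_B^*)$; moreover $\canonicalizeddistance{}(\firstreward{},\secondreward{})$ is, by Definition~\ref{defn:canonicalized-distance}, exactly the Pearson distance between $\widehat{R}_A$ and $\widehat{R}_B$ evaluated on $\transitiondataset{}$.

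\textbf{Affine approximant and telescoping decomposition.}
Let $X = \widehat{R}_A(\staterv{},\actionrv{},\nextstaterv{})$ and $Y = \widehat{R}_B(\staterv{},\actionrv{},\nextstaterv{})$ for $(\staterv{},\actionrv{},\nextstaterv{}) \sim \transitiondataset{}$, with means $\mu_X,\mu_Y$, standard deviations $\sigma_X,\sigma_Y$ and Pearson correlation $\rho$. I would take $\alpha = \sigma_X/\sigma_Y$, $\beta = \mu_X - \alpha\mu_Y$, and $\widetilde{R}_B = \alpha\widehat{R}_B + \beta$ (in the degenerate case $\sigma_Y = 0$ take $\alpha = 0,\ \beta = \mu_X$, which goes through identically). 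Positive rescaling and constant shift give $\widetilde{R}_B \equivreward{} \secondreward{}$, so $\policy_B^*$ is optimal for $\widetilde{R}_B$ as well, whence $\policyreturn{}_{\widetilde{R}_B}(\policy_B^*) \geq \policyreturn{}_{\widetilde{R}_B}(\policy_A^*)$ and therefore
\[
\policyreturn{}_{\widehat{R}_A}(\policy_A^*) - \policyreturn{}_{\widehat{R}_A}(\policy_B^*) \leq \bigl\lvert\policyreturn{}_{\widehat{R}_A}(\policy_A^*) - \policyreturn{}_{\widetilde{R}_B}(\policy_A^*)\bigr\rvert + \bigl\lvert\policyreturn{}_{\widetilde{R}_B}(\policy_B^*) - \policyreturn{}_{\widehat{R}_A}(\policy_B^*)\bigr\rvert .
\]
Because $\widetilde{R}_B$ was built to have the same mean and variance as $\widehat{R}_A$ under $\transitiondataset{}$, a short computation with the standardized variables gives the key identity $\expectation_{\transitiondataset{}}\bigl[(\widehat{R}_A - \widetilde{R}_B)^2\bigr] = 2\sigma_X^2(1-\rho) = 4\sigma_X^2\,\canonicalizeddistance{}(\firstreward{},\secondreward{})^2$.

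\textbf{Coverage bound and constants.}
To bound each of the two absolute differences, for $\policy \in \{\policy_A^*,\policy_B^*\}$ I would write the return as $\sum_{t\geq 0}\discount^t$ times the expected reward under the time-$t$ transition law $\transitiondataset_{\policy}(t,\cdot)$, bound $\transitiondataset_{\policy}(t,\cdot) \leq K\,\transitiondataset{}(\cdot)$ pointwise by hypothesis, apply Jensen's inequality ($L^1(\transitiondataset{}) \le L^2(\transitiondataset{})$), and sum the geometric series to obtain $\bigl\lvert\policyreturn{}_{\widehat{R}_A}(\policy) - \policyreturn{}_{\widetilde{R}_B}(\policy)\bigr\rvert \leq \tfrac{K}{1-\discount}\sqrt{\expectation_{\transitiondataset{}}[(\widehat{R}_A - \widetilde{R}_B)^2]} = \tfrac{2K\sigma_X}{1-\discount}\,\canonicalizeddistance{}(\firstreward{},\secondreward{})$. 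Summing the two terms bounds the regret by $4K\sigma_X(1-\discount)^{-1}\canonicalizeddistance{}(\firstreward{},\secondreward{})$. Finally $\sigma_X^2 \leq \expectation_{\transitiondataset{}}[\widehat{R}_A^{\,2}] = \lVert\canonical{\firstreward{}}\rVert_2^2$, and $\canonical{\firstreward{}}$ is $\firstreward{}$ plus three correction terms (the two shaping terms and the centering constant), each an average of values of $\firstreward{}$ and hence $\lVert\cdot\rVert_2$-dominated by $\lVert\firstreward{}\rVert_2$, with coefficients $1,\discount,1,\discount$ summing to at most $4$; the triangle inequality thus gives $\sigma_X \leq \lVert\canonical{\firstreward{}}\rVert_2 \leq 4\lVert\firstreward{}\rVert_2$, and substituting produces the stated bound $16 K \lVert\firstreward{}\rVert_2 (1-\discount)^{-1}\canonicalizeddistance{}(\firstreward{},\secondreward{})$.

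\textbf{Main obstacle.}
The hard part is the choice of the affine image $\widetilde{R}_B$ in the second step: it must be simultaneously optimized by $\policy_B^*$ (so the middle term of the telescoping decomposition is nonpositive) and as $L^2(\transitiondataset{})$-close to $\canonical{\firstreward{}}$ as \canonicalizeddistanceabbrevonly{} allows, and matching the mean and variance under $\transitiondataset{}$ is exactly what the scale- and shift-freedom of the Pearson correlation buys. The coverage summation and the constant bookkeeping are then routine, though the precise leading constant depends on the chosen definition of $\lVert\cdot\rVert_2$.
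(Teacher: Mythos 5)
Your proof is correct and reaches the paper's constant $16K\lVert\firstreward{}\rVert_2(1-\discount)^{-1}$, but it is organized differently from the paper's argument. The paper first shows (in its appendix on the full-normalization variant) that $\canonicalizeddistance{}(\firstreward{},\secondreward{}) = \frac{1}{2}\lVert \firstreward{}^S - \secondreward{}^S\rVert_2$ for \emph{standardized} rewards (canonicalized, then normalized to unit $L^2$ norm, via a mean-zero lemma), then proves a symmetric $L^1$ regret lemma under the coverage constant $K$ (regret at most $\frac{2K}{1-\discount}\directdistance[1]{}$), applies it to the standardized rewards using $L^1 \le L^2$, and finally transfers the bound back to $\firstreward{}$ through a separate lemma built on $\lVert\canonical{\firstreward{}}\rVert_2 \leq 4\lVert\firstreward{}\rVert_2$. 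You bypass the standardization detour: instead of normalizing both rewards, you match the mean and variance of $\canonical{\secondreward{}}$ to those of $\canonical{\firstreward{}}$ under $\transitiondataset{}$ by an affine map that $\policy_B^*$ still optimizes, use the identity $\expectation[(X-(\alpha Y+\beta))^2]=2\sigma_X^2(1-\rho)=4\sigma_X^2\,\canonicalizeddistance{}(\firstreward{},\secondreward{})^2$, and then run a single coverage-plus-Jensen computation with a one-sided telescoping (only $\policy_B^*$'s optimality for the affine image is needed, whereas the paper's lemma argues symmetrically with both optimal policies). The essential ingredients --- invariance of regret under canonicalization, the $K$-coverage term-by-term bound with the geometric sum, $L^1\le L^2$, and $\lVert\canonical{\firstreward{}}\rVert_2\le 4\lVert\firstreward{}\rVert_2$ --- coincide, but your packaging avoids the mean-zero lemma and the standardized-reward machinery, a mild simplification that yields the same constant. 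One caveat you share with the paper: the step $\lVert\canonical{\firstreward{}}\rVert_2 \le 4\lVert\firstreward{}\rVert_2$ mixes the norm over $\transitiondataset{}$ with averages over $\statedistribution$ and $\actiondistribution$, so it is only exact when the marginals of $\transitiondataset{}$ agree with $\statedistribution$ and $\actiondistribution$ (as the paper assumes in its experiments); your closing remark that the leading constant depends on the chosen definition of $\lVert\cdot\rVert_2$ correctly flags this, and your handling of the degenerate zero-variance cases is acceptable since there the Pearson correlation, and hence $\canonicalizeddistance{}$ itself, is undefined anyway.
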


We generalize the regret bound to continuous spaces in theorem~\ref{thm:epic-regret-bound-lipschitz} via a Lipschitz assumption, with Wasserstein distance replacing $K$.
Importantly, the returns of $\policy_A^*$ and $\policy_B^*$ \textbf{converge} as $\canonicalizeddistance{}(\firstreward{}, \secondreward{}) \to 0$ in both cases, no matter which reward function you evaluate on.%

The key assumption is that the \transitiondatasetname{} $\transitiondataset$ has adequate support for transitions occurring in rollouts of $\policy_A^*$ and $\policy_B^*$.
The bound is tightest when $\transitiondataset{}$ is similar to $\transitiondataset_{\policy_A^*}$ and $\transitiondataset_{\policy_B^*}$.
However, computing $\policy_A^*$ and $\policy_B^*$ is often intractable.
The MDP $M$ may be unknown, such as when making predictions about an unseen deployment environment.
Even when $M$ is known, RL is computationally expensive and may fail to converge in non-trivial environments.

In finite cases, a uniform $\transitiondataset$ satisfies the requirements with $K \leq |\statespace|^2 |\actionspace|$.
In general, it is best to choose $\transitiondataset$ to have broad coverage over plausible transitions.
Broad coverage ensures adequate support for $\transitiondataset_{\policy_A^*}$ and $\transitiondataset_{\policy_B^*}$.
But excluding transitions that are unlikely or impossible to occur leads to tighter regret bounds due to a smaller $K$ (finite case) or Wasserstein distance (continuous case).

While \canonicalizeddistanceabbrevonly{} upper bounds policy regret, it does not lower bound it.
In fact, no reward distance can lower bound regret in arbitrary environments.
For example, suppose the deployment environment transitions to a randomly chosen state independent of the action taken.
In this case, all policies obtain the same expected return, so the policy regret is always zero, regardless of the reward functions.

\begin{figure*}
	\centering
	\includegraphics{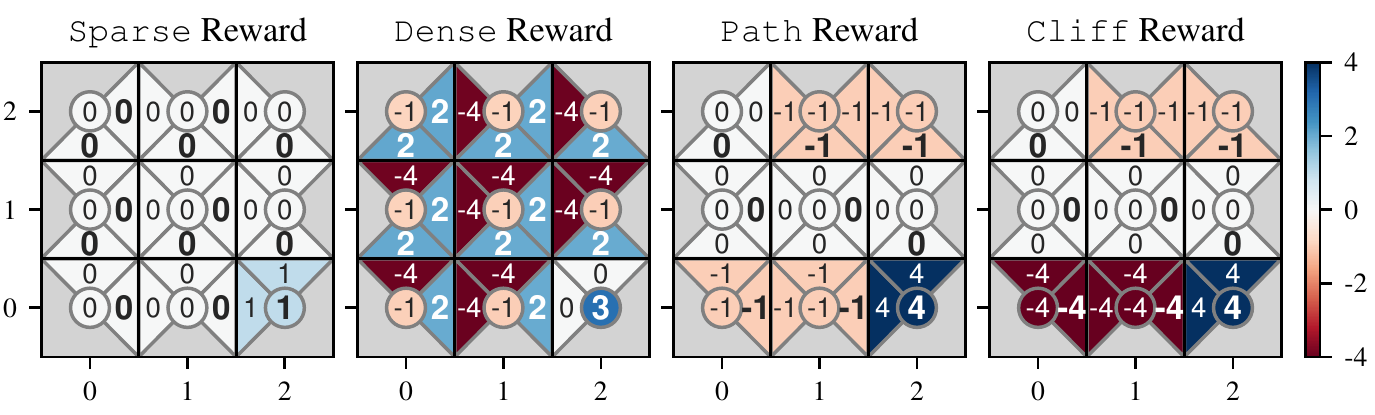}
	\caption{Heatmaps of four reward functions for a $3\times 3$ gridworld. \sparsegoal{} and \densegoal{} look different but are actually equivalent with $\canonicalizeddistance{}\left(\sparsegoal{}, \densegoal{}\right)=0$.
	By contrast, the optimal policies for \dirtpath{} and \cliffwalk{} are the same if the gridworld is deterministic but different if it is ``slippery''.
	\canonicalizeddistanceabbrevonly{} recognizes this difference with $\canonicalizeddistance{}\left(\dirtpath, \cliffwalk{}\right)=0.27$.
	\textbf{Key}:
		Reward $R(s, s')$ for moving from $s$ to $s'$ is given by the \textbf{triangular wedge} in cell $s$ that is adjacent to cell $s'$.
		$R(s, s)$ is given by the \textbf{central circle} in cell $s$.
		Optimal action(s) (deterministic, infinite horizon, discount $\gamma = 0.99$) have \textbf{bold} labels.
		See Figure~\ref{fig:supp:hand-designed-gridworld} for the distances between all reward pairs.}
	\label{fig:comparing:illustrative}
\end{figure*}

To demonstrate \canonicalizeddistanceabbrevonly{}'s properties, we compare the gridworld reward functions from Figure~\ref{fig:comparing:illustrative}, reporting the distances between all reward pairs in Figure~\ref{fig:supp:hand-designed-gridworld}.
\densegoal{} is a rescaled and shaped version of \sparsegoal{}, despite looking dissimilar at first glance, so $\canonicalizeddistance{}\left(\sparsegoal{}, \densegoal{}\right)=0$.
By contrast, $\canonicalizeddistance{}\left(\dirtpath, \cliffwalk{}\right)=0.27$.
In \emph{deterministic} gridworlds, \dirtpath{} and \cliffwalk{} have the same optimal policy, so the rollout method could wrongly conclude they are equivalent.
But in fact the rewards are fundamentally different: when there is a significant risk of ``slipping'' in the wrong direction the optimal policy for \cliffwalk{} walks along the top instead of the middle row, incurring a $-1$ penalty to avoid the risk of falling into the $-4$ ``cliff''.

For this example, we used state and action distributions $\statedistribution$ and $\actiondistribution$ uniform over $\statespace$ and $\actionspace$, and \transitiondatasetname{} $\transitiondataset{}$ uniform over state-action pairs $(\state,\action)$, with $\nextstate$ deterministically computed.
It is important these distributions have adequate support.
As an extreme example, if $\statedistribution$ and $\transitiondataset$ have no support for a particular state then the reward of that state has no effect on the distance.
We can compute \canonicalizeddistanceabbrevonly{} exactly in a tabular setting, but in general use a sample-based approximation~(section~\ref{sec:supp:approx:canonicalized-distance}).

\ag{TODO: mention as aside can derive this from a variant of advantage}
\ag{Cite appendix: comparison of different methods in the gridworlds.}

\section{Baseline approaches for comparing reward functions}
\label{sec:baselines}

Given the lack of established methods, we develop two alternatives as baselines: \returncorrelationlongnameonly{} (\returncorrelationabbrevonly) and  \nearestpointlongnameonly{} (\nearestpointabbrevonly).

\subsection{\returncorrelationlongnameonly{} (\returncorrelationabbrevonly{})}
\label{sec:baselines:return-correlation}

The goal of an MDP is to maximize expected episode return, so it is natural to compare reward functions by the returns they induce.
If the return of a reward function $\firstreward{}$ is a positive affine transformation of another reward $\secondreward{}$, then $\firstreward{}$ and $\secondreward{}$ have the same set of optimal policies.
This suggests using Pearson distance, which is invariant to positive affine transformations.

\begin{defn}[\returncorrelationlongnamewithabbrev]
\ag{Show this is actually a pseudometric.}
Let $\transitiondataset$ be some distribution over trajectories.
Let $E$ be a random variable sampled from $\transitiondataset$.
The \emph{\returncorrelationlongnameonly{}} distance between reward functions $\firstreward{}$ and $\secondreward{}$ is the Pearson distance between their episode returns on $\transitiondataset{}$,
$\returncorrelation{}(\firstreward{}, \secondreward{}) = \pearsondistance{}(g(E; \firstreward{}), g(E; \secondreward{}))$.
\end{defn}

Prior work has produced scatter plots of the return of $\firstreward{}$ against $\secondreward{}$ over episodes~\citep[Figure~3]{brown:2019} and fixed-length segments~\citep[section~D]{ibarz:2018}.
\returncorrelationabbrevonly{} is the Pearson distance of such plots, so is a natural baseline.
We approximate \returncorrelationabbrevonly{} by the correlation of episode returns on a finite collection of rollouts.

\returncorrelationabbrevonly{} is invariant to shaping when the initial state $\state_0$ and terminal state $\state_T$ are fixed.
Let $\reward{}$ be a reward function and $\potential{}$ a potential function, and define the shaped reward $\reward{}'(\state,\action,\nextstate) = \reward{}(\state,\action,\nextstate) + \discount \potential(\nextstate) - \potential(\state)$.
The return under the shaped reward on a trajectory $\tau = (\state_0, \action_0, \cdots, \state_T)$ is $g(\tau; \reward{}') = g(\tau; \reward{}) + \gamma^{T} \potential(\state_T) - \potential(\state_0)$.
Since $\state_0$ and $\state_T$ are fixed, $\gamma^{T} \potential(\state_T) - \potential(\state_0)$ is constant.
It follows that \returncorrelationabbrevonly{} is invariant to shaping, as Pearson distance is invariant to constant shifts.
In fact, for infinite-horizon discounted MDPs only $\state_0$ needs to be fixed, since $\gamma^{T} \potential(\state_T) \to 0$ as $T \to \infty$.

However, if the initial state $\state_0$ is stochastic, then the \returncorrelationabbrevonly{} distance can take on arbitrary values under shaping.
Let $\firstreward{}$ and $\secondreward{}$ be two arbitrary reward functions.
Suppose that there are at least two distinct initial states, $s_X$ and $s_Y$, with non-zero measure in $\transitiondataset{}$.
Choose potential $\potential(\state) = 0$ everywhere except $\potential(s_X) = \potential(s_Y) = c$, and let $\firstreward'$ and $\secondreward'$ denote $\firstreward{}$ and $\secondreward{}$ shaped by $\potential$.
As $c \to \infty$, the correlation $\rho\left(g(E; \firstreward'), g(E; \secondreward')\right) \to 1$.
This is since the relative difference tends to zero, even though $g(E; \firstreward')$ and $g(E; \secondreward')$ continue to have the same absolute difference as $c$ varies.
Consequently, the \returncorrelationabbrevonly{} \returncorrelationkind{} $\returncorrelation{}(\firstreward', \secondreward') \to 0$ as $c \to \infty$.
By an analogous argument, setting $\potential(s_X) = c$ and $\potential(s_Y) = -c$ gives $\returncorrelation{}(\firstreward', \secondreward') \to 1$ as $c \to \infty$.

\subsection{\nearestpointlongnameonly{} (\nearestpointabbrevonly{})}
\label{sec:baselines:nearest-point}

\nearestpointabbrevonly{} takes the minimum \directdistancename{} distance between equivalence classes. See section~\ref{sec:supp:proofs:nearestpoint} for proofs.

\begin{defn}[\directdistancename{} distance]
Let $\transitiondataset{}$ be a \transitiondatasetname{} over transitions $\state \overset{a}{\to} \nextstate$ and let $p \geq 1$ be a power.
The \emph{\directdistancename{} distance} between reward functions $\firstreward{}$ and $\secondreward{}$ is the $L^p$ norm of their difference:
$\directdistance(\firstreward{}, \secondreward{}) = \left(\expectation{}_{s,a,s' \sim \transitiondataset{}} \left[\left\lvert\firstreward{}(s,a,s') -  \secondreward{}(s,a,s')\right\rvert^p\right]\right)^{1/p}.$
\end{defn}

The \directdistancename{} distance is affected by potential shaping and positive rescaling that do not change the optimal policy.
A natural solution is to take the distance from the \emph{nearest point} in the equivalence class:
$\nearestpoint^U(\firstreward{}, \secondreward{}) = \inf_{\firstreward' \equivreward{} \firstreward{}} \directdistance(\firstreward', \secondreward{})$.
Unfortunately, $\nearestpoint^U$ is sensitive to $\secondreward{}$'s scale.

It is tempting to instead take the infimum over both arguments of $\directdistance$.
However, $\inf_{\firstreward' \equivreward{} \firstreward, \secondreward' \equivreward{} \secondreward} \directdistance(\firstreward', \secondreward{}) = 0$ since all equivalence classes come arbitrarily close to the origin in $L^p$ space.
Instead, we fix this by normalizing $\nearestpoint^U$.

\begin{defn}
\emph{\nearestpointabbrevonly{}} is defined by $\nearestpoint(\firstreward{}, \secondreward{}) =
\nearestpoint^U(\firstreward{}, \secondreward{})/\nearestpoint^U(\zeroreward{}, \secondreward{})$ when $\nearestpoint^U(\zeroreward{}, \secondreward{}) \neq 0$,
and is otherwise given by $\nearestpoint(\firstreward{}, \secondreward{}) = 0$.
\end{defn}

If $\nearestpoint^U(\zeroreward{}, \secondreward{}) = 0$ then $\nearestpoint^U(\firstreward{}, \secondreward{}) = 0$ since $\firstreward{}$ can be scaled arbitrarily close to $\zeroreward{}$.
Since all policies are optimal for $\reward{} \equivreward \zeroreward{}$, we choose $\nearestpoint(\firstreward{}, \secondreward{}) = 0$ in this case.

\begin{restatable}{theorem}{nearestpointproperties}
$\nearestpoint{}$ is a premetric on the space of bounded reward functions. Moreover, let $\firstreward{}, \firstreward{}', \secondreward{}, \secondreward{}':\statespace \times \actionspace \times \statespace \to \mathbb{R}$ be bounded reward functions such that $\firstreward{} \equivreward \firstreward{}'$ and $\secondreward{} \equivreward \secondreward{}'$.
Then $0 \leq \nearestpoint{}(\firstreward{}', \secondreward{}') = \nearestpoint{}(\firstreward{}, \secondreward{}) \leq 1$.
\end{restatable}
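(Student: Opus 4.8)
The plan is to reduce the theorem to two structural facts about the unnormalized quantity $\nearestpoint^U$, and then run a short case analysis for the normalization. \emph{Fact (i):} for a fixed second argument, $\nearestpoint^U(\firstreward, \secondreward)$ depends on $\firstreward$ only through its equivalence class. This is immediate: since $\equiv$ is an equivalence relation, $\firstreward \equivreward \firstreward'$ gives $\{\tilde R : \tilde R \equivreward \firstreward\} = \{\tilde R : \tilde R \equivreward \firstreward'\}$, so the two infima defining $\nearestpoint^U(\firstreward, \secondreward)$ and $\nearestpoint^U(\firstreward', \secondreward)$ are taken over the same set. \emph{Fact (ii):} if $\secondreward'(\state, \action, \nextstate) = \mu\secondreward(\state, \action, \nextstate) + \discount\potential'(\nextstate) - \potential'(\state)$ with $\mu > 0$ and $\potential'$ bounded, then $\nearestpoint^U(\firstreward, \secondreward') = \mu\,\nearestpoint^U(\firstreward, \secondreward)$ for every bounded $\firstreward$. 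To prove this, observe that $\tilde R \mapsto \mu^{-1}\bigl(\tilde R - \discount\potential'(\nextstate) + \potential'(\state)\bigr)$ composes a potential shaping with a positive rescaling, and each of these maps the equivalence class of $\firstreward$ bijectively onto itself (Definition~\ref{defn:equivalence}), hence so does the composition. Writing $\hat R$ for the image of $\tilde R$, we have $\tilde R - \secondreward' = \mu(\hat R - \secondreward)$, so $\directdistance(\tilde R, \secondreward') = \mu\,\directdistance(\hat R, \secondreward)$ by homogeneity of the $L^p(\transitiondataset)$ norm; taking the infimum over the common equivalence class gives Fact (ii). The same computation with $\firstreward$ replaced by $\zeroreward$ yields $\nearestpoint^U(\zeroreward, \secondreward') = \mu\,\nearestpoint^U(\zeroreward, \secondreward)$.

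Next I would establish the bound $\nearestpoint^U(\firstreward, \secondreward) \le \nearestpoint^U(\zeroreward, \secondreward)$, which forces $\nearestpoint(\firstreward, \secondreward) \le 1$ (when the denominator $\nearestpoint^U(\zeroreward, \secondreward)$ is nonzero the quotient is at most $1$; otherwise $\nearestpoint(\firstreward,\secondreward) = 0 \le 1$ by definition). The key point is that every member of the equivalence class of $\zeroreward$, i.e.\ every potential shaping $\discount\potential(\nextstate) - \potential(\state)$, is an $L^p$-limit of members of the class of $\firstreward$: for $\lambda > 0$, the reward $R_\lambda := \lambda\firstreward + \discount\potential(\nextstate) - \potential(\state)$ satisfies $R_\lambda \equivreward \firstreward$, and by the triangle inequality for $\directdistance$, $\bigl|\directdistance(R_\lambda, \secondreward) - \directdistance(\discount\potential(\nextstate) - \potential(\state), \secondreward)\bigr| \le \directdistance\bigl(R_\lambda,\, \discount\potential(\nextstate) - \potential(\state)\bigr) = \lambda\,\directdistance(\firstreward, \zeroreward)$, which is finite (both rewards bounded, $\transitiondataset$ a probability measure) and tends to $0$ as $\lambda \to 0^+$. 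Hence $\nearestpoint^U(\firstreward, \secondreward) \le \directdistance(\discount\potential(\nextstate) - \potential(\state), \secondreward)$ for every bounded $\potential$; taking the infimum over $\potential$ gives $\nearestpoint^U(\firstreward, \secondreward) \le \nearestpoint^U(\zeroreward, \secondreward)$.

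The remaining claims are then bookkeeping. Non-negativity is clear since $\nearestpoint$ is either $0$ or a quotient of non-negative reals. For the premetric property, reflexivity of $\equiv$ gives $\nearestpoint^U(\firstreward, \firstreward) \le \directdistance(\firstreward, \firstreward) = 0$, so the numerator vanishes and $\nearestpoint(\firstreward, \firstreward) = 0$ regardless of whether $\nearestpoint^U(\zeroreward, \firstreward) = 0$. For invariance, let $\firstreward \equivreward \firstreward'$ and $\secondreward \equivreward \secondreward'$, and write $\secondreward' = \mu\secondreward + \discount\potential'(\nextstate) - \potential'(\state)$ with $\mu > 0$. Fact (i) gives $\nearestpoint(\firstreward', \secondreward') = \nearestpoint(\firstreward, \secondreward')$; Fact (ii) gives $\nearestpoint^U(\firstreward, \secondreward') = \mu\,\nearestpoint^U(\firstreward, \secondreward)$ and $\nearestpoint^U(\zeroreward, \secondreward') = \mu\,\nearestpoint^U(\zeroreward, \secondreward)$. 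If $\nearestpoint^U(\zeroreward, \secondreward) \ne 0$ the factors of $\mu$ cancel in the quotient and $\nearestpoint(\firstreward, \secondreward') = \nearestpoint(\firstreward, \secondreward)$; if $\nearestpoint^U(\zeroreward, \secondreward) = 0$ then also $\nearestpoint^U(\zeroreward, \secondreward') = 0$, so both sides equal $0$. Chaining the two equalities and combining with the bound above gives $0 \le \nearestpoint(\firstreward', \secondreward') = \nearestpoint(\firstreward, \secondreward) \le 1$.

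I expect the main obstacle to be Fact (ii): one must check that the reparametrization maps the equivalence class of $\firstreward$ \emph{onto} itself (not merely into it), using that $\mu^{-1}(\potential - \potential')$ is again a bounded potential and $\lambda/\mu > 0$ for representatives written as $\lambda\firstreward + \discount\potential(\nextstate) - \potential(\state)$, so that the infimum is reproduced exactly rather than only bounded on one side. The limiting argument behind $\nearestpoint \le 1$ is the other delicate point, and it is the place where boundedness of the rewards — which keeps $\directdistance$ finite — is genuinely used.
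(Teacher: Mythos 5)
Your proposal is correct and follows essentially the same route as the paper: invariance in the first argument from the infimum ranging over a common equivalence class, a bijective reparametrization of the class plus homogeneity of $\directdistance$ to show $\nearestpoint^U$ scales by the positive factor in the second argument, the $\lambda \to 0^+$ limiting argument (using boundedness) for $\nearestpoint^U(\firstreward,\secondreward) \leq \nearestpoint^U(\zeroreward,\secondreward)$, and the case split on a vanishing denominator. The only cosmetic difference is that your Fact (ii) merges into one change of variables what the paper states as two separate properties of $\nearestpoint^U$ (invariance under scale-preserving shaping in the target and positive scalability in the target).
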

\begin{optional-prf}
Pseudometric follows from $\directdistance$ a pseudometric; see section~\ref{sec:supp:proofs:nearestpoint} for details.
Invariance to $\firstreward' \equivreward \firstreward$ immediate from the infimum being over $R \equivreward \firstreward{}$.
Invariance to $\secondreward' \equivreward \secondreward$ is due to translational invariance of $\directdistance$, and $\nearestpoint^U(\firstreward{}, \lambda \secondreward{}) = \lambda \nearestpoint^U(\firstreward{}, \secondreward{})$ for $\lambda > 0$.
Upper bound of $1$ is due to $\nearestpoint^U(\firstreward{}, \secondreward{}) \leq \nearestpoint^U(\zeroreward{}, \secondreward{})$, while lower bound immediate from $\directdistance$ non-negative.
See section~\ref{sec:supp:proofs:nearestpoint} for details.
\end{optional-prf}

Note that $\nearestpoint{}$ may not be symmetric and so is not, in general, a pseudometric: see proposition~\ref{prop:nearest-point-not-pseudometric}.
The infimum in $\nearestpoint^U$ can be computed exactly in a tabular setting, but in general we must approximate it using gradient descent.
This gives an upper bound for $\nearestpoint^U$, but the quotient of upper bounds $\nearestpoint$ may be too low or too high.
See section~\ref{sec:supp:approx:nearest-point} for details of the approximation.

\section{Experiments}

We evaluate \canonicalizeddistanceabbrevonly{} and the baselines \returncorrelationabbrevonly{} and \nearestpointabbrevonly{} in a variety of continuous control tasks.
In section~\ref{sec:experiments:hand-designed}, we compute the distance between hand-designed reward functions, finding \canonicalizeddistanceabbrevonly{} to be the most reliable.
\nearestpointabbrevonly{} has substantial approximation error, and \returncorrelationabbrevonly{} sometimes erroneously assigns high distance to equivalent rewards.
Next, in section~\ref{sec:experiments:visitation-sensitivity} we show \canonicalizeddistanceabbrevonly{} is robust to the exact choice of \transitiondatasetname{} $\transitiondataset{}$, whereas \returncorrelationabbrevonly{} and especially \nearestpointabbrevonly{} are highly sensitive to the choice of $\transitiondataset{}$.
Finally, in section~\ref{sec:experiments:ground-truth} we find that the distance of learned reward functions to a ground-truth reward predicts the return obtained by policy training, even in an unseen test environment.

\ag{Mention setup of different algorithms?}
\subsection{Comparing hand-designed reward functions}
\label{sec:experiments:hand-designed}

\begin{figure}
	\centering
	\compactheatmap{point_mass}{middle}{}
	\caption[]{Approximate distances between hand-designed reward functions in \env{PointMass}, where the agent moves on a line trying to reach the origin. \canonicalizeddistanceabbrevonly{} correctly assigns $0$ distance between equivalent rewards such as $(\dense{}\nocontrolpenalty{},\sparse{}\nocontrolpenalty{})$ while $\nearestpoint{}(\dense{}\nocontrolpenalty{},\sparse{}\nocontrolpenalty{}) = 0.58$ and $\returncorrelation{}(\dense{}\nocontrolpenalty{},\sparse{}\nocontrolpenalty{}) = 0.56$. The \transitiondatasetname{} $\transitiondataset{}$ is sampled from rollouts of a policy $\randompolicy{}$ taking actions uniformly at random. \pointmasscaption{}}
	\label{fig:experiments:hand-designed}
\end{figure}

We compare procedurally specified reward functions in four tasks, finding that \canonicalizeddistanceabbrevonly{} is more reliable than the baselines \nearestpointabbrevonly{} and \returncorrelationabbrevonly{}, and more computationally efficient than \nearestpointabbrevonly{}.
Figure~\ref{fig:experiments:hand-designed} presents results in the proof-of-concept \env{PointMass} task.
The results for \env{Gridworld}, \env{HalfCheetah} and \env{Hopper}, in section~\ref{sec:supp:hand-designed}, are qualitatively similar.

In \env{PointMass} the agent can accelerate $\ddot{x}$ left or right on a line.
The reward functions include (\controlpenalty{}) or exclude (\nocontrolpenalty{}) a quadratic penalty $\ddot{x}^2$.
The sparse reward (\sparse{}) gives a reward of $1$ in the region $\pm 0.05$ from the origin.
The dense reward (\dense{}) is a shaped version of the sparse reward.
The magnitude reward (\magnitude{}) is the negative distance of the agent from the origin.

We find that \canonicalizeddistanceabbrevonly{} correctly identifies the equivalent reward pairs (\sparse{}\controlpenalty{}-\dense{}\controlpenalty{} and \sparse{}\nocontrolpenalty{}-\dense{}\nocontrolpenalty{}) with estimated distance $<\num{1e-3}$.
By contrast, \nearestpointabbrevonly{} has substantial approximation error: $\nearestpoint{}(\dense{}\nocontrolpenalty{},\sparse{}\nocontrolpenalty{}) = 0.58$.
Similarly, $\returncorrelation{}(\dense{}\nocontrolpenalty{},\sparse{}\nocontrolpenalty{}) = 0.56$ due to \returncorrelationabbrevonly{}'s erroneous handling of stochastic initial states.
Moreover, \nearestpointabbrevonly{} is computationally inefficient: Figure~\ref{fig:experiments:hand-designed}(b) took 31 hours to compute.
By contrast, the figures for \canonicalizeddistanceabbrevonly{} and \returncorrelationabbrevonly{} were generated in less than two hours, and a lower precision approximation of \canonicalizeddistanceabbrevonly{} finishes in just $17$ seconds (see section~\ref{sec:supp:experiments:runtime}).

\subsection{Sensitivity of reward distance to \transitiondatasetname{}}
\label{sec:experiments:visitation-sensitivity}

Reward distances should be robust to the choice of \transitiondatasetname{} $\transitiondataset$.
In Table~\ref{tab:learned-reward} (center), we report distances from the ground-truth reward (\groundtruthmethod{}) to reward functions (rows) across \transitiondatasetname{}s $\transitiondataset \in \{\randompolicy{}, \expertpolicy{}, \mixturepolicy{}\}$ (columns).
We find \canonicalizeddistanceabbrevonly{} is fairly robust to the choice of $\transitiondataset{}$ with a similar ratio between rows in each column $\transitiondataset$.
By contrast, \returncorrelationabbrevonly{} and especially \nearestpointabbrevonly{} are substantially more sensitive to the choice of $\transitiondataset$.

We evaluate in the \pointmaze{} MuJoCo task from \citet{fu:2018}, where a point mass agent must navigate around a wall to reach a goal.
The \transitiondatasetname{}s $\transitiondataset{}$ are induced by rollouts from three different policies: $\randompolicy{}$ takes actions uniformly at random, producing broad support over transitions; $\expertpolicy{}$ is an expert policy, yielding a distribution concentrated around the goal; and \mixturepolicy{} is a mixture of the two.
In \canonicalizeddistanceabbrevonly{}, $\statedistribution$ and $\actiondistribution$ are marginalized from $\transitiondataset$ and so also vary with $\transitiondataset$.

\newcommand{\numformat}[1]{\num[round-mode=figures,round-precision=3,scientific-notation=false]{#1}}
\begin{table}
	\centering
	\caption{Low reward distance from the ground-truth (\groundtruthmethod{}) in \pointmazetrain{} predicts high policy return even in unseen task \pointmazetest{}. \canonicalizeddistanceabbrevonly{} distance is robust to the choice of \transitiondatasetname{} $\transitiondataset{}$, with similar values across columns, while \returncorrelationabbrevonly{} and especially \nearestpointabbrevonly{} are sensitive to $\transitiondataset{}$. \textbf{Center}: approximate distances ($1000\times$ scale) of reward functions from \groundtruthmethod{}. The \transitiondatasetname{} $\transitiondataset{}$ is computed from rollouts in \pointmazetrain{} of: a uniform random policy $\boldsymbol{\randompolicy{}}$, an expert $\boldsymbol{\expertpolicy{}}$ and a \textbf{\mixturepolicy{}}ture of these policies. $\statedistribution{}$ and $\actiondistribution{}$ are computed by marginalizing $\transitiondataset{}$.
	\textbf{Right}: mean \groundtruthmethod{} return over $9$ seeds of RL training on the reward in \env{PointMaze-\{Train,Test\}}, and returns for AIRL's \emph{generator} policy. %
	\textbf{Confidence Intervals}: see Table~\ref{tab:supp:learned-reward}.}
	\setlength{\tabcolsep}{3pt}
	\begin{tabular}{@{}ll|@{\hspace{0.75em}}llll@{\hspace{0.4em}}llll@{\hspace{0.4em}}llll|@{\hspace{0.75em}}ccc@{}}
		\toprule
		\makeatletter
		\textbf{Reward} & & \multicolumn{3}{c}{$\boldsymbol{1000 \times \canonicalizeddistance{}}$} & & \multicolumn{3}{c}{$\boldsymbol{1000 \times \nearestpoint{}}$} & & \multicolumn{3}{c}{$\boldsymbol{1000 \times \returncorrelation{}}$} & & \multicolumn{3}{c}{\textbf{Episode Return}} \\
		\textbf{Function} & & $\boldsymbol{\randompolicy{}}$ & $\boldsymbol{\expertpolicy{}}$ & \textbf{\mixturepolicy{}} & & $\boldsymbol{\randompolicy{}}$ & $\boldsymbol{\expertpolicy{}}$ & \textbf{\mixturepolicy{}} & & $\boldsymbol{\randompolicy{}}$ & $\boldsymbol{\expertpolicy{}}$ & \textbf{\mixturepolicy{}} & & \textbf{Gen.} & \textbf{Train} & \textbf{Test}\Bstrut \\
		\hline
		\groundtruthmethod{} & & \num{0.06} & \num{0.05} & \num{0.04} &  & \num{0.04} & \num{3.17} & \num{0.01} &  & \num{0.00} & \num{0.00} & \num{0.00} &  & --- & \num{-5.19} & \num{-6.59}\Tstrut \\
		\regressionmethod{} & & \num{35.8} & \num{33.7} & \num{26.1} &  & \num{1.42} & \num{38.9} & \num{0.35} &  & \num{9.99} & \num{90.7} & \num{2.43} &  & --- & \num{-5.47} & \num{-6.30} \\
		\preferencesmethod{} & & \num{68.7} & \num{100} & \num{56.8} &  & \num{8.51} & \num{1333} & \num{9.74} &  & \num{24.9} & \num{360} & \num{19.6} &  & --- & \num{-5.57} & \num{-5.04} \\
		\airlstateonlymethod{} & & \num{572} & \num{520} & \num{404} &  & \num{817} & \num{2706} & \num{488} &  & \num{549} & \num{523} & \num{240} &  & \num{-5.43} & \num{-27.3} & \num{-22.7} \\
		\airlstateactionmethod{} & & \num{776} & \num{930} & \num{894} &  & \num{1067} & \num{2040} & \num{1039} &  & \num{803} & \num{722} & \num{964} &  & \num{-5.05} & \num{-30.7} & \num{-29.0}\Bstrut \\
		\hline
		\bettergoalmethod{} & & \num{17.0} & \num{0.05} & \num{397} &  & \num{0.68} & \num{6.30} & \num{597} &  & \num{35.3} & <0.01 & \num{166} &  & --- & \num{-30.4} & \num{-29.1}\Tstrut \\
		\bottomrule
	\end{tabular}
	\label{tab:learned-reward}
\end{table}

We evaluate four reward learning algorithms:
\regressionmethod{}ion onto reward labels~\citep[\emph{target} method from][section~3.3]{christiano:2017},
\preferencesmethod{}erence comparisons on trajectories~\cite{christiano:2017},
and adversarial IRL with a state-only~(\airlstateonlymethod{}) and state-action~(\airlstateactionmethod{}) reward model~\citep{fu:2018}.
All models are trained using synthetic data from an oracle with access to the ground-truth; see section~\ref{sec:supp:learned-reward-models} for details.

We find \canonicalizeddistanceabbrevonly{} is robust to varying $\transitiondataset{}$ when comparing the learned reward models: the distance varies by less than $2\times$, and the ranking between the reward models is the same across \transitiondatasetname{}s.
By contrast, \nearestpointabbrevonly{} is highly sensitive to $\transitiondataset{}$: the ratio of \airlstateonlymethod{} ($817$) to \preferencesmethod{} ($8.51$) is $96:1$ under $\randompolicy{}$ but only $2:1$ ($2706:1333$) under $\expertpolicy{}$.
\returncorrelationabbrevonly{} lies somewhere in the middle: the ratio is $22:1$ ($549:24.9$) under $\randompolicy{}$ and $3:2$ ($523:360$) under $\expertpolicy{}$.

We evaluate the effect of pathological choices of \transitiondatasetname{} $\transitiondataset{}$ in Table~\ref{tab:supp:learned-reward-pathological}.
For example, \textbf{Ind} independently samples states and next states, giving physically impossible transitions, while \textbf{Jail} constrains rollouts to a tiny region excluding the goal.
We find that the ranking of \canonicalizeddistanceabbrevonly{} changes in only one distribution, whilst the ranking of \nearestpointabbrevonly{} changes in two cases and \returncorrelationabbrevonly{} changes in all cases.

However, we do find that \canonicalizeddistanceabbrevonly{} is sensitive to $\transitiondataset{}$ on $\bettergoalmethod{}$, a reward function we explicitly designed to break these methods.
\bettergoalmethod{} assigns a larger reward when close to a ``mirage'' state than when at the true goal, but is identical to \groundtruthmethod{} at all other points.
The ``mirage'' state is rarely visited by random exploration $\randompolicy{}$ as it is far away and on the opposite side of the wall from the agent.
The expert policy $\expertpolicy{}$ is even less likely to visit it, as it is not on or close to the optimal path to the goal.
As a result, the \canonicalizeddistanceabbrevonly{} distance from \bettergoalmethod{} to \groundtruthmethod{} (Table~\ref{tab:learned-reward}, bottom row) is small under $\randompolicy{}$ and $\expertpolicy{}$.

In general, any black-box method for assessing reward models -- including the rollout method -- only has predictive power on transitions visited during testing.
Fortunately, we can achieve a broad support over states with \mixturepolicy{}: it often navigates around the wall due to $\expertpolicy{}$, but strays from the goal thanks to $\randompolicy{}$.
As a result, \canonicalizeddistanceabbrevonly{} under \mixturepolicy{} correctly infers that \bettergoalmethod{} is far from the ground-truth \groundtruthmethod{}.

These empirical results support our theoretically inspired recommendation from section~\ref{sec:comparing}: ``in general, it is best to choose $\transitiondataset$ to have broad coverage over plausible transitions.''
Distributions such as $\expertpolicy{}$ are too narrow, assigning coverage only on a direct path from the initial state to the goal.
Very broad distributions such as \textbf{Ind} waste probability mass on impossible transitions like teleporting.
Distributions like \mixturepolicy{} strike the right balance between these extremes.

\subsection{Predicting policy performance from reward distance}
\label{sec:experiments:ground-truth}

We find that low distance from the ground-truth reward \groundtruthmethod{} (Table~\ref{tab:learned-reward}, center) predicts high \groundtruthmethod{} return (Table~\ref{tab:learned-reward}, right) of policies optimized for that reward.
Moreover, the distance is predictive of return not just in \pointmazetrain{} where the reward functions were trained and evaluated in, but also in the unseen variant \pointmazetest{}.
This is despite the two variants differing in the position of the wall, such that policies for \pointmazetrain{} run directly into the wall in \pointmazetest{}.

Both \regressionmethod{} and \preferencesmethod{} achieve very low distances at convergence, producing near-expert policy performance.
The \airlstateonlymethod{} and \airlstateactionmethod{} models have reward distances an order of magnitude higher and poor policy performance.
Yet intriguingly, the \emph{generator} policies for \airlstateonlymethod{} and \airlstateactionmethod{} -- trained simultaneously with the reward -- perform reasonably in \pointmazetrain{}.
This suggests the learned rewards are reasonable on the subset of transitions taken by the generator policy, yet fail to transfer to the different transitions taken by a policy being trained from scratch.

Figure~\ref{fig:supp:checkpoints-rewards} shows reward distance and policy regret during reward model training.
The lines all closely track each other, showing that the distance to \groundtruthmethod{} is highly correlated with policy regret for intermediate reward checkpoints as well as at convergence.
\regressionmethod{} and \preferencesmethod{} converge quickly to low distance and low regret, while \airlstateonlymethod{} and \airlstateactionmethod{} are slower and more unstable.

\section{Conclusion}

Our novel \canonicalizeddistanceabbrevonly{} distance compares reward functions directly, without training a policy.
We have proved it is a pseudometric, is bounded and invariant to equivalent rewards, and bounds the regret of optimal policies (Theorems~\ref{thm:canonicalized-distance-pseudometric},~\ref{thm:canonicalized-distance-properties} and~\ref{thm:epic-regret-bound-discrete}).
Empirically, we find \canonicalizeddistanceabbrevonly{} correctly infers zero distance between equivalent reward functions that the \nearestpointabbrevonly{} and \returncorrelationabbrevonly{} baselines wrongly consider dissimilar.
Furthermore, we find the distance of learned reward functions to the ground-truth reward predicts the return of policies optimized for the learned reward, even in unseen environments.

Standardized metrics are an important driver of progress in machine learning.
Unfortunately, traditional policy-based metrics do not provide any guarantees as to the fidelity of the learned reward function.
We believe the \canonicalizeddistanceabbrevonly{} distance will be a highly informative addition to the evaluation toolbox, and would encourage researchers to report \canonicalizeddistanceabbrevonly{} distance in addition to policy-based metrics.
Our implementation of \canonicalizeddistanceabbrevonly{} and our baselines, including a tutorial and documentation, are available at \projectsource{}.

\ifdefined\iclrfinaltrue
\subsection*{Acknowledgements}
Thanks to Sam Toyer, Rohin Shah, Eric Langlois, Siddharth Reddy and Stuart Armstrong for helpful discussions; to Miljan Martic for code-review; and to David Krueger, Matthew Rahtz, Rachel Freedman, Cody Wild, Alyssa Dayan, Adria Garriga, Jon Uesato, Zac Kenton and Alden Hung for feedback on drafts.
This work was supported by Open Philanthropy and the Leverhulme Trust.
\fi

\bibliography{refs}
\bibliographystyle{iclr2021_conference}

\onecolumn
\appendix
\counterwithin{table}{section}
\counterwithin{figure}{section}

\section{Supplementary material}

\subsection{Approximation Procedures}

\subsubsection{Sample-based approximation for \canonicalizeddistanceabbrevonly{} distance}
\label{sec:supp:approx:canonicalized-distance}

We approximate \canonicalizeddistanceabbrevonly{} distance (definition~\ref{defn:canonicalized-distance}) by estimating Pearson distance on a set of samples, canonicalizing the reward on-demand.
Specifically, we sample a batch $B_V$ of $N_V$ samples from the \transitiondatasetname{} $\transitiondataset{}$, and a batch $B_M$ of $N_M$ samples from the joint state and action distributions $\statedistribution \times \actiondistribution$.
For each $(s,a,s') \in B_V$, we approximate the canonically shaped rewards (definition~\ref{defn:canonically-shaped-reward}) by taking the mean over $B_M$:

\begin{alignat}{2}
\canonical{\reward{}}(\state, \action, \nextstate) &= \reward{}(\state,\action,\nextstate) &&+ \expectation\left[\discount\reward{}(\nextstate, A, S') - \reward{}(\state, A, S') - \discount\reward{}(S, A, S')\right] \\
&\approx \reward{}(\state,\action,\nextstate) &&+ \frac{\gamma}{N_M} \sum_{(x, u) \in B_M} \reward{}(\nextstate, u, x) \\
& &&- \frac{1}{N_M} \sum_{(x,u) \in B_M} \reward{}(\state, u, x) - c.
\end{alignat}

We drop the constant $c$ from the approximation since it does not affect the Pearson distance; it can also be estimated in $O(N_M^2)$ time by $c = \frac{\gamma}{N_M^2} \sum_{(x, \cdot) \in B_M} \sum_{(x', u) \in B_M} \reward{}(x, u, x')$.
Finally, we compute the Pearson distance between the approximate canonically shaped rewards on the batch of samples $B_V$, yielding an $O(N_V N_M)$ time algorithm.

\subsubsection{Optimization-based approximation for \nearestpointabbrevonly{} distance}
\label{sec:supp:approx:nearest-point}

$\nearestpoint{}(\firstreward{}, \secondreward{})$ (section~\ref{sec:baselines:nearest-point}) is defined as the infimum of \directdistancename{} distance over an infinite set of equivalent reward functions $\reward{} \equivreward \firstreward{}$.
We approximate this using gradient descent on the reward model
\begin{equation}
R_{\nu,c,w}(\state,\action,\nextstate) = \exp(\nu) \firstreward(\state,\action,\nextstate) + c + \gamma \potential_w(s') - \potential_w(s),
\end{equation}
where $\nu, c \in \mathbb{R}$ are scalar weights and $w$ is a vector of weights parameterizing a deep neural network $\potential_w$.
The constant $c \in \mathbb{R}$ is unnecessary if $\potential_w$ has a bias term, but its inclusion simplifies the optimization problem.

We optimize $\nu,c,w$ to minimize the mean of the cost
\begin{equation}
J(\nu, c, w)(\state,\action,\nextstate) = \left\Vert R_{\nu,c,w}(\state,\action,\nextstate), \secondreward{}(\state,\action,\nextstate)\right\Vert^p
\end{equation}
on samples $(\state,\action,\nextstate)$ from a \transitiondatasetname{} $\transitiondataset{}$.
Note
\begin{equation}
\expectation_{(S,A,S') \sim \transitiondataset{}}\left[J(\nu, c, w)(S, A, S')\right]^{1/p} = \directdistance{}(R_{\nu,c,w}, \secondreward)
\end{equation}
upper bounds the true \nearestpointabbrevonly{} distance since $R_{\nu,c,w} \equivreward{} \firstreward{}$.

We found empirically that $\nu$ and $c$ need to be initialized close to their optimal values for gradient descent to reliably converge.
To resolve this problem, we initialize the affine parameters to $\nu \leftarrow \log \lambda$ and $c$ found by:
\begin{equation}
\argmin_{\lambda \geq 0, c \in \mathbb{R}} \underset{s,a,s' \sim \transitiondataset{}}{\expectation{}} \left(\lambda \firstreward(\state, \action, \nextstate) + c - \secondreward(\state, \action, \nextstate)\right)^2.
\end{equation}
We use the active set method of \citet{lawson:1995} to solve this constrained least-squares problem.
These initial affine parameters minimize the \directdistancename{} distance $\directdistance(R_{\nu,c,0}(\state,\action,\nextstate), \secondreward{}(\state,\action,\nextstate))$ when $p=2$ with the potential fixed at $\potential_0(s) = 0$.

\subsubsection{Confidence Intervals}
\label{sec:supp:approx:ci}
We report confidence intervals to help measure the degree of error introduced by the approximations.
Since approximate distances may not be normally distributed, we use bootstrapping to produce a distribution-free confidence interval.
For \canonicalizeddistanceabbrevonly{}, \nearestpointabbrevonly{} and Episode Return (sometimes reported as regret rather than return), we compute independent approximate distances or returns over different seeds, and then compute a bootstrapped confidence interval for each seed.
We use $30$ seeds for \canonicalizeddistanceabbrevonly{}, but only $9$ seeds for computing Episode Return and $3$ seeds for \nearestpointabbrevonly{} due to their greater computational requirements.
In \returncorrelationabbrevonly{}, computing the distance is very fast, so we instead apply bootstrapping to the collected \emph{episodes}, computing the \returncorrelationabbrevonly{} distance for each bootstrapped episode sample.

\subsection{Experiments}

\subsubsection{Hyperparameters for Approximate Distances}
\label{sec:supp:distances-hyperparameters}

\begin{table}
	\centering
	\caption{Summary of hyperparameters and distributions used in experiments. The uniform random \transitiondatasetname{} $\uniformtransitiondataset{}$ samples states and actions uniformly at random, and samples the next state from the transition dynamics. Random policy $\randompolicy{}$ takes uniform random actions. The synthetic expert policy $\expertpolicy{}$ was trained with PPO on the ground-truth reward. \textbf{Mix}ture samples actions from either $\randompolicy{}$ or $\expertpolicy{}$, switching between them at each time step with probability $0.05$. Warmstart Size is the size of the dataset used to compute initialization parameters described in section~\ref{sec:supp:approx:nearest-point}.}
	\capitalizetitle[q]{\transitiondatasetname}%
	\begin{tabular}{@{}lll@{}}
		\toprule
		\textbf{Parameter} & \textbf{Value} & \textbf{In experiment} \\
		\midrule
		\multirow{3}*{\thestring\ $\transitiondataset{}$} & Random transitions $\uniformtransitiondataset{}$ & \env{GridWorld} \\
		& Rollouts from $\randompolicy{}$ & \env{PointMass}, \env{HalfCheetah}, \env{Hopper} \\
		& $\randompolicy{}$, $\expertpolicy{}$ and \textbf{Mix}ture & \env{PointMaze} \\
		Bootstrap Samples & \num{10000} & All \\
		Discount $\discount{}$ & \num{0.99} & All \\
		\midrule
		\textbf{\canonicalizeddistanceabbrevonly{}} & & \\
		State Distribution $\statedistribution{}$ & Marginalized from $\transitiondataset{}$ & All \\
		Action Distribution $\actiondistribution{}$ & Marginalized from $\transitiondataset{}$ & All \\
		Seeds & \num{30} & All \\
		Samples $N_V$ & \num{32768} & All \\
		Mean Samples $N_M$ & \num{32768} & All \\
		\midrule
		\textbf{\nearestpointabbrevonly{}} & & \\
		Seeds & 3 & All \\
		Total Time Steps & $\num{1e6}$ & All \\
		Optimizer & Adam & All \\
		Learning Rate & \num{1e-2} & All \\
		Batch Size & \num{4096} & All\\
		Warmstart Size & \num{16386} & All \\
		Loss $\loss{}$ & $\loss(x,y) = (x - y)^2$ & All \\
		\midrule
		\textbf{\returncorrelationabbrevonly{}} & & \\
		Episodes & \num{131072} & All \\
		\midrule
		\textbf{Episode Return} & & \\
		Seeds & \num{9} & All \\
		\multicolumn{3}{@{}l}{See Table~\ref{tab:ppo-hyperparams} for the policy training hyperparameters} \\
		\bottomrule
	\end{tabular}
	\label{tab:experimental-config}
\end{table}

Table~\ref{tab:experimental-config} summarizes the hyperparameters and distributions used to compute the distances between reward functions.
Most parameters are the same across all environments.
We use a \transitiondatasetname{} of uniform random transitions $\uniformtransitiondataset{}$ in the simple \env{GridWorld} environment with known determinstic dynamics.
In other environments, the \transitiondatasetname{} is sampled from rollouts of a policy.
We use a random policy $\randompolicy{}$ for \env{PointMass}, \env{HalfCheetah} and \env{Hopper} in the hand-designed reward experiments (section~\ref{sec:experiments:hand-designed}).
In \env{PointMaze}, we compare three \transitiondatasetname{}s (section~\ref{sec:experiments:visitation-sensitivity}) induced by rollouts of $\randompolicy{}$, an expert policy $\expertpolicy{}$ and a \textbf{Mix}ture of the two policies, sampling actions from either $\randompolicy{}$ or $\expertpolicy{}$ and switching between them with probability $0.05$ per time step.

\subsubsection{Training Learned Reward Models}
\label{sec:supp:learned-reward-models}

\begin{table}[p]
	\centering
	\caption{Hyperparameters for proximal policy optimisation (PPO)~\citep{schulman:2017}. We used the implementation and default hyperparameters from \citet{stable-baselines:2018}. PPO was used to train expert policies on ground-truth reward and to optimize learned reward functions for evaluation.}
	\begin{tabular}{lll}
		\toprule
		\textbf{Parameter} & \textbf{Value} & \textbf{In environment} \\
		\midrule
		Total Time Steps & \num{1e6} & All \\
		Seeds & 9 & All \\
		Batch Size & \num{4096} & All \\
		Discount $\discount{}$ & \num{0.99} & All \\
		Entropy Coefficient & \num{0.01} & All \\
		Learning Rate & \num{3e-4} & All \\
		Value Function Coefficient & \num{0.5} & All \\
		Gradient Clipping Threshold & \num{0.5} & All \\
		Ratio Clipping Thrsehold & \num{0.2} & All \\
		Lambda (GAE) & \num{0.95} & All \\
		Minibatches & \num{4} & All \\
		Optimization Epochs & \num{4} & All \\
		Parallel Environments & \num{8} & All \\
		\bottomrule
	\end{tabular}
	\label{tab:ppo-hyperparams}
\end{table}

\begin{table}
	\centering
	\caption{Hyperparameters for adversarial inverse reinforcement learning (AIRL) used in \citet{imitation:2020}.}
	\begin{tabular}{@{}ll@{}}
		\toprule
		\textbf{Parameter} & \textbf{Value} \\
		\midrule
		RL Algorithm & PPO~\cite{schulman:2017} \\
		Total Time Steps & \num{1000000} \\
		Discount $\discount{}$ & \num{0.99} \\
		Demonstration Time Steps & \num{100000} \\
		Generator Batch Size & \num{2048} \\
		Discriminator Batch Size & \num{50} \\
		Entropy Weight & \num{1.0} \\
		Reward Function Architecture & MLP, two 32-unit hidden layers \\
		Potential Function Architecture & MLP, two 32-unit hidden layers \\
		\bottomrule
	\end{tabular}
	\label{tab:airl-hyperparams}
\end{table}

\begin{table}
	\centering
	\caption{Hyperparameters for preference comparison used in our implementation of \citet{christiano:2017}.}
	\begin{tabular}{@{}lll@{}}
		\toprule
		\textbf{Parameter} & \textbf{Value} & \textbf{Range Tested} \\
		\midrule
		Total Time Steps & \num{5e6} & $[\num{1}, \num{10e6}]$ \\
		Batch Size & \num{10000} & $[\num{500}, \num{250000}]$ \\
		Trajectory Length & \num{5} & $[\num{1},\num{100}]$ \\
		Learning Rate & \num{1e-2} & $[\num{1e-4}, \num{1e-1}]$ \\
		Discount $\discount{}$ & \num{0.99} & \\
		Reward Function Architecture & MLP, two 32-unit hidden layers \\
		Output L2 Regularization Weight & \num{1e-4} \\
		\bottomrule
	\end{tabular}
	\label{tab:drlhp-hyperparams}
\end{table}

\begin{table}
	\centering
	\caption{Hyperparameters for regression used in our implementation of \citet[\emph{target} method from section~3.3]{christiano:2017}.}
	\begin{tabular}{@{}lll@{}}
		\toprule
		\textbf{Parameter} & \textbf{Value} & \textbf{Range Tested} \\
		\midrule
		Total Time Steps & \num{10e6} & $[\num{1}, \num{20e6}]$ \\
		Batch Size & \num{4096} & $[\num{256}, \num{16384}]$ \\
		Learning Rate & \num{2e-2} & $[\num{1e-3}, \num{1e-1}]$ \\
		Discount $\discount{}$ & \num{0.99} & \\
		Reward Function Architecture & MLP, two 32-unit hidden layers \\
		\bottomrule
	\end{tabular}
	\label{tab:regression-hyperparams}
\end{table}

For the experiments on learned reward functions (sections~\ref{sec:experiments:ground-truth}~and~\ref{sec:experiments:visitation-sensitivity}), we trained reward models using adversarial inverse reinforcement learning~(AIRL;~\citealp{fu:2018}), preference comparison~\cite{christiano:2017} and by regression onto the ground-truth reward~\citep[\emph{target} method from][section~3.3]{christiano:2017}.
For AIRL, we use an existing open-source implementation~\cite{imitation:2020}.
We developed new implementations for preference comparison and regression, available at \projectsource{}.
We also use the RL algorithm proximal policy optimization~(PPO;~\citealp{schulman:2017}) on the ground-truth reward to train expert policies to provide demonstrations for AIRL.
We use 9 seeds, taking rollouts from the seed with the highest ground-truth return.

Our hyperparameters for PPO in Table~\ref{tab:ppo-hyperparams} were based on the defaults in Stable Baselines~\cite{stable-baselines:2018}.
We only modified the batch size and learning rate, and disabled value function clipping to match the original PPO implementation.

Our AIRL hyperparameters in Table~\ref{tab:airl-hyperparams} likewise match the defaults, except for increasing the total number of timesteps to $10^6$.
Due to the high variance of AIRL, we trained 5 seeds, selecting the one with the highest ground-truth return.
While this does introduce a positive bias for our AIRL results, in spite of this AIRL performed worse in our tests than other algorithms.
Moreover, the goal in this paper is to evaluate distance metrics, not reward learning algorithms.

For preference comparison we performed a sweep over batch size, trajectory length and learning rate to decide on the hyperparameters in Table~\ref{tab:drlhp-hyperparams}.
Total time steps was selected once diminishing returns were observed in loss curves.
The exact value of the regularization weight was found to be unimportant, largely controlling the scale of the output at convergence.

Finally, for regression we performed a sweep over batch size, learning rate and total time steps to decide on the hyperparameters in Table~\ref{tab:regression-hyperparams}.
We found batch size and learning rate to be relatively unimportant with many combinations performing well, but regression was found to converge slowly but steadily requiring a relatively large $\num{10e6}$ time steps for good performance in our environments.

\ag{Expand on this section a little since ICML R2 wanted more details (although probably never read this section.)}
All algorithms are trained on synthetic data generated from the ground-truth reward function.
AIRL is provided with a large demonstration dataset of \num{100000} time steps from an expert policy trained on the ground-truth reward (see Table~\ref{tab:airl-hyperparams}).
In preference comparison and regression, each batch is sampled afresh from the \transitiondatasetname{} specified in Table~\ref{tab:experimental-config} and labeled according to the ground-truth reward.

\subsubsection{Computing infrastructure}
Experiments were conducted on a workstation (Intel i9-7920X CPU with 64 GB of RAM), and a small number of \texttt{r5.24xlarge} AWS VM instances, with 48 CPU cores on an Intel Skylake processor and 768 GB of RAM.
It takes less than three weeks of compute on a single \texttt{r5.24xlarge} instance to run all the experiments described in this paper.

\subsubsection{Comparing hand-designed reward functions}
\label{sec:supp:hand-designed}

We compute distances between hand-designed reward functions in four environments: \env{GridWorld}, \env{PointMass}, \env{HalfCheetah} and \env{Hopper}.
The reward functions for \env{GridWorld} are described in Figure~\ref{fig:supp:gridworld-definition}, and the distances are reported in Figure~\ref{fig:supp:hand-designed-gridworld}.
We report the approximate distances and confidence intervals between reward functions in the other environments in Figures~\ref{fig:supp:hand-designed-point-mass},~\ref{fig:supp:hand-designed-half-cheetah}~and~\ref{fig:supp:hand-designed-hopper}.

We find the (approximate) \canonicalizeddistanceabbrevonly{} distance closely matches our intuitions for similarity between the reward functions.
\nearestpointabbrevonly{} often produces similar results to \canonicalizeddistanceabbrevonly{}, but unfortunately is dogged by optimization error.
This is particularly notable in higher-dimensional environments like \env{HalfCheetah} and \env{Hopper}, where the \nearestpointabbrevonly{} distance often exceeds the theoretical upper bound of $1.0$ and the confidence interval width is frequently larger than $0.2$.

By contrast, \returncorrelationabbrevonly{} distance generally has a tight confidence interval, but systematically fails in the presence of shaping.
For example, it confidently assigns large distances between \emph{equivalent} reward pairs in \env{PointMass} such as \sparse\nocontrolpenalty{}-\dense\nocontrolpenalty{}.
However, \returncorrelationabbrevonly{} produces reasonable results in \env{HalfCheetah} and \env{Hopper} where rewards are all similarly shaped.
In fact, \returncorrelationabbrevonly{} picks up on a detail in \env{Hopper} that \canonicalizeddistanceabbrevonly{} misses: whereas \canonicalizeddistanceabbrevonly{} assigns a distance of around $0.71$ between all rewards of different types (running vs backflipping), \returncorrelationabbrevonly{} assigns lower distances when the rewards are in the same direction (forward or backward).
Given this, \returncorrelationabbrevonly{} may be attractive in some circumstances, especially given the ease of implementation.
However, we would caution against using it in isolation due to the likelihood of misleading results in the presence of shaping.

\subsubsection{Comparing learned reward functions}

Previously, we reported the mean approximate distance from a ground-truth reward of four learned reward models in \env{PointMaze} (Table~\ref{tab:learned-reward}).
Since these distances are approximate, we report 95\% lower and upper bounds computed via bootstrapping in Table~\ref{tab:supp:learned-reward}.
We also include the relative difference of the upper and lower bounds from the mean, finding the relative difference to be fairly consistent across reward models for a given algorithm and \transitiondatasetname{} pair.
The relative difference is less than 1\% for all \canonicalizeddistanceabbrevonly{} and \returncorrelationabbrevonly{} distances.
However, \nearestpointabbrevonly{} confidence intervals can be as wide as 50\%: this is due to the method's high variance, and the small number of seeds we were able to run because of the method's computational expense.

\subsubsection{Runtime of Distance Metrics}
\label{sec:supp:experiments:runtime}

\begin{table}
	\centering
	\begin{tabular}{lllllll}
	\toprule
	\multicolumn{1}{c}{\textbf{Distance}} & \multicolumn{1}{c}{\textbf{Wall-Clock}} & \multicolumn{1}{c}{\textbf{Environment}} & \multicolumn{1}{c}{\textbf{Reward}} & \multicolumn{1}{c}{\textbf{\# of }} & \multicolumn{2}{c}{\textbf{95\% CI Width}} \\
	\multicolumn{1}{c}{\textbf{Metric}} & \multicolumn{1}{c}{\textbf{Time}} & \multicolumn{1}{c}{\textbf{Time Steps}} & \multicolumn{1}{c}{\textbf{Queries}} & \multicolumn{1}{c}{\textbf{Seeds}} & \multicolumn{1}{c}{\textbf{Max}} & \multicolumn{1}{c}{\textbf{Mean}} \\
	\midrule
	\canonicalizeddistanceabbrevonly{} Quick & \SI{17}{s} & \num{8192} & $\numformat{1.67e7}$ & $3$ & $\num{0.02304}$ & $\num{0.00860}$ \\
	\canonicalizeddistanceabbrevonly{} & \SI{6738}{s} & \num{65536} & $\numformat{1.07e9}$ & $30$ & $\num{0.00558}$ & $\num{0.00234}$ \\
	\nearestpointabbrevonly{} & \SI{29769}{s} & $\numformat{7.5e7}$ & $\numformat{7.5e7}$ & $3$ & $\num{0.31591}$ & $\num{0.06620}$ \\
	\returncorrelationabbrevonly{} & \SI{1376}{s} & $\numformat{6.55e6}$ & $\numformat{6.55e6}$ & --- & $\num{0.01581}$ & $\num{0.00533}$ \\
	RL (PPO) & \SI{14745}{s} & $\numformat{7.5e7}$ & $\numformat{7.5e7}$ & $3$ & \quad\ --- & \quad\ --- \\
	\bottomrule
	\end{tabular}
	\caption{Time and resources taken by different metrics to perform $25$ distance comparisons on \env{PointMass}, and the confidence interval widths obtained (smaller is better). Methods \emph{\canonicalizeddistanceabbrevonly{}}, \emph{\nearestpointabbrevonly{}} and \emph{\returncorrelationabbrevonly{}} correspond to Figures~\ref{fig:experiments:hand-designed}(a), (b) and (c) respectively. \emph{\canonicalizeddistanceabbrevonly{} Quick} is an abbreviated version with fewer samples. \emph{RL (PPO)} is estimated from the time taken using PPO to train a single policy (16m:23s) until convergence ($10^6$ time steps). \canonicalizeddistanceabbrevonly{} samples $N_M + N_V$ time steps from the environment and performs $N_M N_V$ reward queries. In \emph{\canonicalizeddistanceabbrevonly{} Quick}, $N_M = N_V = 4096$; in $\emph{\canonicalizeddistanceabbrevonly{}}$, $N_M = N_V = 302768$. Other methods query the reward once per environment time step.}
	\label{table:empirical-runtimes}
\end{table}

We report the empirical runtime for \canonicalizeddistanceabbrevonly{} and baselines in Table~\ref{table:empirical-runtimes}, performing $25$ pairwise comparisons across $5$ reward functions in \env{PointMass}.
These comparisons were run on an unloaded machine running Ubuntu 20.04 (kernel 5.4.0-52) with an Intel i9-7920X CPU and 64 GB of RAM.
We report sequential runtimes: runtimes for all methods could be decreased further by parallelizing across seeds.
The algorithms were configured to use $8$ parallel environments for sampling.
Inference and training took place on CPU.
All methods used the same TensorFlow configuration, parallelizing operations across threads both within and between operations.
We found GPUs offered no performance benefit in this setting, and in some cases even increased runtime.
This is due to the fixed cost of CPU-GPU communication, and the relatively small size of the observations.

We find that in just $17$ seconds \canonicalizeddistanceabbrevonly{} can provide results with a 95\% confidence interval $<0.023$, an order of magnitude tighter than \nearestpointabbrevonly{} running for over $8$ hours.
Training policies for all learned rewards in this environment using PPO is comparatively slow, taking over $4$ hours even with only $3$ seeds.
While \returncorrelationabbrevonly{} is relatively fast, it takes a large number of samples to achieve tight confidence intervals.
Moreover, since \env{PointMass} has stochastic initial states, \returncorrelationabbrevonly{} can take on arbitrary values under shaping, as discussed in sections~\ref{sec:baselines:return-correlation} and~\ref{sec:experiments:ground-truth}.

\begin{figure*}
	\vspace*{-0.4em}
	\centering
	\includegraphics{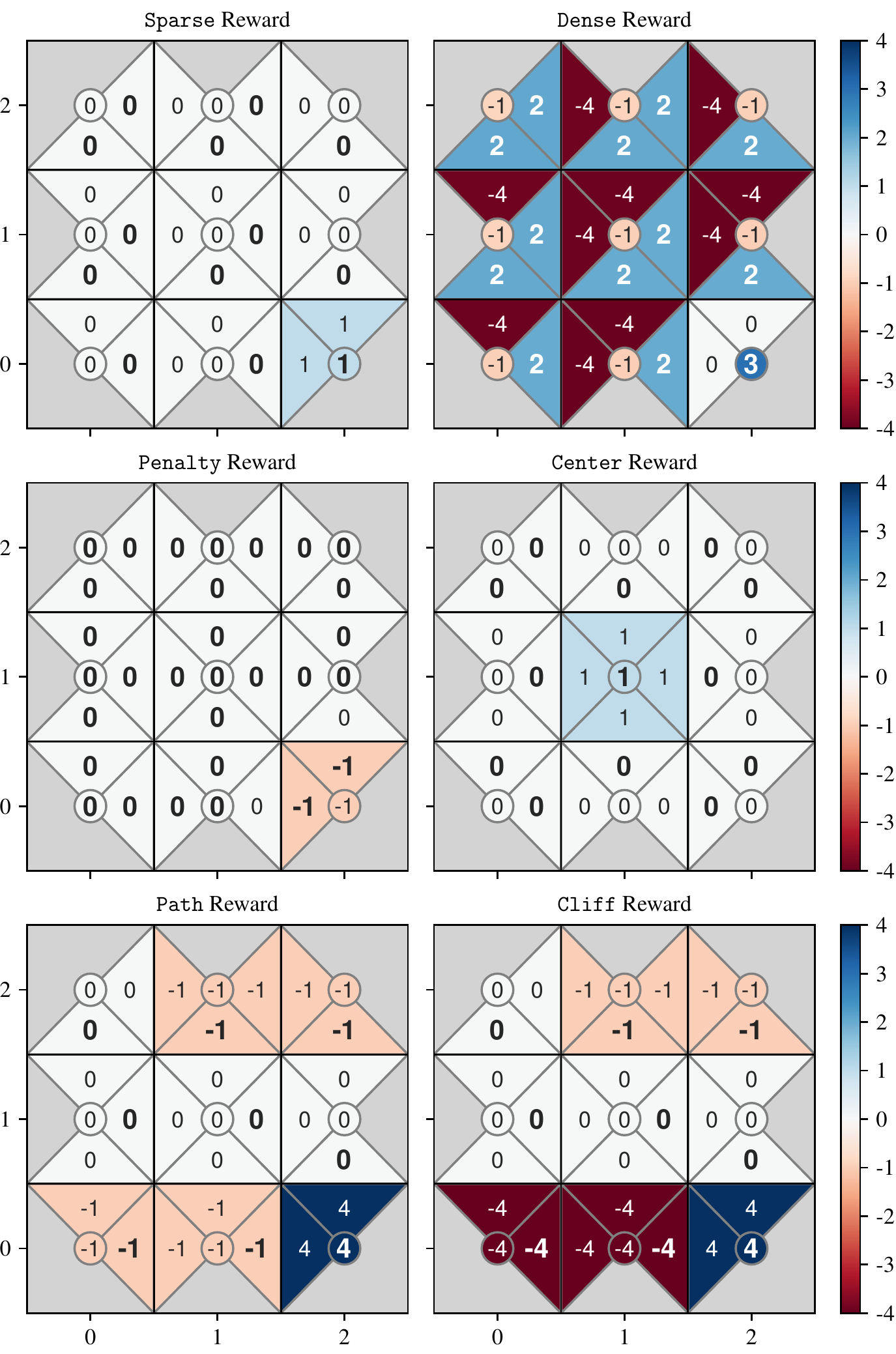}
	\caption{Heatmaps of reward functions $R(s,a,s')$ for a $3\times 3$ deterministic gridworld.
		$R(s, \mathrm{stay}, s)$ is given by the central circle in cell $s$.
		$R(s, a, s')$ is given by the triangular wedge in cell $s$ adjacent to cell $s'$ in direction $a$.
		Optimal action(s) (for infinite horizon, discount $\gamma = 0.99$) have bold labels against a hatched background.
		See Figure~\ref{fig:supp:hand-designed-gridworld} for the distance between all reward pairs.}
	\label{fig:supp:gridworld-definition}
\end{figure*}

\begin{figure*}
	\ag{TODO: add results for ERC too? also reformat so NPEC actually fits, may want to use compact method similar to non-gridworlds.}
	\centering
	\vspace*{-2em}
	\begin{subfigure}{\textwidth}
		\includegraphics{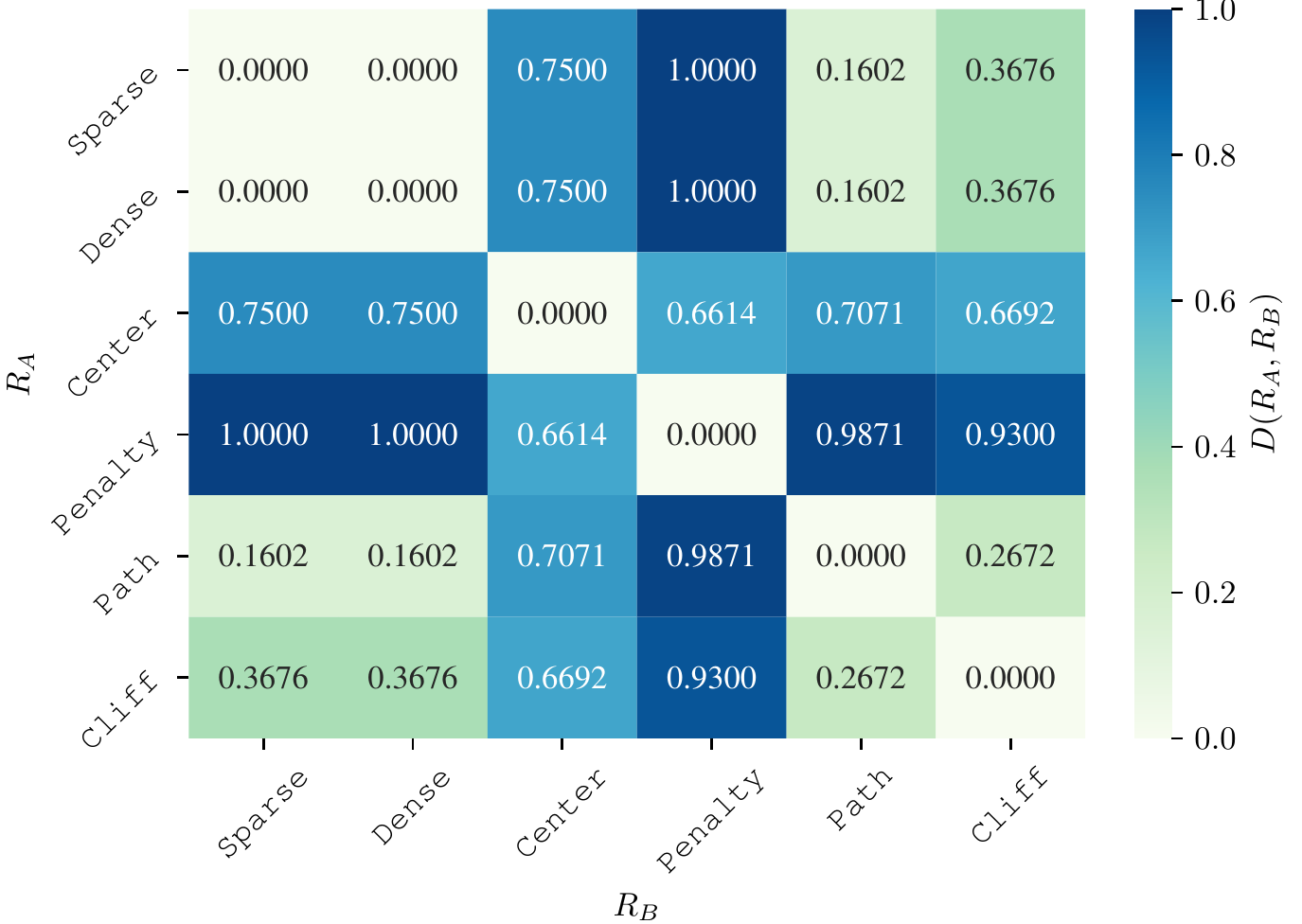}
		\caption{\canonicalizeddistanceabbrevonly{}}
	\end{subfigure}
	\begin{subfigure}{\textwidth}
		\includegraphics{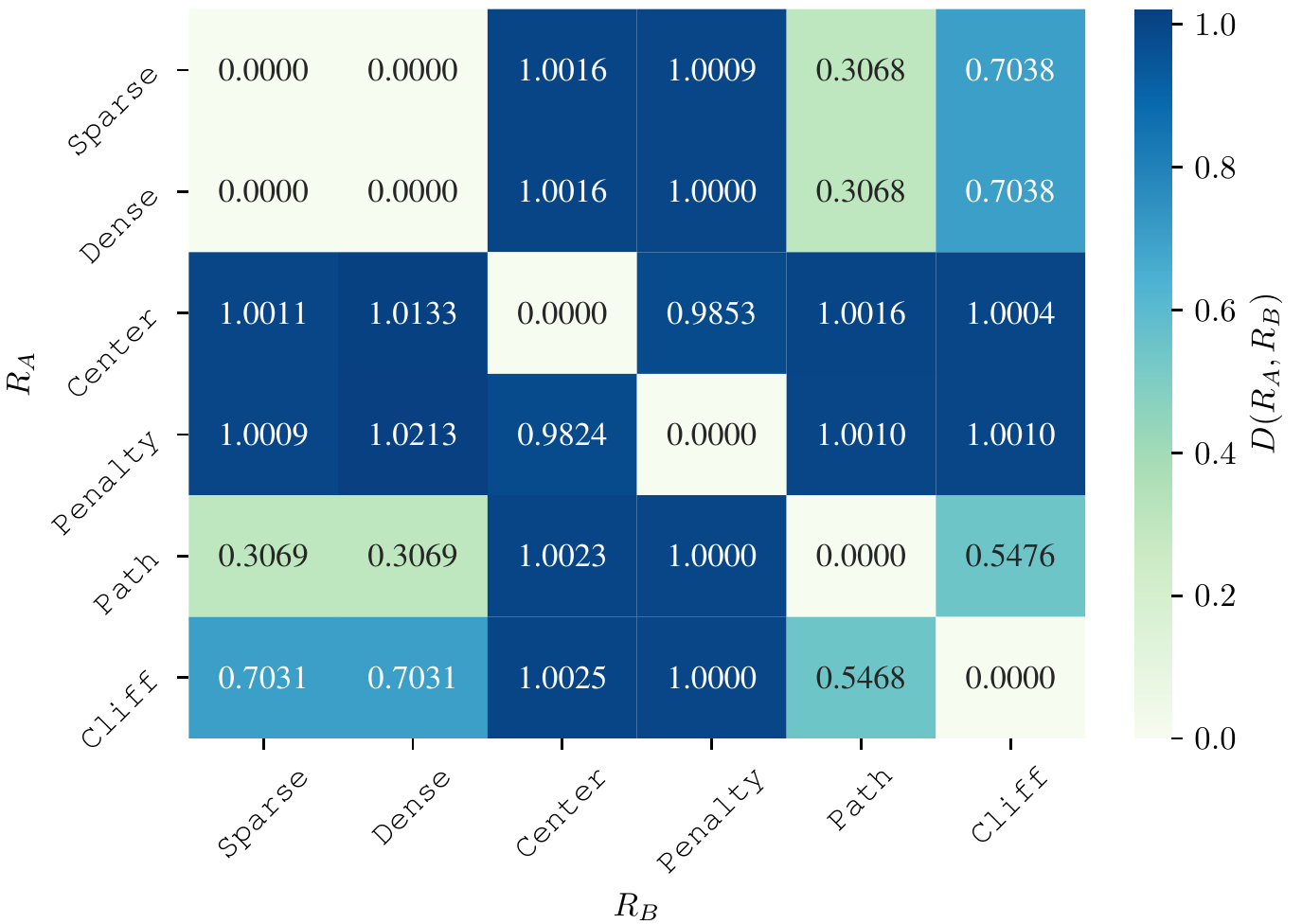}
		\caption{\nearestpointabbrevonly{}}
	\end{subfigure}
	\caption{Distances (\canonicalizeddistanceabbrevonly{}, top; \nearestpointabbrevonly{}, bottom) between hand-designed reward functions for the $3\times 3$ deterministic \env{Gridworld} environment. \canonicalizeddistanceabbrevonly{} and \nearestpointabbrevonly{} produce similar results, but \canonicalizeddistanceabbrevonly{} more clearly discriminates between rewards whereas \nearestpointabbrevonly{} distance tends to saturate. For example, the \nearestpointabbrevonly{} distance from \sparsepenalty{} to other rewards lies in the very narrow $[0.98,1.0]$ range, whereas \canonicalizeddistanceabbrevonly{} uses the wider $[0.66, 1.0]$ range. See Figure~\ref{fig:supp:gridworld-definition} for definitions of each reward. Distances are computed using tabular algorithms. We do not report confidence intervals since these algorithms are deterministic and exact up to floating point error.}
	\label{fig:supp:hand-designed-gridworld}
\end{figure*}

\begin{figure*}
	\centering
	\compactheatmap{point_mass}{middle}{ Median}
	\compactheatmap{point_mass}{width}{ CI Width}
	\caption[]{\envheatmapcaption{PointMass} \pointmasscaption{} \cicaption{}}
	\label{fig:supp:hand-designed-point-mass}
\end{figure*}

\begin{figure*}
	\centering
	\compactheatmap{half_cheetah}{middle}{ Median}
	\compactheatmap{half_cheetah}{width}{ CI Width}
	\caption[]{\envheatmapcaption{HalfCheetah}
	\textbf{Key}: \running{} is a reward proportional to the change in center of mass and moving \emph{forward} is rewarded when \running{} to the right, and moving \emph{backward} is rewarded when \backward{\running{}} to the left.
	\controlpenalty{} quadratic control penalty, \nocontrolpenalty{} no control penalty.
	\cicaption{}}
	\label{fig:supp:hand-designed-half-cheetah}
\end{figure*}

\begin{figure*}
	\centering
	\compactheatmap{hopper}{middle}{ Median}
	\compactheatmap{hopper}{width}{ CI Width}
	\caption[]{\envheatmapcaption{Hopper}
	\textbf{Key}: \running{} is a reward proportional to the change in center of mass and \backflipping{} is the backflip reward defined in \citet[footnote]{amodei:2017}.
	Moving \emph{forward} is rewarded when \running{} or \backflipping{} is to the right, and moving \emph{backward} is rewarded when \backward{\running{}} or \backward{\backflipping{}} is to the left.
	\controlpenalty{} quadratic control penalty, \nocontrolpenalty{} no control penalty.
	\cicaption{}}
	\label{fig:supp:hand-designed-hopper}
\end{figure*}

\FloatBarrier
	\newcommand*\xbar[1]{%
	   \hbox{%
		 \vbox{%
		   \hrule width 0.8em height 0.5pt %
		   \kern0.5ex%
		   \hbox{%
			 \kern-0.1em%
			 \ensuremath{#1}%
			 \kern-0.1em%
		   }%
		 }%
	   }%
	}

\begin{table}
	\caption{Approximate distances of reward functions from the ground-truth (\groundtruthmethod{}). We report the 95\% bootstrapped lower and upper bounds, the mean, and a 95\% bound on the relative error from the mean. Distances ($1000\times$ scale) use \transitiondatasetname{} $\transitiondataset{}$ from rollouts in the \pointmazetrain{} environment of: a uniform random policy $\boldsymbol{\randompolicy{}}$, an expert $\boldsymbol{\expertpolicy{}}$ and a \textbf{Mix}ture of these policies. $\statedistribution{}$ and $\actiondistribution{}$ are computed by marginalizing $\transitiondataset{}$.}
	\label{tab:supp:learned-reward}
	\begin{subtable}{\textwidth}
		\caption{95\% lower bound $\distance^{\mathrm{LOW}}$ of approximate distance.}
		\centering
		\setlength{\tabcolsep}{3pt}
		\begin{tabular}{@{}ll@{\hspace{1em}}llll@{\hspace{0.4em}}llll@{\hspace{0.4em}}llll@{\hspace{0.75em}}cc@{}}
			\toprule
			\textbf{Reward} & & \multicolumn{3}{c}{$\boldsymbol{1000 \times \canonicalizeddistance^\mathrm{LOW}}$} & & \multicolumn{3}{c}{$\boldsymbol{1000 \times \nearestpoint^\mathrm{LOW}}$} & & \multicolumn{3}{c}{$\boldsymbol{1000 \times \returncorrelation^\mathrm{LOW}}$} & & \multicolumn{2}{c}{\textbf{Episode Return}} \\
			\cmidrule{3-5}\cmidrule{7-9}\cmidrule{11-13}\cmidrule{15-16}
			\textbf{Function} & & $\randompolicy{}$ & $\expertpolicy{}$ & \textbf{Mix} & & $\randompolicy{}$ & $\expertpolicy{}$ & \textbf{Mix} & & $\randompolicy{}$ & $\expertpolicy{}$ & \textbf{Mix} & & \textbf{Train} & \textbf{Test} \\
			\midrule
			\groundtruthmethod{} & & \num{0.03} & \num{0.02} & <0.01 &  & \num{0.02} & \num{1.43} & <0.01 &  & \num{0.00} & \num{0.00} & \num{0.00} &  & \num{-4.46} & \num{-5.82} \\
			\regressionmethod{} & & \num{35.5} & \num{33.6} & \num{26.0} &  & \num{1.22} & \num{38.8} & \num{0.33} &  & \num{9.94} & \num{90.2} & \num{2.42} &  & \num{-4.7} & \num{-5.63} \\
			\preferencesmethod{} & & \num{68.3} & \num{100} & \num{56.6} &  & \num{7.02} & \num{1239} & \num{9.25} &  & \num{24.7} & \num{358} & \num{19.5} &  & \num{-5.26} & \num{-4.88} \\
			\airlstateonlymethod{} & & \num{570} & \num{519} & \num{402} &  & \num{734} & \num{1645} & \num{424} &  & \num{547} & \num{521} & \num{238} &  & \num{-27.3} & \num{-22.7} \\
			\airlstateactionmethod{} & & \num{774} & \num{930} & \num{894} &  & \num{956} & \num{723} & \num{952} &  & \num{802} & \num{720} & \num{963} &  & \num{-29.9} & \num{-28} \\
			\midrule
			\bettergoalmethod{} & & \num{3.49} & \num{0.02} & \num{381} &  & \num{0.17} & \num{4.03} & \num{481} &  & \num{25.8} & <0.01 & \num{162} &  & \num{-28.4} & \num{-26.2} \\
			\bottomrule
		\end{tabular}
		\label{tab:supp:learned-reward:lower}
	\end{subtable}
	\begin{subtable}{\textwidth}
		\caption{Mean approximate distance $\xbar{\distance}$. Results are the same as Table~\ref{tab:learned-reward}.}
		\centering
		\setlength{\tabcolsep}{3pt}
		\begin{tabular}{@{}ll@{\hspace{1em}}llll@{\hspace{0.4em}}llll@{\hspace{0.4em}}llll@{\hspace{0.75em}}cc@{}}
			\toprule
			\textbf{Reward} & & \multicolumn{3}{c}{$\boldsymbol{1000 \times \xbar{\canonicalizeddistance{}}}$} & & \multicolumn{3}{c}{$\boldsymbol{1000 \times \xbar{\nearestpoint{}}}$} & & \multicolumn{3}{c}{$\boldsymbol{1000 \times \xbar{\returncorrelation{}}}$} & & \multicolumn{2}{c}{\textbf{Episode Return}} \\
			\cmidrule{3-5}\cmidrule{7-9}\cmidrule{11-13}\cmidrule{15-16}
			\textbf{Function} & & $\randompolicy{}$ & $\expertpolicy{}$ & \textbf{Mix} & & $\randompolicy{}$ & $\expertpolicy{}$ & \textbf{Mix} & & $\randompolicy{}$ & $\expertpolicy{}$ & \textbf{Mix}  & & \textbf{Train} & \textbf{Test} \\
			\midrule
			\groundtruthmethod{} & & \num{0.06} & \num{0.05} & \num{0.04} &  & \num{0.04} & \num{3.17} & \num{0.01} &  & \num{0.00} & \num{0.00} & \num{0.00} &  & \num{-5.19} & \num{-6.59} \\
			\regressionmethod{} & & \num{35.8} & \num{33.7} & \num{26.1} &  & \num{1.42} & \num{38.9} & \num{0.35} &  & \num{9.99} & \num{90.7} & \num{2.43} &  & \num{-5.47} & \num{-6.3} \\
			\preferencesmethod{} & & \num{68.7} & \num{100} & \num{56.8} &  & \num{8.51} & \num{1333} & \num{9.74} &  & \num{24.9} & \num{360} & \num{19.6} &  & \num{-5.57} & \num{-5.04} \\
			\airlstateonlymethod{} & & \num{572} & \num{520} & \num{404} &  & \num{817} & \num{2706} & \num{488} &  & \num{549} & \num{523} & \num{240} &  & \num{-27.3} & \num{-22.7} \\
			\airlstateactionmethod{} & & \num{776} & \num{930} & \num{894} &  & \num{1067} & \num{2040} & \num{1039} &  & \num{803} & \num{722} & \num{964} &  & \num{-30.7} & \num{-29} \\
			\midrule
			\bettergoalmethod{} & & \num{17.0} & \num{0.05} & \num{397} &  & \num{0.68} & \num{6.30} & \num{597} &  & \num{35.3} & <0.01 & \num{166} &  & \num{-30.4} & \num{-29.1} \\
			\bottomrule
		\end{tabular}
		\label{tab:supp:learned-reward:mean}
	\end{subtable}
	\begin{subtable}{\textwidth}
		\caption{95\% upper bound $\distance^{\mathrm{UP}}$ of approximate distance.}
		\centering
		\setlength{\tabcolsep}{3pt}
		\begin{tabular}{@{}ll@{\hspace{1em}}llll@{\hspace{0.4em}}llll@{\hspace{0.4em}}llll@{\hspace{0.75em}}cc@{}}
			\toprule
			\textbf{Reward} & & \multicolumn{3}{c}{$\boldsymbol{1000 \times \canonicalizeddistance^{\mathrm{UP}}}$} & & \multicolumn{3}{c}{$\boldsymbol{1000 \times \nearestpoint^{\mathrm{UP}}}$} & & \multicolumn{3}{c}{$\boldsymbol{1000 \times \returncorrelation^{\mathrm{UP}}}$}  & & \multicolumn{2}{c}{\textbf{Episode Return}} \\
			\cmidrule{3-5}\cmidrule{7-9}\cmidrule{11-13}\cmidrule{15-16}
			\textbf{Function} & & $\randompolicy{}$ & $\expertpolicy{}$ & \textbf{Mix} & & $\randompolicy{}$ & $\expertpolicy{}$ & \textbf{Mix} & & $\randompolicy{}$ & $\expertpolicy{}$ & \textbf{Mix} & & \textbf{Train} & \textbf{Test} \\
			\midrule
			\groundtruthmethod{} & & \num{0.09} & \num{0.07} & \num{0.07} &  & \num{0.06} & \num{6.14} & \num{0.01} &  & <0.01 & <0.01 & <0.01 &  & \num{-6.04} & \num{-7.14} \\
			\regressionmethod{} & & \num{36.1} & \num{33.7} & \num{26.2} &  & \num{1.53} & \num{39.1} & \num{0.37} &  & \num{10.0} & \num{91.2} & \num{2.44} &  & \num{-6.26} & \num{-6.83} \\
			\preferencesmethod{} & & \num{69.1} & \num{101} & \num{57.1} &  & \num{10.0} & \num{1432} & \num{10.1} &  & \num{25.2} & \num{361} & \num{19.7} &  & \num{-5.9} & \num{-5.22} \\
			\airlstateonlymethod{} & & \num{574} & \num{520} & \num{407} &  & \num{982} & \num{3984} & \num{532} &  & \num{551} & \num{526} & \num{242} &  & \num{-27.3} & \num{-22.8} \\
			\airlstateactionmethod{} & & \num{779} & \num{930} & \num{895} &  & \num{1241} & \num{4378} & \num{1124} &  & \num{805} & \num{724} & \num{964} &  & \num{-31.7} & \num{-29.8} \\
			\midrule
			\bettergoalmethod{} & & \num{35.2} & \num{0.09} & \num{414} &  & \num{1.66} & \num{10.8} & \num{821} &  & \num{45.4} & <0.01 & \num{171} &  & \num{-32} & \num{-31.4} \\
			\bottomrule
		\end{tabular}
		\label{tab:supp:learned-reward:upper}
	\end{subtable}
	\begin{subtable}{\textwidth}
		\centering
		\caption{Relative 95\% confidence interval $\distance^{\mathrm{REL}} = \left\vert\max\left(\frac{\mathrm{Upper}}{\mathrm{Mean}} - 1,1 - \frac{\mathrm{Lower}}{\mathrm{Mean}}\right)\right\vert$ in percent. The population mean is contained within $\pm D^{\mathrm{REL}}\%$ of the sample mean in Table~\ref{tab:supp:learned-reward:mean} with 95\% probability.}
		\setlength{\tabcolsep}{3pt}
		\begin{tabular}{@{}ll@{\hspace{1em}}llll@{\hspace{0.4em}}llll@{\hspace{0.4em}}llll@{\hspace{0.75em}}cc@{}}
			\toprule
			\textbf{Reward} & & \multicolumn{3}{c}{$\boldsymbol{\canonicalizeddistance^{\mathrm{REL}}\%}$} & & \multicolumn{3}{c}{$\boldsymbol{\nearestpoint^{\mathrm{REL}}\%}$} & & \multicolumn{3}{c}{$\boldsymbol{\returncorrelation^{\mathrm{REL}}\%}$} & & \multicolumn{2}{c}{\textbf{Episode Return}} \\
			\cmidrule{3-5}\cmidrule{7-9}\cmidrule{11-13}\cmidrule{15-16}
			\textbf{Function} & & $\randompolicy{}$ & $\expertpolicy{}$ & \textbf{Mix} & & $\randompolicy{}$ & $\expertpolicy{}$ & \textbf{Mix} & & $\randompolicy{}$ & $\expertpolicy{}$ & \textbf{Mix} & & \textbf{Train} & \textbf{Test} \\
			\midrule
			\groundtruthmethod{} & & \num{50.0} & \num{62.5} & \num{80.0} &  & \num{61.8} & \num{94.0} & \num{29.7} &  & inf & inf & inf &  & \num{0.16} & \num{0.12} \\
			\regressionmethod{} & & \num{0.81} & \num{0.14} & \num{0.40} &  & \num{14.2} & \num{0.42} & \num{7.48} &  & \num{0.53} & \num{0.55} & \num{0.57} &  & \num{0.14} & \num{0.11} \\
			\preferencesmethod{} & & \num{0.61} & \num{0.14} & \num{0.44} &  & \num{17.5} & \num{7.49} & \num{5.02} &  & \num{0.90} & \num{0.48} & \num{0.48} &  & \num{0.06} & \num{0.04} \\
			\airlstateonlymethod{} & & \num{0.38} & \num{0.08} & \num{0.67} &  & \num{20.2} & \num{47.2} & \num{13.2} &  & \num{0.34} & \num{0.40} & \num{0.69} &  & <0.01 & <0.01 \\
			\airlstateactionmethod{} & & \num{0.35} & \num{0.02} & \num{0.08} &  & \num{16.3} & \num{115} & \num{8.42} &  & \num{0.23} & \num{0.26} & \num{0.04} &  & \num{0.03} & \num{0.04} \\
			\midrule
			\bettergoalmethod{} & & \num{108} & \num{65.5} & \num{4.17} &  & \num{142} & \num{70.9} & \num{37.5} &  & \num{28.5} & \num{0.55} & \num{2.66} &  & \num{0.07} & \num{0.10} \\
			\bottomrule
		\end{tabular}
		\label{tab:supp:learned-reward:relative-error}
	\end{subtable}
\end{table}

\begin{table}
	\caption{Approximate distances of reward functions from the ground-truth (\groundtruthmethod{}) under pathological \transitiondatasetname{}s. We report the 95\% bootstrapped lower and upper bounds, the mean, and a 95\% bound on the relative error from the mean. Distances ($1000\times$ scale) use four different \transitiondatasetname{}s $\transitiondataset{}$. $\boldsymbol{\permutedrandompolicy{}}$ independently samples states, actions and next states from the marginal distributions of rollouts from the uniform random policy $\randompolicy$ in the \pointmazetrain{} environment. \textbf{Ind} independently samples the components of states and next states from $\mathcal{N}(0,1)$, and actions from $U[-1, 1]$. \textbf{Jail} consists of rollouts of $\randompolicy$ restricted to a small $0.09 \times 0.09$ ``jail'' square that excludes the goal state $0.5$ distance away. $\boldsymbol{\wrongpolicy{}}$ are rollouts in \pointmazetrain{} of a policy that goes to the corner opposite the goal state. $\boldsymbol{\permutedrandompolicy{}}$ and \textbf{Ind} are not supported by \returncorrelationabbrevonly{} since they do not produce complete episodes.}
	\label{tab:supp:learned-reward-pathological}
	\begin{subtable}{\textwidth}
		\caption{95\% lower bound $\distance^{\mathrm{LOW}}$ of approximate distance.}
		\centering
		\setlength{\tabcolsep}{3pt}
		\begin{tabular}{@{}ll@{\hspace{0.75em}}lllll@{\hspace{0.4em}}lllll@{\hspace{0.4em}}lll@{}}
			\toprule
			\textbf{Reward} & & \multicolumn{4}{c}{$\boldsymbol{1000 \times \canonicalizeddistance^\mathrm{LOW}}$} & & \multicolumn{4}{c}{$\boldsymbol{1000\mkern-4mu\times\mkern-4mu\nearestpoint^\mathrm{LOW}}$} & & \multicolumn{2}{c}{$\boldsymbol{1000 \times \returncorrelation^\mathrm{LOW}}$} \\
			\cmidrule{3-6}\cmidrule{8-11}\cmidrule{13-14}
			\textbf{Function} & & $\boldsymbol{\permutedrandompolicy{}}$ & \textbf{Ind} & \textbf{Jail} & $\boldsymbol{\wrongpolicy{}}$ & & $\boldsymbol{\permutedrandompolicy{}}$ & \textbf{Ind} & \textbf{Jail} & $\boldsymbol{\wrongpolicy{}}$ & & \textbf{Jail} & $\boldsymbol{\wrongpolicy{}}$ \\
			\midrule
			\regressionmethod{} & & \num{127} & \num{398} & \num{705} & \num{205} &  & \num{87.6} & \num{590} & \num{2433} & \num{898} &  & \num{809} & \num{456} \\
\preferencesmethod{} & & \num{146} & \num{433} & \num{462} & \num{349} &  & \num{97.4} & \num{632} & \num{661} & \num{221} &  & \num{372} & \num{332} \\
\airlstateonlymethod{} & & \num{570} & \num{541} & \num{712} & \num{710} &  & \num{697} & \num{821} & \num{957} & \num{621} &  & \num{751} & \num{543} \\
\airlstateactionmethod{} & & \num{768} & \num{628} & \num{558} & \num{669} &  & \num{720} & \num{960} & \num{940} & \num{2355} &  & \num{428} & \num{753} \\
			\midrule
			\bettergoalmethod{} & & \num{9.22} & \num{0.03} & <0.01 & \num{0.02} &  & \num{0.41} & \num{0.05} & \num{11.2} & \num{31.3} &  & <0.01 & \num{0.02} \\
			\bottomrule
		\end{tabular}
		\label{tab:supp:learned-reward-pathological:lower}
	\end{subtable}
	\begin{subtable}{\textwidth}
		\caption{Mean approximate distance $\xbar{\distance}$. Results are the same as Table~\ref{tab:learned-reward}.}
		\centering
		\setlength{\tabcolsep}{3pt}
		\begin{tabular}{@{}ll@{\hspace{0.75em}}lllll@{\hspace{0.4em}}lllll@{\hspace{0.4em}}lll@{}}
			\toprule
			\textbf{Reward} & & \multicolumn{4}{c}{$\boldsymbol{1000 \times \xbar{\canonicalizeddistance{}}}$} & & \multicolumn{4}{c}{$\boldsymbol{1000 \times \xbar{\nearestpoint{}}}$} & & \multicolumn{2}{c}{$\boldsymbol{1000\mkern-4mu\times\mkern-4mu\xbar{\returncorrelation{}}}$} \\
			\cmidrule{3-6}\cmidrule{8-11}\cmidrule{13-14}
			\textbf{Function} & & $\boldsymbol{\permutedrandompolicy{}}$ & \textbf{Ind} & \textbf{Jail} & $\boldsymbol{\wrongpolicy{}}$ & & $\boldsymbol{\permutedrandompolicy{}}$ & \textbf{Ind} & \textbf{Jail} & $\boldsymbol{\wrongpolicy{}}$ & & \textbf{Jail} & $\boldsymbol{\wrongpolicy{}}$ \\
			\midrule
			\regressionmethod{} & & \num{128} & \num{398} & \num{705} & \num{206} &  & \num{97.2} & \num{591} & \num{2549} & \num{921} &  & \num{810} & \num{458} \\
			\preferencesmethod{} & & \num{147} & \num{433} & \num{463} & \num{349} &  & \num{117} & \num{633} & \num{683} & \num{237} &  & \num{374} & \num{333} \\
			\airlstateonlymethod{} & & \num{573} & \num{541} & \num{713} & \num{710} &  & \num{826} & \num{823} & \num{988} & \num{852} &  & \num{753} & \num{545} \\
			\airlstateactionmethod{} & & \num{771} & \num{628} & \num{558} & \num{669} &  & \num{859} & \num{962} & \num{964} & \num{2694} &  & \num{430} & \num{754} \\
			\midrule
			\bettergoalmethod{} & & \num{42.4} & \num{0.06} & \num{0.03} & \num{0.05} &  & \num{1.41} & \num{0.25} & \num{18.3} & \num{39} &  & <0.01 & \num{0.02} \\
			\bottomrule
		\end{tabular}
		\label{tab:supp:learned-reward-pathological:mean}
	\end{subtable}
	\begin{subtable}{\textwidth}
		\caption{95\% upper bound $\distance^{\mathrm{UP}}$ of approximate distance.}
		\centering
		\setlength{\tabcolsep}{3pt}
		\begin{tabular}{@{}ll@{\hspace{0.75em}}lllll@{\hspace{0.4em}}lllll@{\hspace{0.4em}}lll@{}}
			\toprule
			\textbf{Reward} & & \multicolumn{4}{c}{$\boldsymbol{1000 \times \canonicalizeddistance^{\mathrm{UP}}}$} & & \multicolumn{4}{c}{$\boldsymbol{1000 \times \nearestpoint^{\mathrm{UP}}}$} & & \multicolumn{2}{c}{$\boldsymbol{1000\mkern-4mu\times\mkern-4mu\returncorrelation^{\mathrm{UP}}}$} \\
			\cmidrule{3-6}\cmidrule{8-11}\cmidrule{13-14}
			\textbf{Function} & & $\boldsymbol{\permutedrandompolicy{}}$ & \textbf{Ind} & \textbf{Jail} & $\boldsymbol{\wrongpolicy{}}$ & & $\boldsymbol{\permutedrandompolicy{}}$ & \textbf{Ind} & \textbf{Jail} & $\boldsymbol{\wrongpolicy{}}$ & & \textbf{Jail} & $\boldsymbol{\wrongpolicy{}}$ \\
			\midrule
			\regressionmethod{} & & \num{129} & \num{398} & \num{706} & \num{206} &  & \num{106} & \num{593} & \num{2654} & \num{948} &  & \num{812} & \num{460} \\
			\preferencesmethod{} & & \num{148} & \num{433} & \num{464} & \num{349} &  & \num{132} & \num{635} & \num{705} & \num{265} &  & \num{376} & \num{335} \\
			\airlstateonlymethod{} & & \num{576} & \num{541} & \num{713} & \num{710} &  & \num{939} & \num{825} & \num{1047} & \num{1021} &  & \num{755} & \num{547} \\
			\airlstateactionmethod{} & & \num{774} & \num{628} & \num{559} & \num{669} &  & \num{1015} & \num{963} & \num{981} & \num{3012} &  & \num{432} & \num{756} \\
			\midrule
			\bettergoalmethod{} & & \num{85.9} & \num{0.09} & \num{0.05} & \num{0.09} &  & \num{3.25} & \num{0.46} & \num{28.6} & \num{45.3} &  & <0.01 & \num{0.02} \\
			\bottomrule
		\end{tabular}
		\label{tab:supp:learned-reward-pathological:upper}
	\end{subtable}
	\begin{subtable}{\textwidth}
		\centering
		\caption{Relative 95\% confidence interval $\distance^{\mathrm{REL}} = \left\vert\max\left(\frac{\mathrm{Upper}}{\mathrm{Mean}} - 1,1 - \frac{\mathrm{Lower}}{\mathrm{Mean}}\right)\right\vert$ in percent. The population mean is contained within $\pm D^{\mathrm{REL}}\%$ of the sample mean in Table~\ref{tab:supp:learned-reward-pathological:mean} with 95\% probability.}
		\setlength{\tabcolsep}{3pt}
		\begin{tabular}{@{}ll@{\hspace{0.75em}}lllll@{\hspace{0.4em}}lllll@{\hspace{0.4em}}lll@{}}
			\toprule
			\textbf{Reward} & & \multicolumn{4}{c}{$\boldsymbol{\canonicalizeddistance^{\mathrm{REL}}\%}$} & & \multicolumn{4}{c}{$\boldsymbol{\nearestpoint^{\mathrm{REL}}\%}$} & & \multicolumn{2}{c}{$\boldsymbol{\returncorrelation^{\mathrm{REL}}\%}$} \\
			\cmidrule{3-6}\cmidrule{8-11}\cmidrule{13-14}
			\textbf{Function} & & $\boldsymbol{\permutedrandompolicy{}}$ & \textbf{Ind} & \textbf{Jail} & $\boldsymbol{\wrongpolicy{}}$ & & $\boldsymbol{\permutedrandompolicy{}}$ & \textbf{Ind} & \textbf{Jail} & $\boldsymbol{\wrongpolicy{}}$ & & \textbf{Jail} & $\boldsymbol{\wrongpolicy{}}$ \\
			\midrule
			\regressionmethod{} & & \num{0.79} & \num{0.01} & \num{0.08} & \num{0.06} &  & \num{9.81} & \num{0.24} & \num{4.54} & \num{2.89} &  & \num{0.18} & \num{0.42} \\
			\preferencesmethod{} & & \num{0.76} & <0.01 & \num{0.16} & \num{0.07} &  & \num{16.9} & \num{0.35} & \num{3.35} & \num{11.7} &  & \num{0.47} & \num{0.48} \\
			\airlstateonlymethod{} & & \num{0.48} & <0.01 & \num{0.07} & \num{0.02} &  & \num{15.6} & \num{0.28} & \num{6.01} & \num{27.1} &  & \num{0.23} & \num{0.38} \\
			\airlstateactionmethod{} & & \num{0.38} & <0.01 & \num{0.13} & \num{0.03} &  & \num{18.2} & \num{0.22} & \num{2.47} & \num{12.6} &  & \num{0.45} & \num{0.23} \\
			\midrule
			\bettergoalmethod{} & & \num{103} & \num{50} & \num{80} & \num{83.3} &  & \num{131} & \num{85.4} & \num{56.4} & \num{19.7} &  & \num{0.54} & \num{0.55} \\
			\bottomrule
		\end{tabular}
		\label{tab:supp:learned-reward-pathological:relative-error}
	\end{subtable}
\end{table}

\begin{figure}
	\centering
	\vspace*{-2em}
	\begin{subfigure}{\textwidth}
		\includegraphics{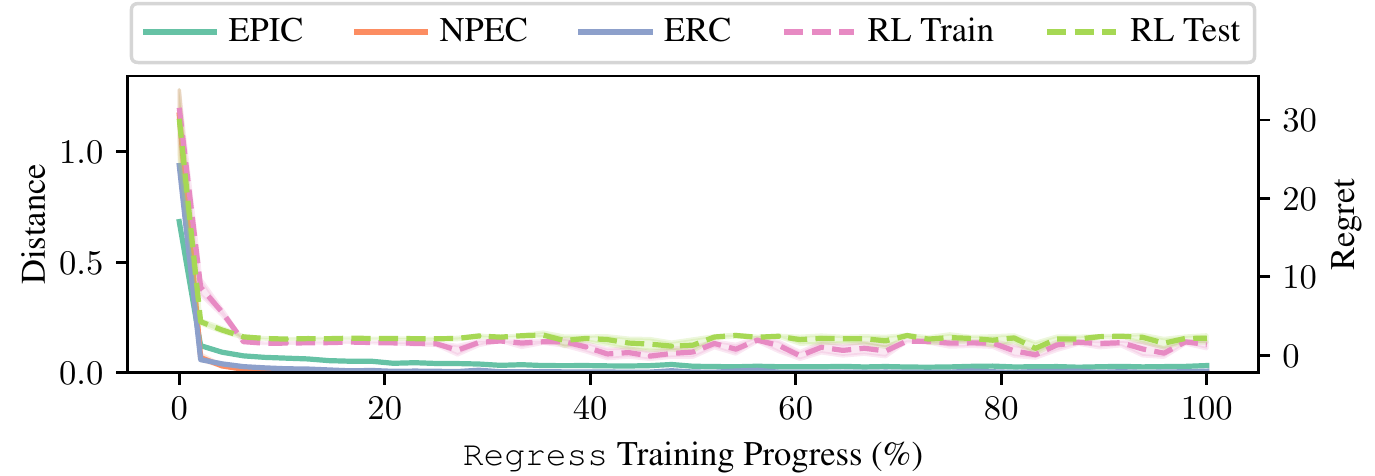}
		\caption{Comparisons of \regressionmethod{} using all distance algorithms.}
		\label{fig:supp:checkpoints-rewards:regress}
	\end{subfigure}
	\begin{subfigure}{\textwidth}
		\includegraphics{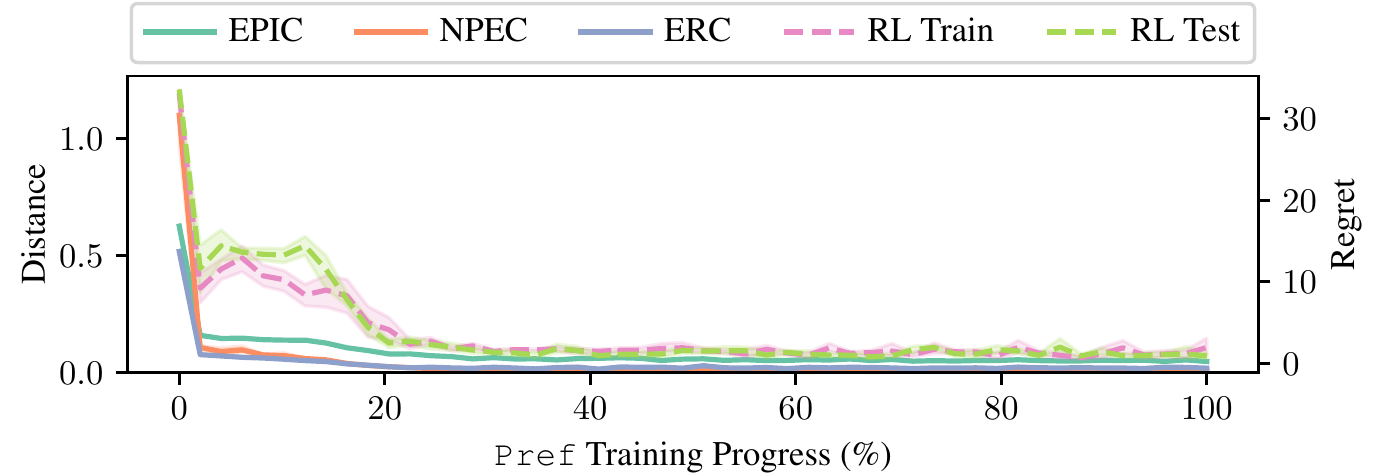}
		\caption{Comparisons of \preferencesmethod{} using all distance algorithms.}
		\label{fig:supp:checkpoints-rewards:prefs}
	\end{subfigure}
	\begin{subfigure}{\textwidth}
		\includegraphics{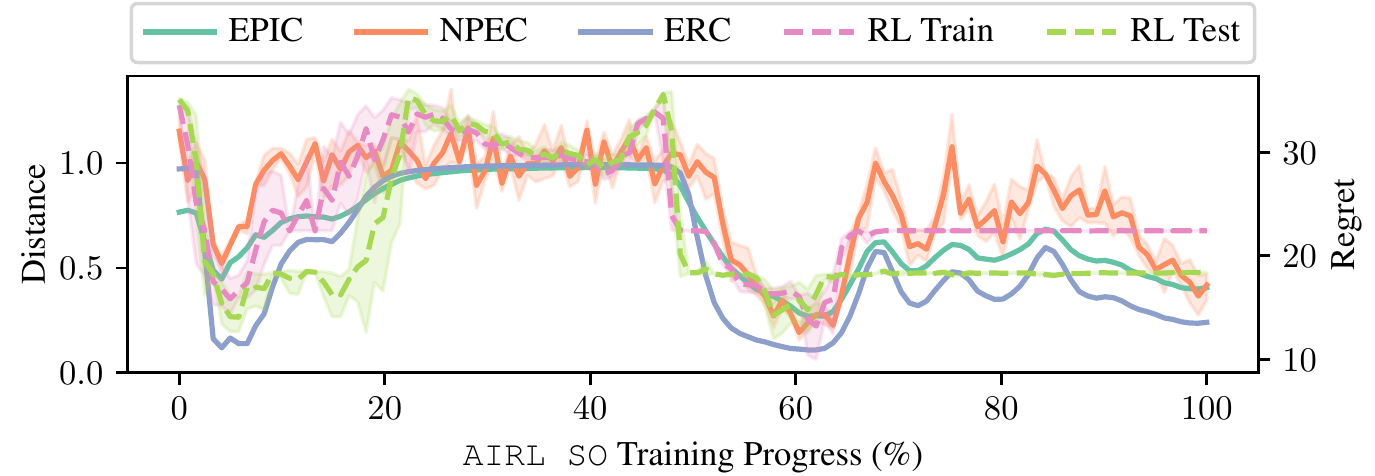}
		\caption{Comparisons of \airlstateonlymethod{} using all distance algorithms.}
		\label{fig:supp:checkpoints-rewards:airlso}
	\end{subfigure}
	\begin{subfigure}{\textwidth}
		\includegraphics{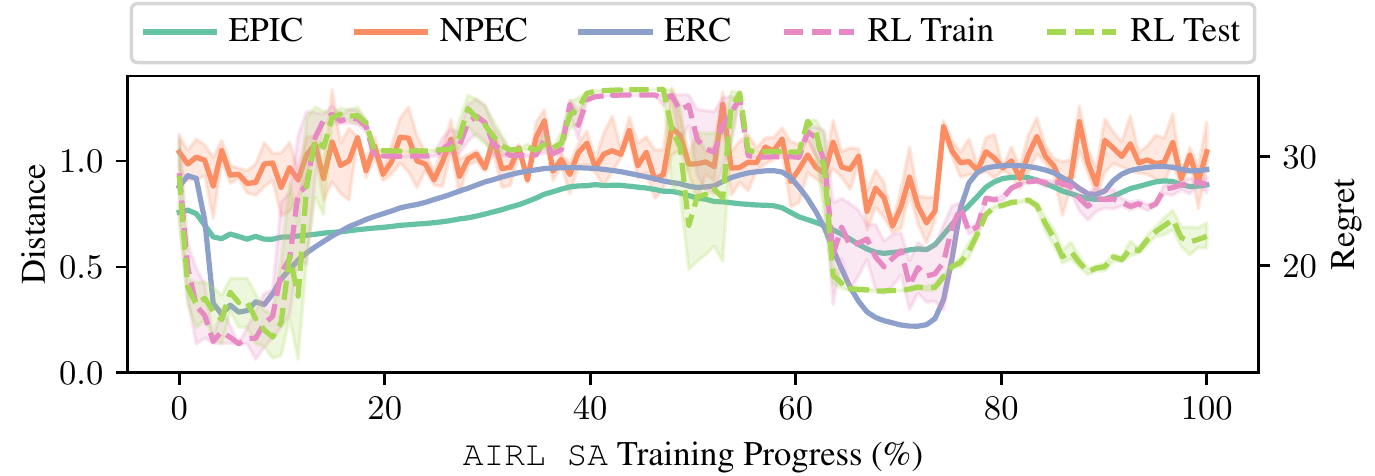}
		\caption{Comparisons of \airlstateactionmethod{} using all distance algorithms.}
		\label{fig:supp:checkpoints-rewards:airlsa}
	\end{subfigure}
	\caption{\checkpointscaption{}}
	\label{fig:supp:checkpoints-rewards}
\end{figure}

\begin{figure}
	\centering
	\vspace*{-3em}
	\begin{subfigure}{\textwidth}
		\includegraphics{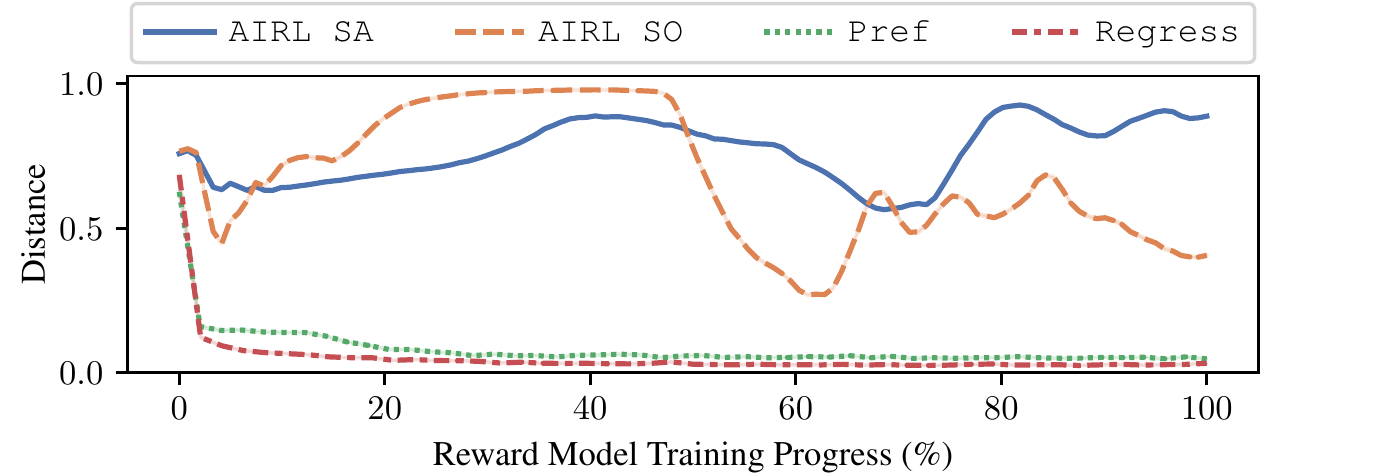}
		\caption{Comparisons using \canonicalizeddistanceabbrevonly{} on all reward models.}
		\label{fig:supp:checkpoints-algorithms:epic}
	\end{subfigure}
	\begin{subfigure}{\textwidth}
		\includegraphics{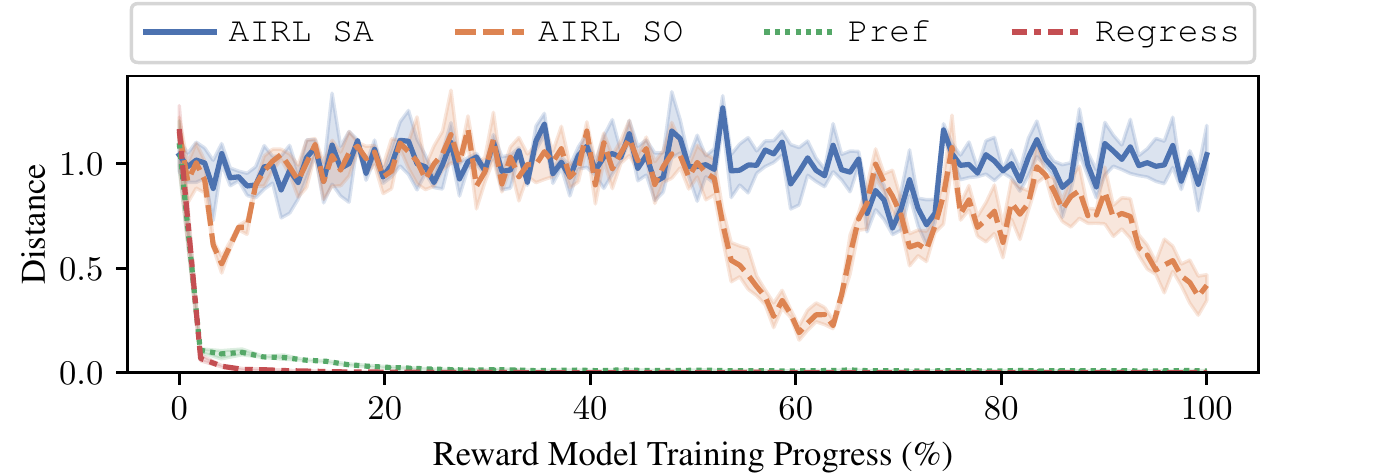}
		\caption{Comparisons using \nearestpointabbrevonly{} on all reward models.}
		\label{fig:supp:checkpoints-algorithms:npec}
	\end{subfigure}
	\begin{subfigure}{\textwidth}
		\includegraphics{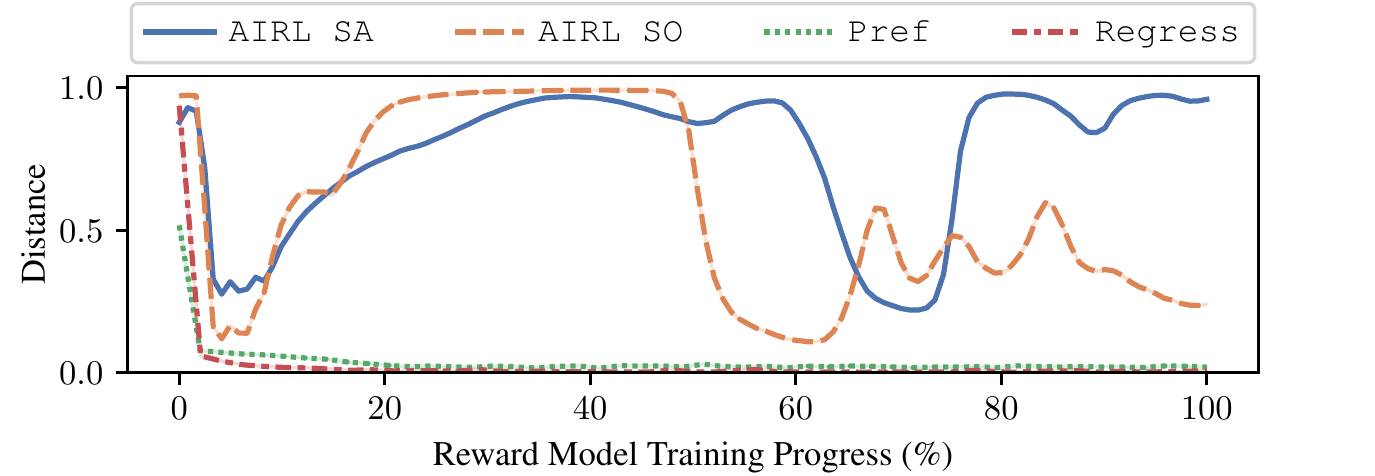}
		\caption{Comparisons using \returncorrelationabbrevonly{} on all reward models.}
		\label{fig:supp:checkpoints-algorithms:erc}
	\end{subfigure}
	\begin{subfigure}{\textwidth}
		\includegraphics{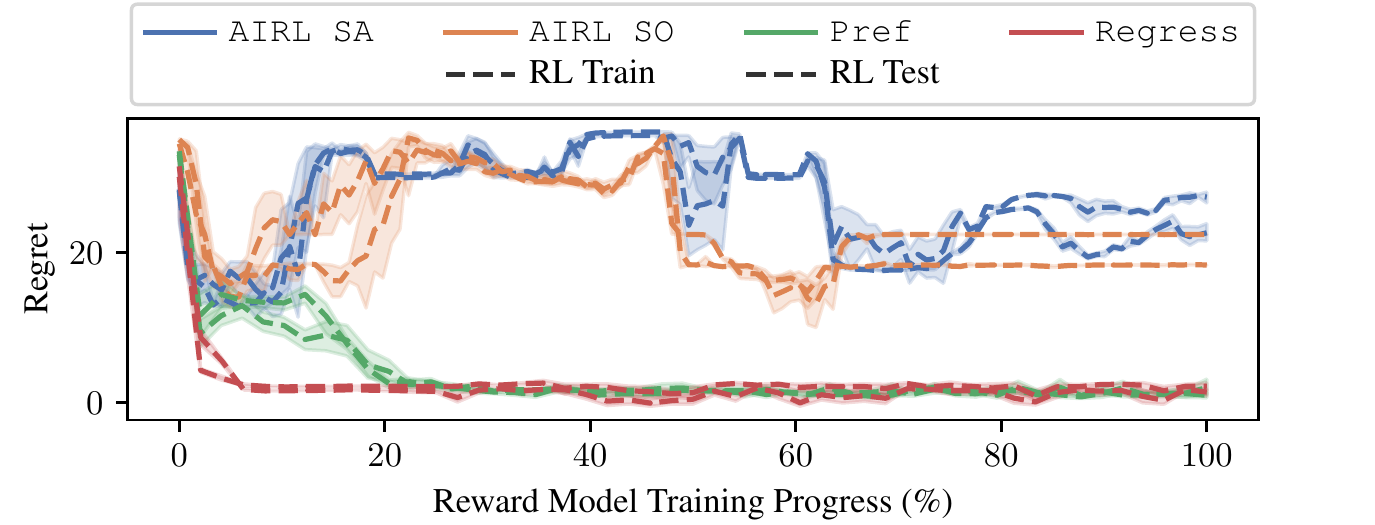}
		\caption{Comparisons using Episode Return on all reward models.}
		\label{fig:supp:checkpoints-algorithms:rl}
	\end{subfigure}
	\caption{\checkpointscaption{}}
	\label{fig:supp:checkpoints-algorithms}
\end{figure}

\clearpage
\subsection{Proofs}
\label{sec:supp:proofs}

\subsubsection{Background}
\label{sec:supp:proofs:background}

\rewardequivisequivalence*
\begin{proof}
	$\firstreward{} \equivreward \firstreward{}$ since choosing $\lambda = 1 > 0$ and $\potential(\state) = 0$, a bounded potential function, we have $\firstreward{}(\state,\action,\nextstate) = \lambda \firstreward{}(\state,\action,\nextstate) + \discount \potential(\nextstate) - \potential(\state)$ for all $\state,\nextstate \in \statespace$ and $a \in \actionspace$.
	
	Suppose $\firstreward{} \equivreward \secondreward{}$. Then there exists some $\lambda > 0$ and a bounded potential function $\potential:\statespace \to \mathbb{R}$ such that $\secondreward{}(\state,\action,\nextstate) = \lambda \firstreward{}(\state,\action,\nextstate) + \discount \potential(\nextstate) - \potential(\state)$ for all $\state,\nextstate \in \statespace$ and $a \in \actionspace$. Rearranging:
	\begin{equation}
	\firstreward{}(\state,\action,\nextstate) = \frac{1}{\lambda}\secondreward{}(\state,\action,\nextstate) + \discount \left(\frac{-1}{\lambda}\potential(\nextstate)\right) - \left(\frac{-1}{\lambda}\potential(\state)\right).
	\end{equation}
	Since $\frac{1}{\lambda} > 0$ and $\potential'(s) = \frac{-1}{\lambda}\potential(s)$ is a bounded potential function, it follows that $\secondreward{} \equivreward \firstreward{}$.
	
	Finally, suppose $\firstreward{} \equivreward \secondreward{}$ and $\secondreward{} \equivreward \thirdreward{}$. Then there exists some $\lambda_1, \lambda_2 > 0$ and bounded potential functions $\potential_1,\potential_2:\statespace \to \mathbb{R}$ such that for all $\state,\nextstate \in \statespace$ and $a \in \actionspace$:
	\begin{align}
	\secondreward{}(\state,\action,\nextstate) &= \lambda_1 \firstreward{}(\state,\action,\nextstate) + \discount \potential_1(\nextstate) - \potential_1(\state), \\
	\thirdreward{}(\state,\action,\nextstate) &= \lambda_2 \secondreward{}(\state,\action,\nextstate) + \discount \potential_2(\nextstate) - \potential_2(\state).
	\end{align}
	Substituting the expression for $\secondreward{}$ into the expression for $\thirdreward{}$:
	\begin{align}
	\thirdreward{}(\state,\action,\nextstate) &= \lambda_2\left(\lambda_1 \firstreward{}(\state,\action,\nextstate) + \discount \potential_1(\nextstate) - \potential_1(\state)\right) + \discount \potential_2(\nextstate) - \potential_2(\state) \\
	&= \lambda_1 \lambda_2 \firstreward{}(\state,\action,\nextstate) + \discount\left(\lambda_2 \potential_1(\nextstate) + \potential_2(\nextstate)\right) - \left(\lambda_2 \potential_1(\state) + \potential_2(\state)\right) \\
	&= \lambda \firstreward{}(\state,\action,\nextstate) + \discount\potential(\nextstate) - \potential(\state),
	\end{align}
	where $\lambda = \lambda_1 \lambda_2 > 0$ and $\potential(\state) = \lambda_2 \potential_1(\state) + \potential_2(\state)$ is bounded. Thus $\firstreward{} \equivreward \thirdreward{}$.
\end{proof}

\subsubsection{\canonicalizeddistancewithabbrev{}}
\label{sec:supp:proofs:canonicalizeddistance}

\canonicallyshapedrewardinvariant*
\begin{proof}
	Let $\state,\action,\nextstate \in \statespace \times \actionspace \times \statespace$. Then by substituting in the definition of $\reward{}'$ and using linearity of expectations:
	\begin{align}
	\canonical{\reward{}'}(\state,\action,\nextstate) &\triangleq \reward{}'(\state,\action,\nextstate) + \expectation\left[\discount\reward{}'(\nextstate, A, S') - \reward{}'(\state, A, S') - \discount \reward{}'(S, A, S')\right] \\
	&= \left(\reward{}(\state, \action, \nextstate) + \discount \potential(\nextstate) - \potential(\state)\right) \\
	&\quad+ \expectation\left[\discount\reward{}(\nextstate, A, S') + \discount^2 \potential(S') - \discount\potential(\nextstate)\right] \nonumber \\
	&\quad- \expectation\left[\reward{}(\state, A, S') + \discount \potential(S') - \potential(\state)\right] \nonumber \\
	&\quad- \expectation\left[\discount\reward{}(S, A, S') + \discount^2 \potential(S') - \discount\potential(S)\right] \nonumber \\
	&= \reward{}(\state, \action, \nextstate) + \expectation\left[\discount \reward{}(\nextstate, A, S') - \reward{}(\state, A, S') - \discount\reward{}(S, A, S') \right] \\
	&\quad+ \left(\discount \potential(\nextstate) - \potential(\state)\right) - \expectation\left[\discount \potential(\nextstate) - \potential(\state)\right] \nonumber \\
	&\quad+\expectation\left[\discount^2 \potential(S')  - \discount \potential(S')\right] - \expectation\left[\discount^2 \potential(S') - \discount \potential(S)\right] \nonumber \\
	&= \reward{}(\state, \action, \nextstate) + \expectation\left[\discount \reward{}(\nextstate, A, S') - \reward{}(\state, A, S') - \discount\reward{}(S, A, S') \right] \\
	&\triangleq \canonical{\reward}(\state,\action,\nextstate),
	\end{align}
	where the penultimate step uses $\expectation[\potential(S')] = \expectation[\potential(S)]$ since $S$ and $S'$ are identically distributed.
\end{proof}

\smoothnesscanonicalization*
\begin{proof}
Observe that:
\begin{equation}
\potential_{\statedistribution, \actiondistribution}(\reward)(\state, \action, \nextstate) = \expectation\left[\discount\reward{}(\nextstate, A, S') - \reward{}(\state, A, S') - \discount\reward{}(S, A, S')\right],
\end{equation}
where $S$ and $S'$ are random variables independently sampled from $\statedistribution$, and $A$ independently sampled from $\actiondistribution$.

Then:
\begin{equation}
\potential_{\statedistribution, \actiondistribution}(\reward + \nu) - \potential_{\statedistribution, \actiondistribution}(\reward) = \potential_{\statedistribution, \actiondistribution}(\nu).
\end{equation}
Now:
\begin{align}
\text{LHS} &\triangleq \left\Vert \potential_{\statedistribution, \actiondistribution}(\reward + \nu) - \potential_{\statedistribution, \actiondistribution}(\reward)\right\Vert_{\infty} \\
&= \max_{\state, \nextstate \in \statespace} \left|\expectation\left[\discount\nu(\nextstate, A, S') - \nu(\state, A, S') - \discount\nu(S, A, S')\right]\right| \\
&= \max_{\state, \nextstate \in \statespace} \left|\lambda\left(\discount\mathbb{I}[x = \nextstate]\actiondistribution(u)\statedistribution(x')\right.\right. \\
&- \left.\left.\mathbb{I}[x = \state]\actiondistribution(u)\statedistribution(x') - \discount\statedistribution(x)\actiondistribution(u)\statedistribution(x')\right)\right| \\
&= \max_{\state, \nextstate \in \statespace} \left|\lambda \actiondistribution(u)\statedistribution(x') \left(\discount\mathbb{I}[x = \nextstate] - \mathbb{I}[x = \state] - \discount\statedistribution(x)\right)\right| \\
&= \lambda \left(1 + \discount\statedistribution(x)\right) \actiondistribution(u)\statedistribution(x'),
\end{align}
where the final step follows by substituting $\state = x$ and $\nextstate \neq x$ (using $|\statespace| \geq 2$).
\end{proof}

\pearsondistanceproperties*
\begin{proof}
For a non-constant random variable $V$, define a standardized (zero mean and unit variance) version:
\begin{equation}
Z(V) = \frac{V - \expectation[V]}{\sqrt{\expectation{}\left[\left(V - \expectation[V]\right)^2\right]}}.
\end{equation}
The Pearson correlation coefficient on random variables $A$ and $B$ is equal to the expected product of these standardized random variables:
\begin{equation}
\rho(A,B) = \expectation\left[Z(A)Z(B)\right].
\end{equation}

Let $W$, $X$, $Y$ be random variables.

\textbf{Identity}. Have $\rho(X,X) = 1$, so $\pearsondistance{}(X,X) = 0$.

\textbf{Symmetry}. Have $\rho(X,Y) = \rho(Y,X)$ by commutativity of multiplication, so $\pearsondistance{}(X,Y) = \pearsondistance{}(Y,X)$.

\textbf{Triangle Inequality}. 
For any random variables $A,B$:
\begin{align}
\expectation\left[\left(Z(A) - Z(B)\right)^2\right] &= \expectation\left[Z(A)^2 - 2Z(A)Z(B) + Z(B)^2\right] \\
&= \expectation\left[Z(A)^2\right] + \expectation\left[Z(B)^2\right] - 2\expectation\left[Z(A)Z(B)\right] \\
&= 2 - 2\expectation\left[Z(A)Z(B)\right] \\
&= 2\left(1 - \rho(A,B)\right) \\
&= 4\pearsondistance{}(A,B)^2.
\end{align}
So:
\begin{align}
4\pearsondistance{}(W,Y)^2 &= \expectation\left[\left(Z(W) - Z(Y)\right)^2\right] \\
&= \expectation\left[\left(Z(W) - Z(X) + Z(X) - Z(Y)\right)^2\right] \\
&= \expectation\left[\left(Z(W) - Z(X)\right)^2\right] + \expectation\left[\left(Z(X) - Z(Y)\right)^2\right] \\
&\quad+ 2\expectation\left[\left(Z(W) - Z(X)\right)\left(Z(X) - Z(Y)\right)\right] \nonumber \\
&= 4\pearsondistance{}(W,X)^2 + 4\pearsondistance{}(X,Y)^2 + 8\expectation\left[\left(Z(W) - Z(X)\right)\left(Z(X) - Z(Y)\right)\right].
\end{align}
Since $\left<A,B\right> = \expectation[AB]$ is an inner product over $\mathbb{R}$, it follows by the Cauchy-Schwarz inequality that $\expectation[AB] \leq \sqrt{\expectation[A^2]\expectation[B^2]}$. So:
\begin{align}
\pearsondistance{}(W,Y)^2 &\leq \pearsondistance{}(W,X)^2 + \pearsondistance{}(X,Y)^2 + 2\pearsondistance{}(W,X)\pearsondistance{}(X,Y) \\
&= \left(\pearsondistance{}(W,X) + \pearsondistance{}(X,Y)\right)^2.
\end{align}
Taking the square root of both sides:
\begin{equation}
\pearsondistance{}(W,Y) \leq \pearsondistance{}(W,X) + \pearsondistance{}(X,Y),
\end{equation}
as required.

\textbf{Positive Affine Invariant and Bounded}
$\pearsondistance{}(aX + c, bY + d) = \pearsondistance{}(X, Y)$ is immediate from $\rho(X,Y)$ invariant to positive affine transformations.
Have $-1 \leq \rho(X,Y) \leq 1$, so $0 \leq 1 - \rho(X,Y) \leq 2$ thus $0 \leq \pearsondistance{}(X,Y) \leq 1$.
\end{proof}

\canonicalizeddistancepseudometric*
\begin{proof}
The result follows from $\pearsondistance{}$ being a pseudometric. Let $\firstreward{}$, $\secondreward{}$ and $\thirdreward{}$ be reward functions mapping from transitions $\statespace \times \actionspace \times \statespace$ to real numbers $\mathbb{R}$.

\textbf{Identity}. Have:
\begin{equation}
\canonicalizeddistance{}(\firstreward{},\firstreward{}) = \pearsondistance{}\left(\canonical{\firstreward{}}(S,A,S'),\canonical{\firstreward{}}(S,A,S')\right) = 0,
\end{equation}
since $\pearsondistance{}(X,X) = 0$.

\textbf{Symmetry}. Have:
\begin{align}
\canonicalizeddistance{}(\firstreward{},\secondreward{}) &= \pearsondistance{}\left(\canonical{\firstreward{}}(S,A,S'),\canonical{\secondreward{}}(S,A,S')\right) \\
&= \pearsondistance{}\left(\canonical{\secondreward{}}(S,A,S'),\canonical{\firstreward{}}(S,A,S')\right) \\
&= \canonicalizeddistance{}(\secondreward{},\firstreward{}),
\end{align}
since $\pearsondistance{}(X,Y) = \pearsondistance{}(Y,X)$.

\textbf{Triangle Inequality}. Have:
\begin{align}
\canonicalizeddistance{}(\firstreward{},\thirdreward{}) &= \pearsondistance{}\left(\canonical{\firstreward{}}(S,A,S'),\canonical{\thirdreward{}}(S,A,S')\right) \\
&\leq \pearsondistance{}\left(\canonical{\firstreward{}}(S,A,S'),\canonical{\secondreward{}}(S,A,S')\right) \\
&+ \pearsondistance{}\left(\canonical{\secondreward{}}(S,A,S'),\canonical{\thirdreward{}}(S,A,S')\right) \\
&= \canonicalizeddistance{}(\firstreward{},\secondreward{}) + \canonicalizeddistance{}(\secondreward{},\thirdreward{}),
\end{align}
since $\pearsondistance{}(X,Z) \leq \pearsondistance{}(X,Y) + \pearsondistance{}(Y,Z)$.
\end{proof}

\canonicalizeddistanceproperties*
\begin{proof}
Since $\canonicalizeddistance{}$ is defined in terms of $\pearsondistance{}$, the bounds $0 \leq \canonicalizeddistance{}(\firstreward', \secondreward')$ and $\canonicalizeddistance{}(\firstreward{}, \secondreward{}) \leq 1$ are immediate from the bounds in lemma~\ref{lemma:pearson-distance-properties}.

Since $\firstreward' \equivreward \firstreward{}$ and $\secondreward' \equivreward \secondreward{}$, we can write for $X \in \{A, B\}$:
\begin{align}
\reward^{\lambda}_X(\state, \action, \nextstate) &= \lambda_X \reward_X(\state, \action, \nextstate), \\
\reward'_X(\state, \action, \nextstate) &= \reward^{\lambda}_X(\state, \action, \nextstate) + \discount \potential_X(\nextstate) - \potential_X(\state),
\end{align}
for some scaling factor $\lambda_X > 0$ and potential function $\potential{}_X:\statespace \to \mathbb{R}$.

By proposition~\ref{prop:canonically-shaped-reward-invariant-shaping}:
\begin{equation}
\label{eq:canonicalized-distance-properties:canonical-potential-invariant}
\canonical{\reward'_X} = \canonical{\reward^{\lambda}_X}.
\end{equation}
Moreover, since $\canonical{\reward}$ is defined as an expectation over $\reward$ and expectations are linear:
\begin{equation}
\label{eq:canonicalized-distance-properties:canonical-linear}
\canonical{\reward^{\lambda}_X} = \lambda_X \canonical{\reward_X}.
\end{equation}

Unrolling the definition of $\canonicalizeddistance{}$ and applying this result gives:
\begin{align}
\canonicalizeddistance{}(\firstreward', \secondreward') &= \pearsondistance{}\left(\canonical{\firstreward'}(S,A,S'),\canonical{\secondreward'}(S,A,S')\right) \\
&= \pearsondistance{}\left(\lambda_A \canonical{\firstreward}(S,A,S'), \lambda_B \canonical{\secondreward}(S,A,S')\right) & \text{eqs.~\ref{eq:canonicalized-distance-properties:canonical-potential-invariant} and~\ref{eq:canonicalized-distance-properties:canonical-linear}} \nonumber \\
&= \pearsondistance{}\left(\canonical{\firstreward}(S,A,S'), \canonical{\secondreward}(S,A,S')\right) & \text{lemma~\ref{lemma:pearson-distance-properties}} \nonumber \\
&= \canonicalizeddistance{}(\firstreward, \secondreward). \nonumber \qedhere
\end{align}

\end{proof}

\subsubsection{\nearestpointlongnamewithabbrev{}}
\label{sec:supp:proofs:nearestpoint}

\begin{prop}
\label{prop:direct-metric}
\begin{enumerate*}[label=(\arabic*)]
\item $\directdistance{}$ is a metric in $L^p$ space, where functions $f$ and $g$ are identified if they agree almost everywhere on $\transitiondataset{}$.
\item $\directdistance{}$ is a pseudometric if functions are identified only if they agree at all points.
\end{enumerate*}
\end{prop}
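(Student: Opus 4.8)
The plan is to recognize that $\directdistance(\firstreward{}, \secondreward{})$ is exactly the $L^p(\transitiondataset{})$ norm $\lVert \firstreward{} - \secondreward{} \rVert_{L^p(\transitiondataset{})}$ of the difference of the two reward functions, viewed as functions on $\statespace \times \actionspace \times \statespace$ equipped with the probability measure $\transitiondataset{}$; once this is observed, every metric axiom reduces to a standard property of $L^p$ norms. First I would record non-negativity and finiteness: restricting to bounded reward functions (as elsewhere in the paper), the integrand $\lvert \firstreward{} - \secondreward{} \rvert^p$ is non-negative and integrable against the probability measure $\transitiondataset{}$, so $\directdistance{}(\firstreward{}, \secondreward{}) \in [0, \infty)$ is well-defined. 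Symmetry is immediate from $\lvert x - y \rvert^p = \lvert y - x \rvert^p$ pointwise, giving $\directdistance{}(\firstreward{}, \secondreward{}) = \directdistance{}(\secondreward{}, \firstreward{})$.

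Next I would obtain the triangle inequality from Minkowski's inequality for $L^p$, which is valid for $p \geq 1$ as assumed: writing $\firstreward{} - \thirdreward{} = (\firstreward{} - \secondreward{}) + (\secondreward{} - \thirdreward{})$ and applying $\lVert u + v \rVert_{L^p} \leq \lVert u \rVert_{L^p} + \lVert v \rVert_{L^p}$ with $u = \firstreward{} - \secondreward{}$ and $v = \secondreward{} - \thirdreward{}$ yields $\directdistance{}(\firstreward{}, \thirdreward{}) \leq \directdistance{}(\firstreward{}, \secondreward{}) + \directdistance{}(\secondreward{}, \thirdreward{})$. Together with the previous paragraph this shows $\directdistance{}$ is a pseudometric regardless of the identification convention, so the only remaining work is the identity of indiscernibles.

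For that, the key observation is that $\directdistance{}(\firstreward{}, \secondreward{}) = 0$ holds if and only if $\expectation_{s,a,s' \sim \transitiondataset{}}[\lvert \firstreward{}(s,a,s') - \secondreward{}(s,a,s') \rvert^p] = 0$, and for a non-negative integrand this is equivalent to $\firstreward{} = \secondreward{}$ holding $\transitiondataset{}$-almost everywhere. Under convention (1), where $f$ and $g$ are identified precisely when they agree $\transitiondataset{}$-a.e., this reads $\directdistance{}(\firstreward{}, \secondreward{}) = 0 \iff \firstreward{} = \secondreward{}$ as elements of the quotient, so $\directdistance{}$ is a genuine metric. Under convention (2), where equality means agreement at every triple in $\statespace \times \actionspace \times \statespace$, two rewards that differ only on a $\transitiondataset{}$-null set (for instance on a state-action transition outside the support of $\transitiondataset{}$) have $\directdistance{} = 0$ while being unequal, so the identity of indiscernibles fails and $\directdistance{}$ is only a pseudometric, with the other axioms still holding by the argument above.

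The only non-routine point is pinning down the function space so that the defining expectation is finite and Minkowski applies; I would dispose of this by noting that bounded reward functions lie in $L^p(\transitiondataset{})$ automatically because $\transitiondataset{}$ is a probability measure, and that the ``agree almost everywhere'' relation is exactly the standard equivalence that turns the $L^p$ seminorm into a norm — which is precisely the distinction between the two parts of the proposition.
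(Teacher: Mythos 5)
Your proof is correct and follows essentially the same route as the paper's: the paper simply cites that the $L^p$ seminorm induces a metric on the quotient by $\transitiondataset{}$-a.e.\ equality (and only a pseudometric under pointwise identification), while you unpack the same facts explicitly via Minkowski's inequality, pointwise symmetry, and the vanishing of a non-negative integrand. No gaps; your treatment of the two identification conventions matches the paper's intent exactly.
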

\begin{proof}
\begin{enumerate*}[label=(\arabic*)]
\item $\directdistance{}$ is a metric in the $L^p$ space since $L^p$ is a norm in the $L^p$ space, and $d(x,y) = \left\lVert x - y \right\rVert$ is always a metric.
\item As $f = g$ at all points implies $f = g$ almost everywhere, certainly $\directdistance{}(\reward{}, \reward{}) = 0$.
	  Symmetry and triangle inequality do not depend on identity so still hold.
\end{enumerate*}
\end{proof}

\begin{prop}[Properties of $\nearestpoint^U$]
\label{prop:unnormalized-nearest-point-properties}
Let $\firstreward{}, \secondreward{}, \thirdreward:\statespace \times \actionspace \times \statespace \to \mathbb{R}$ be bounded reward functions, and $\lambda \geq 0$.
Then \emph{$\nearestpoint^U$}:
\begin{itemize}
	\item \textbf{Is invariant under $\equivreward$ in source}:\\ $\nearestpoint^U(\firstreward{}, \thirdreward{}) = \nearestpoint^U(\secondreward{}, \thirdreward{})$ if $\firstreward{} \equivreward{} \secondreward{}$.
	\item \textbf{Invariant under scale-preserving $\equivreward$ in target}:\\ $\nearestpoint^U(\firstreward{}, \secondreward{}) = \nearestpoint^U(\firstreward{}, \thirdreward{})$ if $\secondreward{} - \thirdreward{} \equivreward{} \zeroreward{}$.
	\item \textbf{Scalable in target}:\\ $\nearestpoint^U(\firstreward{}, \lambda \secondreward{}) = \lambda \nearestpoint^U(\firstreward{}, \secondreward{})$.
	\item \textbf{Bounded}:\\ $\nearestpoint^U(\firstreward{}, \secondreward{}) \leq \nearestpoint^U(\zeroreward{}, \secondreward{})$.
\end{itemize}
\end{prop}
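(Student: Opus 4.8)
The plan is to deduce all four claims from three structural facts about $\directdistance$ together with the explicit description of the $\equivreward$-classes. First I would record the bookkeeping: (i) $\directdistance(f,g) = \lVert f - g \rVert$ for the $L^p(\transitiondataset)$ norm, so $\directdistance$ is translation invariant ($\directdistance(f+h, g+h) = \directdistance(f,g)$), positively homogeneous ($\directdistance(\lambda f, \lambda g) = \lambda\,\directdistance(f,g)$ for $\lambda \ge 0$), and satisfies the triangle inequality; (ii) since $\firstreward$ is bounded, $\lVert \firstreward \rVert < \infty$; (iii) $R \equivreward \zeroreward$ holds exactly when $R(\state,\action,\nextstate) = \discount\potential(\nextstate) - \potential(\state)$ for some bounded $\potential$, i.e.\ the class of $\zeroreward$ is precisely the set of pure potential shapings; and (iv) $\equivreward$ is an equivalence relation (the earlier Proposition), so equivalent rewards generate identical classes, and adding a pure shaping or multiplying by a positive constant maps any class bijectively onto itself.

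Source invariance is then immediate: if $\firstreward \equivreward \secondreward$ then $\{R : R \equivreward \firstreward\} = \{R : R \equivreward \secondreward\}$, so the infima defining $\nearestpoint^{U}(\firstreward,\thirdreward)$ and $\nearestpoint^{U}(\secondreward,\thirdreward)$ range over the same set. For invariance under a scale-preserving $\equivreward$ in the target, write $\secondreward - \thirdreward \equivreward \zeroreward$ as $\secondreward = \thirdreward + \psi$ with $\psi(\state,\action,\nextstate) = \discount\potential(\nextstate) - \potential(\state)$ by fact (iii); the map $R \mapsto R - \psi$ is a bijection of $\{R : R \equivreward \firstreward\}$ onto itself by fact (iv) (its inverse adds $\psi$), and translation invariance gives $\directdistance(R,\secondreward) = \directdistance(R-\psi,\thirdreward)$, so $\nearestpoint^{U}(\firstreward,\secondreward) = \nearestpoint^{U}(\firstreward,\thirdreward)$.

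For scalability in the target, fix $\lambda > 0$ and use that $R \mapsto \lambda^{-1}R$ is a bijection of $\{R : R \equivreward \firstreward\}$ onto itself (positive rescalings preserve $\equivreward$); homogeneity gives $\directdistance(R,\lambda\secondreward) = \lambda\,\directdistance(\lambda^{-1}R,\secondreward)$, and taking the infimum over $R$ yields $\nearestpoint^{U}(\firstreward,\lambda\secondreward) = \lambda\,\nearestpoint^{U}(\firstreward,\secondreward)$. The case $\lambda = 0$ reduces to showing $\nearestpoint^{U}(\firstreward,\zeroreward) = 0$, which holds because $\varepsilon\firstreward \equivreward \firstreward$ for every $\varepsilon > 0$ and $\directdistance(\varepsilon\firstreward,\zeroreward) = \varepsilon\lVert \firstreward \rVert \to 0$; together with finiteness of all quantities this matches $0 \cdot \nearestpoint^{U}(\firstreward,\secondreward)$.

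The boundedness claim is the step I expect to require the most care, since it compares infima over two distinct classes. For any $\psi$ with $\psi \equivreward \zeroreward$, the reward $\varepsilon\firstreward + \psi$ lies in $\{R : R \equivreward \firstreward\}$ for every $\varepsilon > 0$, so by the triangle inequality and homogeneity $\nearestpoint^{U}(\firstreward,\secondreward) \le \directdistance(\varepsilon\firstreward + \psi, \secondreward) \le \directdistance(\psi,\secondreward) + \varepsilon\lVert \firstreward \rVert$; letting $\varepsilon \to 0$ gives $\nearestpoint^{U}(\firstreward,\secondreward) \le \directdistance(\psi,\secondreward)$ for all such $\psi$, and taking the infimum over $\psi$ (equivalently, over $\{R : R \equivreward \zeroreward\}$) yields $\nearestpoint^{U}(\firstreward,\secondreward) \le \nearestpoint^{U}(\zeroreward,\secondreward)$. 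Conceptually this is the statement that the class of $\zeroreward$ lies in the $L^p$-closure of the class of $\firstreward$ and $\directdistance(\cdot,\secondreward)$ is continuous; the only real subtlety is ordering the $\varepsilon$-limit and the infimum correctly and invoking finiteness of $\lVert \firstreward \rVert$.
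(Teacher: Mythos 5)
Your proposal is correct and follows essentially the same route as the paper's proof: source invariance from the equivalence classes coinciding, target invariance via the shaping-translation bijection of the class composed with translation invariance of $\directdistance$, target scalability via a rescaling bijection plus positive homogeneity, and boundedness by approximating $\secondreward$'s nearest shaping with elements $\varepsilon\firstreward + \psi$ of $\firstreward$'s class together with the triangle inequality and boundedness of $\firstreward$. The only cosmetic differences are that you handle $\lambda = 0$ directly via $\directdistance(\varepsilon\firstreward, \zeroreward) \to 0$ rather than the paper's self-similarity identity $\nearestpoint^U(\firstreward,\zeroreward) = \tfrac{1}{2}\nearestpoint^U(\firstreward,\zeroreward)$, and you take the $\varepsilon$-limit before the infimum over shapings in the boundedness step, both of which are valid.
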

\begin{proof} We will show each case in turn.

\textbf{Invariance under $\equivreward$ in source}

If $\firstreward{} \equivreward \secondreward{}$, then:
\begin{align}
\nearestpoint^U(\firstreward{}, \thirdreward{}) &\triangleq \inf_{\reward \equivreward{} \firstreward{}} \directdistance(\reward{}, \thirdreward{}) \\
&= \inf_{\reward{} \equivreward{} \secondreward{}} \directdistance(\reward{}, \thirdreward{}) \\
&\triangleq \nearestpoint^U(\secondreward{}, \thirdreward{}), \\
\end{align}
since $\reward{} \equivreward \firstreward{}$ if and only if $\reward{} \equivreward \secondreward{}$ as $\equivreward$ is an equivalence relation.

\textbf{Invariance under scale-preserving $\equivreward$ in target}

If $\secondreward{} - \thirdreward{} \equivreward \zeroreward{}$, then we can write $\secondreward{}(s,a,s') - \thirdreward{}(s,a,s') = \gamma \potential(s') - \potential(s)$ for some potential function $\potential : \statespace \to \mathbb{R}$. Define $f(\reward{})(s,a,s') = \reward{}(s,a,s') - \gamma \potential(s') + \potential(s)$. Then for any reward function $\reward{} : \statespace \times \actionspace \times \statespace \to \mathbb{R}$:
\begin{align}
\directdistance(\reward{}, \secondreward{}) &\triangleq \left(\underset{s,a,s' \sim \transitiondataset{}}{\expectation{}}\left[\left\lvert \reward{}(s,a,s') - \secondreward{}(s,a,s')\right\rvert^p\right]\right)^{1/p} \nonumber \\
&= \left(\underset{s,a,s' \sim \transitiondataset{}}{\expectation{}} \left[\left\lvert \reward{}(s,a,s') - \left(\thirdreward{}(s,a,s') + \gamma \potential(s') - \potential(s)\right)\right\rvert^p\right]\right)^{1/p} \nonumber \\
&= \left(\underset{s,a,s' \sim \transitiondataset{}}{\expectation{}} \left[\left\lvert \left(\reward{}(s,a,s') - \gamma \potential(s') + \potential(s)\right) - \thirdreward{}(s,a,s') \right\rvert^p\right]\right)^{1/p} \nonumber \\
&= \left(\underset{s,a,s' \sim \transitiondataset{}}{\expectation{}} \left[\left\lvert f(\reward{})(s,a,s') - \thirdreward{}(s,a,s') \right\rvert^p\right]\right)^{1/p} \nonumber \\
&\triangleq \directdistance(f(\reward{}), \thirdreward{}) \label{eq:divergence-invariant-target:norm-swap},
\end{align}
Crucially, note $f(\reward{})$ is a bijection on the equivalence class $[\reward{}]$. Now, substituting this into the expression for the \nearestpointabbrev{}:
\begin{align}
\nearestpoint^U(\firstreward{}, \secondreward{}) &\triangleq \inf_{\reward{} \equivreward{} \firstreward{}} \directdistance(\reward{}, \secondreward{}) \nonumber \\
&= \inf_{\reward{} \equivreward{} \firstreward{}} \directdistance(f(\reward{}), \thirdreward{}) & \text{eq.~\ref{eq:divergence-invariant-target:norm-swap}} \nonumber \\
&= \inf_{f(\reward{}) \equivreward{} \firstreward{}} \directdistance(f(R), \thirdreward{}) & f\text{ bijection on }[R] \nonumber \\
&= \inf_{\reward{} \equivreward{} \firstreward{}} \directdistance(\reward{}, \thirdreward{}) & f\text{ bijection on }[R] \nonumber \\
&\triangleq \nearestpoint^U(\firstreward{}, \thirdreward{}).
\end{align}

\textbf{Scalable in target}

First, note that $\directdistance{}$ is absolutely scalable in both arguments:
\begin{align}
\directdistance(\lambda \firstreward{}, \lambda \secondreward{}) &\triangleq \left(\underset{s,a,s' \sim \transitiondataset{}}{\expectation{}}\left[\left\lvert \lambda \firstreward{}(s,a,s') - \lambda \secondreward{}(s,a,s')\right\rvert^p\right]\right)^{1/p} \nonumber \\
&= \left(\underset{s,a,s' \sim \transitiondataset{}}{\expectation{}}\left[\left\lvert \lambda \right\rvert^{p} \left\lvert \firstreward{}(s,a,s') - \secondreward{}(s,a,s')\right\rvert^p\right]\right)^{1/p} \nonumber & \lvert \cdot \rvert \text{ absolutely scalable} \\
&= \left(\left\lvert \lambda \right\rvert^{p} \underset{s,a,s' \sim \transitiondataset{}}{\expectation{}}\left[\left\lvert \firstreward{}(s,a,s') - \secondreward{}(s,a,s')\right\rvert^p\right]\right)^{1/p} \nonumber & \expectation{} \text{ linear} \\
&= \left\lvert \lambda \right\rvert \left(\underset{s,a,s' \sim \transitiondataset{}}{\expectation{}}\left[\left\lvert \firstreward{}(s,a,s') - \secondreward{}(s,a,s')\right\rvert^p\right]\right)^{1/p} \nonumber \\
&\triangleq \left\lvert \lambda \right\rvert \directdistance(\firstreward{}, \secondreward{}). \label{eq:lp-absolutely-scalable}
\end{align}
Now, for $\lambda > 0$, applying this to $\nearestpoint^U$:
\begin{align}
\nearestpoint^U(\firstreward{}, \lambda \secondreward{}) &\triangleq \inf_{\reward{} \equivreward{} \firstreward{}} \directdistance(\reward{}, \lambda \secondreward{}) \\
&= \inf_{\reward{} \equivreward{} \firstreward{}} \directdistance(\lambda \reward{}, \lambda \secondreward{}) & \reward{} \equivreward{} \lambda \reward{} \\
&= \inf_{\reward{} \equivreward{} \firstreward{}} \lambda \directdistance(\reward{}, \secondreward{}) & \text{eq.~\ref{eq:lp-absolutely-scalable}} \\
&= \lambda \inf_{\reward{} \equivreward{} \firstreward{}} \directdistance(\reward{}, \secondreward{}) \\
&\triangleq \lambda \nearestpoint^U(\firstreward{}, \secondreward{}).
\end{align}
In the case $\lambda = 0$, then:
\begin{align}
\nearestpoint^U(\firstreward{}, \zeroreward{}) &\triangleq \inf_{\reward{} \equivreward{} \firstreward{}} \directdistance(\reward{}, \zeroreward{}) \\
&= \inf_{\reward{} \equivreward{} \firstreward{}} \directdistance\left(\frac{1}{2} \reward{}, \zeroreward{}\right) & \reward{} \equivreward{} \frac{1}{2} \reward{} \\
&= \inf_{\reward{} \equivreward{} \firstreward{}}\frac{1}{2} \directdistance(\reward{}, \zeroreward{}) \\
&= \frac{1}{2}\inf_{\reward{} \equivreward{} \firstreward{}} \directdistance(\reward{}, \zeroreward{}) \\
&= \frac{1}{2}\nearestpoint^U(\firstreward{}, \zeroreward{}).
\end{align}
Rearranging, we have:
\begin{equation}
\nearestpoint^U(\firstreward{}, \zeroreward{}) = 0.
\end{equation}

\textbf{Bounded}

Let $d \triangleq \nearestpoint(\zeroreward{}, \secondreward)$.
Then for any $\epsilon > 0$, there exists some potential function $\potential\::\:\statespace \to \mathbb{R}$ such that the \directdistancename{} distance of the potential shaping $\reward{}(\state,\action,\nextstate) \triangleq \gamma \potential(\state) - \potential(\state)$ from $\secondreward{}$ satisfies:
\begin{equation}
\label{eq:divergence-bounded:direct-first}
\directdistance(\reward{}, \secondreward) \leq d + \epsilon.
\end{equation}

Let $\lambda \in [0,1]$.
Define:
\begin{equation}
\reward{}'_{\lambda}(\state,\action,\nextstate) \triangleq \lambda \firstreward(\state,\action,\nextstate) + \reward{}(\state,\action,\nextstate).
\end{equation}
Now:
\begin{align}
\directdistance{}(\reward{}'_\lambda, \reward) &\triangleq \left(\underset{s,a,s' \sim \transitiondataset{}}{\expectation{}} \left[\left\lvert\reward'_{\lambda}(s,a,s') -  \reward{}(s,a,s')\right\rvert^p\right]\right)^{1/p} \\
&= \left(\underset{s,a,s' \sim \transitiondataset{}}{\expectation{}} \left[\left\lvert \lambda \firstreward{}(s,a,s')\right\rvert^p\right]\right)^{1/p} \\
&= \left(\lvert \lambda \rvert^p \underset{s,a,s' \sim \transitiondataset{}}{\expectation{}} \left[\left\lvert \firstreward{}(s,a,s')\right\rvert^p\right]\right)^{1/p} \\
&= \lvert \lambda \rvert \left(\underset{s,a,s' \sim \transitiondataset{}}{\expectation{}} \left[\left\lvert \firstreward{}(s,a,s')\right\rvert^p\right]\right)^{1/p} \\
&= \lvert \lambda \rvert \directdistance{}(\firstreward{}, \zeroreward{}).
\end{align}
Since $\firstreward{}$ is bounded, $\directdistance{}(\firstreward{}, \zeroreward{})$ must be finite, so:
\begin{equation}
\lim_{\lambda \to 0^{+}} \directdistance{}(\reward{}'_\lambda, \reward) = 0.
\end{equation}

It follows that for any $\epsilon > 0$ there exists some $\lambda > 0$ such that:
\begin{equation}
\label{eq:divergence-bounded:direct-second}
\directdistance(\reward{}'_{\lambda}, \reward) \leq \epsilon.
\end{equation}

Note that $\firstreward \equivreward R'_{\lambda}$ for all $\lambda > 0$. So:
\begin{align}
\nearestpoint(\firstreward, \secondreward) &\leq \directdistance(R'_{\lambda}, \secondreward) \\
&\leq \directdistance(R'_{\lambda}, R) + \directdistance(R, \secondreward) & \text{prop.~\ref{prop:direct-metric}} \\
&\leq \epsilon + (d + \epsilon) & \text{eq.~\ref{eq:divergence-bounded:direct-first} and eq.~\ref{eq:divergence-bounded:direct-second}} \\
&= d + 2\epsilon.
\end{align}

Since $\epsilon > 0$ can be made arbitrarily small, it follows:
\begin{equation}
\nearestpoint(\firstreward, \secondreward) \leq d \triangleq \nearestpoint(\zeroreward{}, \secondreward),
\end{equation}
completing the proof.
\end{proof}

\nearestpointproperties*
\begin{proof}
We will first prove $\nearestpoint{}$ is a premetric, and then prove it is invariant and bounded.

\textbf{Premetric}

First, we will show that $\nearestpoint{}$ is a premetric.

\emph{Respects identity: $\nearestpoint{}(\firstreward{}, \firstreward{}) = 0$}

If $\nearestpoint^U(\zeroreward{}, \firstreward{}) = 0$ then $\nearestpoint{}(\firstreward{}, \firstreward{}) = 0$ as required.
Suppose from now on that $\nearestpoint^U{}(\firstreward{}, \firstreward{}) \neq 0$.
It follows from prop~\ref{prop:direct-metric} that $\directdistance(\firstreward{}, \firstreward{}) = 0$.
Since $X \equivreward{} X$, $0$ is an upper bound for $\nearestpoint^U(\firstreward{}, \firstreward{})$.
By prop~\ref{prop:direct-metric} $\directdistance$ is non-negative, so this is also a lower bound for $\nearestpoint^U(\firstreward{}, \firstreward{})$.
So $\nearestpoint^U(\firstreward{}, \firstreward{}) = 0$ and:
\begin{equation}
\nearestpoint{}(\firstreward{}, \firstreward{}) = \frac{\nearestpoint^U(\firstreward{}, \firstreward{})}{\nearestpoint^U(\zeroreward{}, \firstreward{})} = \frac{0}{\nearestpoint^U(\zeroreward{}, \firstreward{})} = 0.
\end{equation}

\emph{Well-defined: $\nearestpoint{}(\firstreward{}, \secondreward{}) \geq 0$}

By prop~\ref{prop:direct-metric}, it follows that $\directdistance(R,\secondreward{}) \geq 0$ for all reward functions $R:\statespace \times \actionspace \times \statespace$.
Thus $0$ is a lower bound for $\{\directdistance(R,\secondreward{}) \mid R:\statespace \times \actionspace \times \statespace\}$, and thus certainly a lower bound for $\{\directdistance(R,Y) \mid R \equivreward{} X\}$ for any reward function $X$.
Since the infimum is the \emph{greatest} lower bound, it follows that for any reward function $X$:
\begin{equation}
\nearestpoint^U(X, \secondreward{}) \triangleq \inf_{R \equivreward{} X} \directdistance(R, \secondreward{}) \geq 0.
\end{equation}

In the case that $\nearestpoint^U(\zeroreward{}, \secondreward{}) = 0$, then $\nearestpoint(\firstreward{}, \secondreward{}) = 0$ which is non-negative.
From now on, suppose that $\nearestpoint^U(\zeroreward{}, \secondreward{}) \neq 0$.
The quotient of a non-negative value with a positive value is non-negative, so:
\begin{equation}
\nearestpoint(\firstreward{}, \secondreward{}) = \frac{\nearestpoint^U(\firstreward{}, \secondreward{})}{\nearestpoint^U(\zeroreward{}, \secondreward{})} \geq 0.
\end{equation}

\textbf{Invariant and Bounded}

Since $\secondreward' \equivreward \secondreward$, we have $\secondreward' - \lambda \secondreward \equivreward \zeroreward{}$ for some $\lambda > 0$.
By proposition~\ref{prop:unnormalized-nearest-point-properties},  $\nearestpoint^U$ is invariant under scale-preserving $\equivreward$ in target and scalable in target.
That is, for any reward $\reward{}$:
\begin{equation}
\label{eq:nearest-point:target-reward-rescale}
\nearestpoint^U(\reward{}, \secondreward') = \nearestpoint^U(\reward{}, \lambda \secondreward{}) = \lambda \nearestpoint^U(\reward{}, \secondreward{}).
\end{equation}

In particular, $\nearestpoint^U(\zeroreward{}, \secondreward{}') = \lambda \nearestpoint^U(\zeroreward{}, \secondreward{})$. As $\lambda > 0$, it follows that $\nearestpoint^U(\zeroreward{}, \secondreward{}') = 0 \iff \nearestpoint^U(\zeroreward{}, \secondreward{}) = 0$.

Suppose $\nearestpoint^U(\zeroreward{}, \secondreward{}) = 0$. Then $\nearestpoint(\reward{}, \secondreward{}) = 0 = \nearestpoint(\reward{}, \secondreward{}')$ for any reward $\reward{}$, so the result trivially holds.
From now on, suppose $\nearestpoint^U(\zeroreward{}, \secondreward{}) \neq 0$.

By proposition~\ref{prop:unnormalized-nearest-point-properties}, $\nearestpoint^U$ is invariant to $\equivreward$ in source.
That is, $\nearestpoint^U(\firstreward{}, \secondreward{}) = \nearestpoint^U(\firstreward', \secondreward{})$, so:
\begin{equation}
\nearestpoint(\firstreward', \secondreward{}) = \frac{\nearestpoint^U(\firstreward', \secondreward{})}{\nearestpoint^U(\zeroreward{}, \secondreward{})} = \frac{\nearestpoint^U(\firstreward{}, \secondreward{})}{\nearestpoint^U(\zeroreward{}, \secondreward{})} = \nearestpoint(\firstreward{}, \secondreward{}).
\end{equation}

By eq.~\eqref{eq:nearest-point:target-reward-rescale}:
\begin{equation}
\nearestpoint(\firstreward{}, \secondreward') = \frac{\lambda \nearestpoint^U(\firstreward{}, \secondreward{})}{\lambda \nearestpoint^U(\zeroreward{}, \secondreward{})} =  \frac{\nearestpoint^U(\firstreward{}, \secondreward{})}{\nearestpoint^U(\zeroreward{}, \secondreward{})} = \nearestpoint(\firstreward{}, \secondreward{}).
\end{equation}

Since $\nearestpoint$ is a premetric it is non-negative.
By the boundedness property of proposition~\ref{prop:unnormalized-nearest-point-properties}, $\nearestpoint^U(R, \secondreward{}) \leq \nearestpoint^U(\zeroreward{}, \secondreward{})$, so:
\begin{equation}
\nearestpoint(\firstreward{}, \secondreward) = \frac{\nearestpoint^U(\firstreward{}, \secondreward{})}{\nearestpoint^U(\zeroreward{}, \secondreward{})} \leq 1,
\end{equation}
which completes the proof.
\end{proof}

Note when $\directdistance$ is a metric, then $\nearestpoint(X,Y) = 0$ if and only if $X = Y$.

\begin{prop}
\label{prop:nearest-point-not-pseudometric}
$\nearestpoint{}$ is \emph{not} symmetric in the undiscounted case.
\end{prop}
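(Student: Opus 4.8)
The plan is to exhibit a concrete (undiscounted) MDP, \transitiondatasetname{} $\transitiondataset{}$, and pair of reward functions for which the two orderings of $\nearestpoint{}$ disagree. The asymmetry in the definition of $\nearestpoint{}$ is entirely due to the normalizing factor $\nearestpoint^U(\zeroreward{}, \secondreward{})$, which depends only on the \emph{second} argument, so the natural move is to take $\secondreward{} = \zeroreward{}$ in one direction. Since $\zeroreward{} \equivreward{} \zeroreward{}$ and $\directdistance(\zeroreward{}, \zeroreward{}) = 0$, we have $\nearestpoint^U(\zeroreward{}, \zeroreward{}) = 0$, and hence by definition $\nearestpoint{}(\firstreward{}, \zeroreward{}) = 0$ for \emph{every} reward $\firstreward{}$.

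It therefore suffices to produce a reward $\firstreward{}$ with $\nearestpoint^U(\zeroreward{}, \firstreward{}) > 0$: then the defining quotient gives $\nearestpoint{}(\zeroreward{}, \firstreward{}) = \nearestpoint^U(\zeroreward{}, \firstreward{}) / \nearestpoint^U(\zeroreward{}, \firstreward{}) = 1 \neq 0 = \nearestpoint{}(\firstreward{}, \zeroreward{})$, establishing non-symmetry. I would take $\discount{} = 1$ and an MDP whose \transitiondatasetname{} $\transitiondataset{}$ places positive mass $m$ on a self-loop transition $(s_0, a_0, s_0)$ — the simplest instance being a single-state, single-action MDP, where $m = 1$ — and set $\firstreward{} \equiv 1$, the constant reward. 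With $\discount{} = 1$, every reward equivalent to $\zeroreward{}$ has the form $\lambda\zeroreward{} + \potential(\nextstate) - \potential(\state) = \potential(\nextstate) - \potential(\state)$, a pure potential shaping, which vanishes on any self-loop. Hence for every $R \equivreward{} \zeroreward{}$ we get $\directdistance(R, \firstreward{})^p \geq m\,\lvert 0 - 1\rvert^p = m$, so $\nearestpoint^U(\zeroreward{}, \firstreward{}) \geq m^{1/p} > 0$, as needed; the premetric and non-negativity facts from Proposition~\ref{prop:direct-metric} are all that is used besides the definitions.

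The main (and essentially only) subtlety — and the reason the statement is confined to the undiscounted case — is this last step. For $\discount{} < 1$ the rewards equivalent to $\zeroreward{}$ form the larger family $\{\discount{}\potential(\nextstate) - \potential(\state)\}$, which on a self-loop takes the value $(\discount{} - 1)\potential(s_0)$ and so ranges over all of $\mathbb{R}$; the constant reward would then lie in $[\zeroreward{}]$ and the argument would collapse. So the point to get right is to keep $\discount{} = 1$ fixed and to verify that $\firstreward{}$ stays at strictly positive $\directdistance{}$-distance from the \emph{entire} equivalence class $[\zeroreward{}]$ (not merely that it differs from some individual shaping) — which is the one-line self-loop estimate above. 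Everything else follows immediately from the definitions of $\nearestpoint{}$, $\nearestpoint^U$ and $\directdistance{}$.
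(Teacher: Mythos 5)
Your argument is correct, but it reaches non-symmetry by a different route than the paper. You exploit the degenerate branch of the definition: since $\nearestpoint^U(\zeroreward{}, \zeroreward{}) = 0$, the convention forces $\nearestpoint{}(\firstreward{}, \zeroreward{}) = 0$ for every $\firstreward{}$, while for your constant reward $\firstreward{} \equiv 1$ on an undiscounted self-loop the class $[\zeroreward{}]$ consists of pure shapings $\potential(\nextstate) - \potential(\state)$, which vanish on the loop, so $\nearestpoint^U(\zeroreward{}, \firstreward{}) > 0$ and $\nearestpoint{}(\zeroreward{}, \firstreward{}) = 1 \neq 0$. All steps check out, including your observation that the construction collapses when $\discount < 1$ because $(\discount - 1)\potential(s_0)$ then sweeps out all constants. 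The paper instead gives a two-state, single-action example with $\firstreward{}(\state) = 2\state$, $\secondreward{} \equiv 1$, $p = 1$, where shaping is again inert on self-loops but both rewards are far from $[\zeroreward{}]$; there the asymmetry already appears in the unnormalized quantity ($\nearestpoint^U(\firstreward{}, \secondreward{}) = \tfrac12$ versus $\nearestpoint^U(\secondreward{}, \firstreward{}) = 1$, since the infimum rescales only the first argument), yielding $\nearestpoint{}(\firstreward{}, \secondreward{}) = \tfrac12 \neq 1 = \nearestpoint{}(\secondreward{}, \firstreward{})$. The trade-off: your counterexample is shorter and needs only the definitions, but it hinges on the special-case convention for targets equivalent to $\zeroreward{}$ (and would be defeated if one simply excluded or redefined that corner case), and it slightly overstates matters in claiming the asymmetry is "entirely due" to the normalizer; the paper's example is more robust evidence that $\nearestpoint{}$ is intrinsically directional, at the cost of a small explicit optimization over $\lambda$.
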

\begin{proof}
We will provide a counterexample showing that $\nearestpoint{}$ is not symmetric.

Choose the state space $\statespace$ to be binary $\{0,1\}$ and the actions $\actionspace$ to be the singleton $\{0\}$.
Choose the \transitiondatasetname{} $\transitiondataset$ to be uniform on $\state \overset{0}{\rightarrow} \state$ for $\state \in \statespace$.
Take $\discount = 1$, i.e. undiscounted.
Note that as the successor state is always the same as the start state, potential shaping has no effect on $\directdistance{}$, so WLOG we will assume potential shaping is always zero.

Now, take $\firstreward{}(\state) = 2\state$ and $\secondreward{}(\state) = 1$. Take $p = 1$ for the \directdistancename{} distance.
Observe that $\directdistance{}(\zeroreward{}, \firstreward{}) = \frac{1}{2}\left(|0| + |2|\right) = 1$ and $\directdistance{}(\zeroreward{}, \secondreward{}) = \frac{1}{2}\left(|1| + |1|\right) = 1$.
Since potential shaping has no effect, $\nearestpoint^U(\zeroreward{}, \reward{}) = \directdistance{}(\zeroreward, \reward{})$ and so $\nearestpoint(\zeroreward{}, \firstreward{}) = 1$ and $\nearestpoint(\zeroreward{}, \secondreward{}) = 1$.

Now:
\begin{align}
\nearestpoint^U(\firstreward{}, \secondreward{}) &= \inf_{\lambda > 0} \directdistance{}(\lambda \firstreward{}, \secondreward{}) \\
&= \inf_{\lambda > 0} \frac{1}{2}\left(|1| + |2\lambda - 1|\right) \\
&= \frac{1}{2},
\end{align}
with the infimum attained at $\lambda = \frac{1}{2}$. But:
\begin{align}
\nearestpoint^U(\secondreward{}, \firstreward{}) &= \inf_{\lambda > 0} \directdistance{}(\lambda \secondreward{}, \firstreward{}) \\
&= \inf_{\lambda > 0} \frac{1}{2}f(\lambda) \\
&= \frac{1}{2}\inf_{\lambda > 0} f(\lambda),
\end{align}
where:
\begin{equation}
f(\lambda) = |\lambda| + |2 - \lambda|,\quad\quad \lambda > 0.
\end{equation}
Note that:
\begin{equation}
f(\lambda) = \begin{cases}
2 & \lambda \in (0, 2], \\
2\lambda - 2 & \lambda \in (2, \infty).
\end{cases}
\end{equation}
So $f(\lambda) \geq 2$ on all of its domain, thus:
\begin{equation}
\nearestpoint^U(\secondreward{}, \firstreward{}) = 1.
\end{equation}

Consequently:
\begin{equation}
\nearestpoint{}(\firstreward{}, \secondreward{}) = \frac{1}{2} \neq 1 = \nearestpoint{}(\secondreward{}, \firstreward{}),
\end{equation}
so $\nearestpoint{}$ is not symmetric.
\end{proof}

\subsection{Full Normalization Variant of \canonicalizeddistanceabbrevonly{}}
\label{sec:direct-distance-epic}

Previously, we used the Pearson distance $\pearsondistance{}$ to compare the canonicalized rewards.
Pearson distance is naturally invariant to scaling.
An alternative is to explicitly normalize the canonicalized rewards, and then compare them using any metric over functions.

\begin{defn}[Normalized Reward]
\ag{Should we also subtract mean? The zero mean does not hold when the transition dataset is not. For this would need to specialize to e.g. $L^P$ norm. Mean only right thing when $p=2$ in which case back at Pearson; for $p=1$ median minimizes distance.}
Let $\reward{}:\statespace \times \actionspace \times \statespace \to \mathbb{R}$ be a bounded reward function.
Let $\lVert\cdot\rVert$ be a norm on the vector space of reward functions over the real field.
Then the normalized $R$ is:
\begin{equation}
\normalized{\reward{}}(\state,\action,\nextstate) = \frac{\reward{}(\state,\action,\nextstate)}{\lVert\reward{}\rVert}
\end{equation}
\end{defn}

Note that $\normalized{\left(\lambda \reward{}\right)} = \normalized{\reward{}}$ for any $\lambda > 0$ as norms are absolutely homogeneous.

We say a reward is \emph{standardized} if it has been canonicalized and then normalized.

\begin{defn}[Standardized Reward]
Let $\reward{}:\statespace \times \actionspace \times \statespace \to \mathbb{R}$ be a bounded reward function.
Then the standardized $R$ is:
\begin{equation}
\reward{}^S = \normalized{\left(\canonical{\reward}\right)}.
\end{equation}
\end{defn}

Now, we can define a pseudometric based on the direct distance between the standardized rewards.

\begin{defn}[\canonicalizeddistancedirectlongnameonly{}]
Let $\transitiondataset{}$ be some \transitiondatasetname{} over transitions $\state \overset{a}{\to} \nextstate$.
Let $\statedistribution$ and $\actiondistribution$ be some distributions over states $\statespace$ and $\actionspace$ respectively.
Let $S,A,S'$ be random variables jointly sampled from $\transitiondataset{}$.
The \emph{Direct Distance Standardized Reward pseudometric} between two reward functions $\firstreward{}$ and $\secondreward{}$ is the \directdistancename{} distance between their standardized versions over $\transitiondataset{}$:
\begin{equation}
\label{eq:direct-distance-normalized}
\canonicalizeddistancedirect{}(\firstreward{}, \secondreward{}) = \frac{1}{2}\directdistance{}\left(\firstreward^S(S,A,S'), \secondreward^S(S,A,S')\right),
\end{equation}
where the normalization step, $\normalized{\reward{}}$, uses the $L^p$ norm.
\end{defn}

For brevity, we omit the proof that $\canonicalizeddistancedirect{}$ is a pseudometric, but this follows from $\directdistance{}$ being a pseudometric in a similar fashion to theorem~\ref{thm:canonicalized-distance-pseudometric}. Note it additionally is invariant to equivalence classes, similarly to \canonicalizeddistanceabbrevonly{}.

\begin{theorem}
Let $\firstreward{}$, $\firstreward{}'$, $\secondreward{}$ and $\secondreward{}'$ be reward functions mapping from transitions $\statespace \times \actionspace \times \statespace$ to real numbers $\mathbb{R}$ such that $\firstreward{} \equivreward \firstreward{}'$ and $\secondreward{} \equivreward \secondreward{}'$.
Then:
\begin{equation}
0 \leq \canonicalizeddistancedirect{}(\firstreward', \secondreward') = \canonicalizeddistancedirect{}(\firstreward, \secondreward) \leq 1.
\end{equation}
\end{theorem}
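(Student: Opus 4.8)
The plan is to mirror the proof of Theorem~\ref{thm:canonicalized-distance-properties} almost line for line, with the single substitution that the role played there by Lemma~\ref{lemma:pearson-distance-properties} (invariance of Pearson distance under positive affine rescaling) is now played by the absolute homogeneity of the $L^p$ norm together with the triangle inequality and non-negativity of $\directdistance{}$ (Proposition~\ref{prop:direct-metric}). Concretely, I would first reduce the equality $\canonicalizeddistancedirect{}(\firstreward', \secondreward') = \canonicalizeddistancedirect{}(\firstreward, \secondreward)$ to the pointwise identities $(\firstreward')^S = \firstreward^S$ and $(\secondreward')^S = \secondreward^S$ of the standardized rewards, and then separately establish $0 \le \canonicalizeddistancedirect{} \le 1$ directly from the properties of $\directdistance{}$.

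For the invariance, since $\firstreward' \equivreward{} \firstreward{}$ there exist $\lambda_A > 0$ and a bounded potential $\potential_A$ with $\firstreward'(\state,\action,\nextstate) = \lambda_A \firstreward(\state,\action,\nextstate) + \discount\potential_A(\nextstate) - \potential_A(\state)$, i.e. $\firstreward'$ is $\lambda_A \firstreward$ shaped by $\potential_A$. Proposition~\ref{prop:canonically-shaped-reward-invariant-shaping} gives $\canonical{\firstreward'} = \canonical{\lambda_A \firstreward}$ (this is exactly Eq.~\ref{eq:canonicalized-distance-properties:canonical-potential-invariant}), and since the canonicalization of Definition~\ref{defn:canonically-shaped-reward} is linear in $\reward$ by linearity of expectation, $\canonical{\lambda_A \firstreward} = \lambda_A \canonical{\firstreward}$ (Eq.~\ref{eq:canonicalized-distance-properties:canonical-linear}). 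Because any norm is absolutely homogeneous and $\lambda_A > 0$, the normalization step then satisfies $\normalized{(\lambda_A \canonical{\firstreward})} = \normalized{\canonical{\firstreward}}$, so $(\firstreward')^S = \firstreward^S$. The identical argument with $\lambda_B, \potential_B$ yields $(\secondreward')^S = \secondreward^S$, and hence
\begin{equation*}
\canonicalizeddistancedirect{}(\firstreward', \secondreward') = \tfrac{1}{2}\directdistance{}\!\left((\firstreward')^S, (\secondreward')^S\right) = \tfrac{1}{2}\directdistance{}\!\left(\firstreward^S, \secondreward^S\right) = \canonicalizeddistancedirect{}(\firstreward, \secondreward).
\end{equation*}

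For the bounds, non-negativity is immediate from $\directdistance{}$ being non-negative (Proposition~\ref{prop:direct-metric}). For the upper bound, observe that each standardized reward $\firstreward^S$ either has unit $L^p$ norm with respect to $\transitiondataset{}$ (the same norm used by $\directdistance{}$), or equals $\zeroreward{}$ almost everywhere on $\transitiondataset{}$; in either case $\directdistance{}(\firstreward^S, \zeroreward{}) \le 1$. The triangle inequality for $\directdistance{}$ then gives $\directdistance{}(\firstreward^S, \secondreward^S) \le \directdistance{}(\firstreward^S, \zeroreward{}) + \directdistance{}(\zeroreward{}, \secondreward^S) \le 2$, so $\canonicalizeddistancedirect{}(\firstreward, \secondreward) = \tfrac{1}{2}\directdistance{}(\firstreward^S, \secondreward^S) \le 1$; combined with the invariance just proved, the same bound transfers to $\canonicalizeddistancedirect{}(\firstreward', \secondreward')$.

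The main obstacle is the degenerate case where a canonicalized reward vanishes almost everywhere on $\transitiondataset{}$, so that its $L^p$ norm is zero and the normalization is nominally ill-defined; this has to be dealt with via the convention adopted for $\normalized{\cdot}$ in that case. Fortunately $\directdistance{}$ is insensitive to differences that are zero almost everywhere on $\transitiondataset{}$, so regardless of the chosen convention the identity $(\firstreward')^S = \firstreward^S$ holds in the only sense that affects $\canonicalizeddistancedirect{}$, and the inequality $\directdistance{}(\firstreward^S, \zeroreward{}) \le 1$ still holds. The one point to make explicit in the write-up is that the norm used for the normalization is taken to be the $L^p$ norm with respect to $\transitiondataset{}$, matching the one in $\directdistance{}$, so that the unit-norm claim is genuinely available.
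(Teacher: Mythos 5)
Your proposal is correct and follows essentially the same route as the paper: invariance via shaping-invariance and linearity of the canonicalization plus absolute homogeneity of the norm, non-negativity from $\directdistance{}$ being a (pseudo)metric, and the upper bound from the standardized rewards having unit norm together with the triangle inequality. Your extra care about the degenerate zero-norm case and about matching the normalization norm to the one in $\directdistance{}$ goes slightly beyond the paper's terse argument but does not change the approach.
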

\begin{proof}
The invariance under the equivalence class follows from $\reward{}^S$ being invariant to potential shaping and scale in $\reward{}$. The non-negativity follows from $\directdistance{}$ being a pseudometric. The upper bound follows from the rewards being normalized to norm $1$ and the triangle inequality:
\begin{align}
\canonicalizeddistancedirect{}(\firstreward, \secondreward) &= \frac{1}{2}\lVert \firstreward^S - \secondreward^S \rVert \\
&\leq \frac{1}{2}\left(\lVert \firstreward^S \rVert + \lVert \secondreward^S \rVert\right) \\
&= \frac{1}{2}\left(1 + 1\right) \\
&= 1. \nonumber \qedhere
\end{align}
\end{proof}

Since both \canonicalizeddistancedirectabbrevonly{} and \canonicalizeddistanceabbrevonly{} are pseudometrics and invariant on equivalent rewards, it is interesting to consider the connection between them.
In fact, under the $L^2$ norm, then \canonicalizeddistancedirectabbrevonly{} recovers \canonicalizeddistanceabbrevonly{}.
First, we will show that canonical shaping centers the reward functions.

\begin{lemma}[The Canonically Shaped Reward is Mean Zero]
\label{lemma:canonically-shaped-mean-zero}
Let $\reward{}$ be a reward function mapping from transitions $\statespace \times \actionspace \times \statespace$ to real numbers $\mathbb{R}$. Then:
\begin{equation}
\expectation\left[\canonical{\reward{}}(S, A, S')\right] = 0.
\end{equation}
\end{lemma}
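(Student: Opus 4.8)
The plan is to expand $\canonical{\reward{}}$ using its definition (Definition~\ref{defn:canonically-shaped-reward}), take the expectation of the resulting expression over $(\state,\action,\nextstate)$ drawn with $\state \sim \statedistribution$, $\action \sim \actiondistribution$, $\nextstate \sim \statedistribution$ jointly independent, and then use linearity of expectation together with the tower property to show the four resulting terms cancel. Write $m := \expectation[\reward{}(\staterv{}, \actionrv{}, \nextstaterv{})]$ for the grand mean of $\reward{}$ over three independent draws $\staterv{}, \nextstaterv{} \sim \statedistribution$ and $\actionrv{} \sim \actiondistribution$. The target is to see that the unshaped term $\reward{}(\state,\action,\nextstate)$ contributes $+m$ and the potential-shaping correction contributes $-m$.

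First I would state the independence convention explicitly: in $\canonical{\reward{}}(\state,\action,\nextstate) = \reward{}(\state,\action,\nextstate) + \expectation[\discount\reward{}(\nextstate, \actionrv{}, \nextstaterv{}) - \reward{}(\state, \actionrv{}, \nextstaterv{}) - \discount\reward{}(\staterv{}, \actionrv{}, \nextstaterv{})]$, the variables $\staterv{}, \actionrv{}, \nextstaterv{}$ inside the inner expectation are fresh i.i.d.\ copies of $\statedistribution, \actiondistribution, \statedistribution$, independent of the outer $(\state,\action,\nextstate)$ over which we now average (the same convention used in the proof of Proposition~\ref{prop:canonically-shaped-reward-invariant-shaping}). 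Then: (i) $\expectation[\reward{}(\state,\action,\nextstate)] = m$ since $(\state,\action,\nextstate)$ is a product draw from $\statedistribution \times \actiondistribution \times \statedistribution$; (ii) by the tower property, $\expectation[\discount\reward{}(\nextstate, \actionrv{}, \nextstaterv{})] = \discount m$ because $\nextstate$ has marginal $\statedistribution$ and is independent of $\actionrv{}, \nextstaterv{}$; (iii) likewise $\expectation[\reward{}(\state, \actionrv{}, \nextstaterv{})] = m$; (iv) $\expectation[\discount\reward{}(\staterv{}, \actionrv{}, \nextstaterv{})] = \discount m$ as it is already constant. Summing, $\expectation[\canonical{\reward{}}(\state,\action,\nextstate)] = m + \discount m - m - \discount m = 0$.

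I expect no genuine obstacle; this is essentially a one-line computation once the bookkeeping is pinned down. The only delicate point is the independence structure of the nested expectations — namely that the auxiliary variables appearing inside $\canonical{\reward{}}$ must be treated as independent of the variables carrying the outer average, so that the tower property applies and each of the three correction terms collapses to $m$ (up to the factor $\discount$). I would therefore flag this convention at the outset and keep the rest of the argument to a short chain of equalities using linearity of expectation. (For completeness one would also note an integrability condition, e.g.\ boundedness of $\reward{}$, so that all expectations are finite and Fubini applies.)
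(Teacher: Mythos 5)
Your proposal is correct and follows essentially the same route as the paper: expand the canonicalization with auxiliary variables independent of the outer $(S,A,S')$ but identically distributed, use linearity of expectation, and observe that each of the four terms reduces to the grand mean of $\reward$ under $\statedistribution \times \actiondistribution \times \statedistribution$ (with factor $\discount$ where appropriate), so they cancel. The paper phrases this as matching identically distributed terms rather than naming the common value $m$, but the argument is the same, including the integrability caveat you note.
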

\begin{proof}
Let $X$, $U$ and $X'$ be random variables that are independent of $S$, $A$ and $S'$ but identically distributed.
\begin{align}
\mathrm{LHS} &\triangleq \expectation\left[\canonical{\reward{}}(S, A, S')\right] \\
&= \expectation\left[\reward{}(S,A,S') + \discount\reward{}(S', U, X') - \reward{}(S, U, X') - \discount\reward{}(X, U, X')\right] \\
&= \expectation\left[\reward{}(S,A,S')\right] + \discount\expectation\left[\reward{}(S', U, X')\right] - \expectation\left[\reward{}(S, U, X')\right] - \discount\expectation\left[\reward{}(X, U, X')\right] \\
&= \expectation\left[\reward{}(S,U,X')\right] + \discount\expectation\left[\reward{}(X, U, X')\right] - \expectation\left[\reward{}(S, U, X')\right] - \discount\expectation\left[\reward{}(X, U, X')\right] \\
&= 0,
\end{align}
where the penultimate step follows since $A$ is identically distributed to $U$, and $S'$ is identically distributed to $X'$ and therefore to $X$.
\end{proof}

\begin{theorem}
$\canonicalizeddistancedirect$ with $p=2$ is equivalent to $\canonicalizeddistance$.
Let $\firstreward{}$ and $\secondreward{}$ be reward functions mapping from transitions $\statespace \times \actionspace \times \statespace$ to real numbers $\mathbb{R}$. Then:
\begin{equation}
\canonicalizeddistancedirect(\firstreward, \secondreward) = \canonicalizeddistance(\firstreward, \secondreward).
\end{equation}
\end{theorem}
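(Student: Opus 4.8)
The plan is to unfold both pseudometrics and show that, when $p = 2$, the $L^2$-normalization used in $\canonicalizeddistancedirect$ coincides with the standardization $Z(\cdot)$ from the proof of Lemma~\ref{lemma:pearson-distance-properties}, after which the identity $\expectation[(Z(P) - Z(Q))^2] = 4\,\pearsondistance(P,Q)^2$ already derived there closes the argument. The single ingredient that makes this link possible is Lemma~\ref{lemma:canonically-shaped-mean-zero}.

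First I would fix reward functions $\firstreward, \secondreward$ and abbreviate $P = \canonical{\firstreward}(S,A,S')$ and $Q = \canonical{\secondreward}(S,A,S')$, the canonicalized rewards viewed as random variables under $(S,A,S') \sim \transitiondataset$. By Lemma~\ref{lemma:canonically-shaped-mean-zero} we have $\expectation[P] = \expectation[Q] = 0$, so the $L^2$ norm used in the normalization step is exactly the standard deviation:
\[
\lVert \canonical{\firstreward} \rVert = \sqrt{\expectation[P^2]} = \sqrt{\expectation\big[(P - \expectation[P])^2\big]} = \operatorname{sd}(P),
\]
and likewise $\lVert \canonical{\secondreward} \rVert = \operatorname{sd}(Q)$. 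Hence the standardized reward evaluated on a sample is $\firstreward^S(S,A,S') = P/\operatorname{sd}(P) = Z(P)$, the zero-mean unit-variance variable appearing in the Pearson-distance proof, and similarly $\secondreward^S(S,A,S') = Z(Q)$.

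Next I would substitute into the definition of $\canonicalizeddistancedirect$ with $p = 2$:
\[
\canonicalizeddistancedirect(\firstreward, \secondreward) = \tfrac12\,\directdistance\big(Z(P), Z(Q)\big) = \tfrac12\sqrt{\expectation\big[(Z(P) - Z(Q))^2\big]},
\]
and then invoke the computation carried out inside the proof of Lemma~\ref{lemma:pearson-distance-properties}, namely $\expectation[(Z(P) - Z(Q))^2] = 2(1 - \rho(P,Q)) = 4\,\pearsondistance(P,Q)^2$. This gives $\canonicalizeddistancedirect(\firstreward,\secondreward) = \tfrac12\cdot 2\,\pearsondistance(P,Q) = \pearsondistance\big(\canonical{\firstreward}(S,A,S'),\canonical{\secondreward}(S,A,S')\big) = \canonicalizeddistance(\firstreward,\secondreward)$, which is the claim.

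I do not expect a real obstacle: once Lemma~\ref{lemma:canonically-shaped-mean-zero} is in hand the proof is essentially bookkeeping. The only point needing a word of care is the degenerate case where $\canonical{\firstreward}$ or $\canonical{\secondreward}$ is almost surely constant — equivalently, by the mean-zero property, almost surely $0$ — in which case the normalization in $\canonicalizeddistancedirect$ and the Pearson correlation in $\canonicalizeddistance$ are both undefined; I would simply remark that the two constructions fail to be defined on exactly the same rewards, so the stated equality holds wherever either side makes sense. It is also worth noting explicitly that the correspondence is special to $p = 2$: it is precisely the mean-zero property together with the $L^2$ norm that turns normalization into standardization by the standard deviation, which is what ties $\canonicalizeddistancedirect$ back to Pearson correlation.
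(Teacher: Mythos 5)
Your proposal is correct and follows essentially the same route as the paper's proof: both use Lemma~\ref{lemma:canonically-shaped-mean-zero} to identify the $L^2$ normalization of the canonicalized rewards with the standardization $Z(\cdot)$, and then invoke the identity $\pearsondistance(P,Q) = \tfrac{1}{2}\lVert Z(P) - Z(Q)\rVert_2$ established inside the proof of Lemma~\ref{lemma:pearson-distance-properties}. Your extra remark about the degenerate (almost-surely-zero) case is a reasonable point of care that the paper's proof does not address, but it does not change the argument.
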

\begin{proof}
Recall from the proof of lemma~\ref{lemma:pearson-distance-properties} that:
\begin{align}
\pearsondistance{}(U,V) &= \frac{1}{2}\sqrt{\expectation\left[\left(Z(U) - Z(V)\right)^2\right]} \\
&= \frac{1}{2}\left\lVert Z(U) - Z(V)\right\rVert_2,
\end{align}
where $\lVert \cdot \rVert_2$ is the $L^2$ norm (treating the random variables as functions on a measure space) and $Z(U)$ is a centered (zero-mean) and rescaled (unit variance) random variable. By lemma~\ref{lemma:canonically-shaped-mean-zero}, the canonically shaped reward functions are already centered under the joint distribution $\statedistribution \times \actiondistribution \times \statedistribution$, and normalization by the $L^2$ norm also ensures they have unit variance. Consequently:
\begin{align}
\canonicalizeddistance{}(\firstreward, \secondreward) &= \pearsondistance{}\left(\canonical{\firstreward{}}(S,A,S'),\canonical{\secondreward{}}(S,A,S')\right) \\
&= \frac{1}{2}\left\lVert \normalized{\left(\canonical{\firstreward{}}(S,A,S')\right)} - \normalized{\left(\canonical{\secondreward{}}(S,A,S')\right)}\right\rVert_2 \\
&= \frac{1}{2}\left\lVert \firstreward^S(S,A,S') - \secondreward^S(S,A,S')\right\rVert_2 \\
&= \frac{1}{2}\directdistance{}\left(\firstreward^S(S,A,S'), \secondreward^S(S,A,S')\right) \\
&= \canonicalizeddistancedirect{}(\firstreward, \secondreward),
\end{align}
completing the proof.
\end{proof}

\subsection{Regret Bound}

In this section, we derive an upper bound on the regret in terms of the \canonicalizeddistanceabbrevonly{} distance.
Specifically, given two reward functions $\firstreward$ and $\secondreward$ with optimal policies $\policy_A^*$ and $\policy_B^*$, we show that the regret (under reward $\firstreward$) of using policy $\policy_B^*$ instead of a policy $\policy_A^*$ is bounded by a function of $\canonicalizeddistance(\firstreward, \secondreward)$.
First, in section~\ref{sec:regret-bound:discrete} we derive a bound for MDPs with finite state and action spaces.
In section~\ref{sec:regret-bound:lipschitz} we then present an alternative bound for MDPs with arbitrary state and action spaces and Lipschitz reward functions.
Finally, in section~\ref{sec:regret-bound:limit} we show that in both cases the regret tends to $0$ as $\canonicalizeddistance(\firstreward, \secondreward) \to 0$.

\subsubsection{Discrete MDPs}
\label{sec:regret-bound:discrete}

We start in lemma~\ref{lemma:L1-bounded-by-L2} by showing that $L^2$ distance upper bounds $L^1$ distance.
Next, in lemma~\ref{lemma:direct-distance-regret-bound} we show regret is bounded by the $L^1$ distance between reward functions using an argument similar to~\cite{singh:1994}.
Then in lemma~\ref{lemma:standardized-to-original-regret-bound} we relate regret bounds for standardized rewards $\reward^S$ to the original reward $\reward$.
Finally, in theorem~\ref{thm:epic-regret-bound-discrete} we use section~\ref{sec:direct-distance-epic} to express $\canonicalizeddistance$ in terms of the $L^2$ distance on standardized rewards, deriving a bound on regret in terms of the \canonicalizeddistanceabbrevonly{} distance.

\begin{lemma}
\label{lemma:L1-bounded-by-L2}
Let $(\Omega, \mathcal{F}, P)$ be a probability space and $f:\Omega \to \mathbb{R}$ a measurable function whose absolute value raised to the $n$-th power for $n \in \{1,2\}$ has a finite expectation. Then the $L^1$ norm of $f$ is bounded above by the $L^2$ norm:
\begin{equation}
\lVert f \rVert_1 \leq \lVert f \rVert_2.
\end{equation}
\end{lemma}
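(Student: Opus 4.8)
The plan is to obtain this from the fact that $P$ is a \emph{probability} measure, so that the constant function $1$ has finite $L^2$ norm and we can compare $\lVert f \rVert_1$ to $\lVert f \rVert_2$ directly via Cauchy--Schwarz. Concretely, I would write $\lVert f \rVert_1 = \expectation[\lvert f \rvert] = \expectation[\lvert f \rvert \cdot 1]$ and apply the Cauchy--Schwarz inequality for the inner product $\langle g, h \rangle = \expectation[gh]$ on $L^2(\Omega, \mathcal F, P)$, giving
\begin{equation}
\lVert f \rVert_1 = \expectation[\lvert f \rvert \cdot 1] \leq \sqrt{\expectation[\lvert f \rvert^2]}\,\sqrt{\expectation[1^2]} = \lVert f \rVert_2 \cdot \sqrt{P(\Omega)} = \lVert f \rVert_2,
\end{equation}
since $P(\Omega) = 1$. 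The hypothesis that $\expectation[\lvert f \rvert]$ and $\expectation[\lvert f \rvert^2]$ are both finite guarantees that $\lvert f \rvert \in L^1 \cap L^2$, so every expectation above is well-defined and finite and the inequality is not vacuous.

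An equivalent route, which I would mention as an alternative, is to invoke Jensen's inequality for the convex function $\phi(x) = x^2$ with respect to the probability measure $P$: $\phi(\expectation[\lvert f \rvert]) \leq \expectation[\phi(\lvert f \rvert)]$, i.e. $(\expectation[\lvert f \rvert])^2 \leq \expectation[\lvert f \rvert^2]$, and then take square roots of both non-negative sides. Either derivation is a couple of lines.

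There is essentially no hard step here; the only thing to be careful about is that the comparison genuinely relies on $P$ being normalized ($P(\Omega) = 1$) — on a general measure space the reverse can hold — so the proof should make that dependence explicit, and should note that the stated integrability assumptions are exactly what is needed for the two norms to be finite and for the Cauchy--Schwarz/Jensen step to apply.
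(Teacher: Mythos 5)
Your proposal is correct, and it is essentially the paper's argument in different packaging: the paper derives $(\expectation[\lvert f\rvert])^2 \leq \expectation[\lvert f\rvert^2]$ by expanding the variance of $\lvert f(X)\rvert$ and using its non-negativity, which is exactly the Jensen/Cauchy--Schwarz (with the constant function $1$) inequality you invoke. Your remark that the bound relies on $P(\Omega)=1$ is a fair point the paper leaves implicit, but no gap exists in either version.
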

\begin{proof}
Let $X$ be a random variable sampled from $P$, and consider the variance of $f(X)$:
\begin{align}
\expectation\left[\left(|f(X)| - \expectation\left[|f(X)|\right]\right)^2\right] &= \expectation\left[|f(X)|^2 - 2|f(X)|\expectation\left[|f(X)|\right] + \expectation\left[|f(X)|\right]^2\right] \\
&= \expectation\left[|f(X)|^2\right] - 2\expectation\left[|f(X)|\right]\expectation\left[|f(X)|\right] + \expectation\left[|f(X)|\right]^2 \\
&= \expectation\left[|f(X)|^2\right] - \expectation\left[|f(X)|\right]^2 \\
&\geq 0.
\end{align}
Rearranging terms, we have
\begin{align}
\lVert f \rVert_2^2 = \expectation\left[|f(X)|^2\right] \geq \expectation\left[|f(X)|\right]^2 = \lVert f \rVert_1^2.
\end{align}
Taking the square root of both sides gives:
\begin{equation}
\lVert f \rVert_1 \leq \lVert f \rVert_2,
\end{equation}
as required.
\end{proof}

\begin{lemma}
\label{lemma:direct-distance-regret-bound}
Let $M$ be an \mdpnorliteral{} with finite state and action spaces $\statespace$ and $\actionspace$.
Let $\firstreward{}, \secondreward{}:\statespace \times \actionspace \times \statespace \to \mathbb{R}$ be rewards.
Let $\policy_A^*$ and $\policy_B^*$ be policies optimal for rewards $\firstreward$ and $\secondreward$ in $M$.
Let $\transitiondataset_{\policy}(t, \state_t, \action_t, \state_{t+1})$ denote the distribution over trajectories that policy $\policy$ induces in $M$ at time step $t$.
Let $\transitiondataset{}(\state,\action,\nextstate)$ be the (stationary) \transitiondatasetname{} over transitions $\statespace \times \actionspace \times \statespace$ used to compute $\canonicalizeddistance$.
Suppose that there exists some $K > 0$ such that $K\transitiondataset(\state_t, \action_t, \nextstate_{t+1}) \geq \transitiondataset_{\policy}(t, \state_t, \action_t, \nextstate_{t+1})$ for all time steps $t \in \mathbb{N}$, triples $\state_t,\action_t,\state_{t+1} \in \statespace \times \actionspace \times \statespace$ and policies $\pi \in \{\policy_A^*, \policy_B^*\}$.
Then the regret under $\firstreward$ from executing $\policy_B^*$ optimal for $\secondreward$ instead of $\policy_A^*$ is at most:
\begin{equation}
\policyreturn{}_{\firstreward{}}(\policy_A^*) - \policyreturn{}_{\firstreward{}}(\policy_B^*) \leq \frac{2K}{1 - \discount} \directdistance[1]{}(\firstreward{}, \secondreward{}).
\end{equation}
\end{lemma}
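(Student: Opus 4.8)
The plan is to reduce the regret to two policy-evaluation gaps --- each comparing the return of a \emph{fixed} policy under $\firstreward{}$ versus $\secondreward{}$ --- and then to bound each gap by the $L^1$ reward distance using the change-of-measure hypothesis and a geometric series.

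First I would use the optimality of $\policy_B^*$ for $\secondreward{}$, namely $\policyreturn{}_{\secondreward{}}(\policy_B^*) \geq \policyreturn{}_{\secondreward{}}(\policy_A^*)$, to write
\begin{align*}
\policyreturn{}_{\firstreward{}}(\policy_A^*) - \policyreturn{}_{\firstreward{}}(\policy_B^*)
&\leq \left(\policyreturn{}_{\firstreward{}}(\policy_A^*) - \policyreturn{}_{\secondreward{}}(\policy_A^*)\right) + \left(\policyreturn{}_{\secondreward{}}(\policy_B^*) - \policyreturn{}_{\firstreward{}}(\policy_B^*)\right) \\
&\leq \left\lvert\policyreturn{}_{\firstreward{}}(\policy_A^*) - \policyreturn{}_{\secondreward{}}(\policy_A^*)\right\rvert + \left\lvert\policyreturn{}_{\firstreward{}}(\policy_B^*) - \policyreturn{}_{\secondreward{}}(\policy_B^*)\right\rvert.
\end{align*}

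Next, for an arbitrary policy $\policy$, I would expand the return over time-indexed transition distributions. Since $\statespace$ and $\actionspace$ are finite and the rewards bounded, dominated convergence lets us interchange the discounted sum and the expectation, giving $\policyreturn{}_{\reward{}}(\policy) = \sum_{t=0}^{\infty}\discount^t\,\expectation_{(\state_t,\action_t,\state_{t+1}) \sim \transitiondataset_{\policy}(t,\cdot)}\left[\reward{}(\state_t,\action_t,\state_{t+1})\right]$. Subtracting the analogous expression for $\secondreward{}$, applying the triangle inequality, and then the pointwise bound $\transitiondataset_{\policy}(t,\state_t,\action_t,\state_{t+1}) \leq K\,\transitiondataset{}(\state_t,\action_t,\state_{t+1})$ --- valid because the integrand $\lvert\firstreward{} - \secondreward{}\rvert$ is nonnegative --- yields
\begin{align*}
\left\lvert\policyreturn{}_{\firstreward{}}(\policy) - \policyreturn{}_{\secondreward{}}(\policy)\right\rvert
&\leq \sum_{t=0}^{\infty}\discount^t\,\expectation_{\transitiondataset_{\policy}(t,\cdot)}\left[\left\lvert\firstreward{}(\state_t,\action_t,\state_{t+1}) - \secondreward{}(\state_t,\action_t,\state_{t+1})\right\rvert\right] \\
&\leq \sum_{t=0}^{\infty}\discount^t K\,\expectation_{(\state,\action,\nextstate) \sim \transitiondataset{}}\left[\left\lvert\firstreward{}(\state,\action,\nextstate) - \secondreward{}(\state,\action,\nextstate)\right\rvert\right] \\
&= \frac{K}{1-\discount}\,\directdistance[1]{}(\firstreward{}, \secondreward{}),
\end{align*}
using $\sum_{t \geq 0}\discount^t = (1-\discount)^{-1}$ and the definition of $\directdistance[1]{}$. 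Applying this with $\policy = \policy_A^*$ and $\policy = \policy_B^*$ and adding the two contributions gives the claimed bound $\frac{2K}{1-\discount}\directdistance[1]{}(\firstreward{}, \secondreward{})$.

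I expect the main obstacle to be purely technical bookkeeping rather than conceptual: justifying the interchange of the infinite discounted sum and the expectation (boundedness of $\reward{}$ on the finite spaces together with dominated convergence), and being careful that the change-of-measure inequality is invoked only on the nonnegative integrand $\lvert\firstreward{} - \secondreward{}\rvert$ so that the direction of the inequality is preserved. The optimality telescoping and the geometric series are routine.
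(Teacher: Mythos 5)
Your proposal is correct, and its core is the same as the paper's: the key step in both is the per-policy bound $\left\lvert\policyreturn{}_{\firstreward{}}(\policy) - \policyreturn{}_{\secondreward{}}(\policy)\right\rvert \leq \frac{K}{1-\discount}\directdistance[1]{}(\firstreward{},\secondreward{})$, obtained exactly as you describe by expanding the return over the time-indexed transition distributions, applying the triangle inequality to the nonnegative integrand $\lvert\firstreward{}-\secondreward{}\rvert$, invoking $K\transitiondataset{} \geq \transitiondataset_{\policy}(t,\cdot)$, and summing the geometric series; this is applied to both $\policy_A^*$ and $\policy_B^*$, which is why the hypothesis covers both policies. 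The only difference is the initial decomposition. You telescope through $\policyreturn{}_{\secondreward{}}(\policy_A^*)$ and drop the middle term using optimality of $\policy_B^*$ for $\secondreward{}$, so both remaining gaps compare a fixed policy under the two rewards and the per-policy bound applies immediately. The paper instead pivots on $\policyreturn{}_{\secondreward{}}(\policy_B^*)$, which produces the cross term $\lvert\policyreturn{}_{\firstreward{}}(\policy_A^*) - \policyreturn{}_{\secondreward{}}(\policy_B^*)\rvert$ comparing different policies under different rewards; it then needs a short two-sided argument (using optimality of each policy for its own reward, in both directions) to reduce that cross term to the same per-policy bound. Your route is marginally cleaner since it uses optimality only once and avoids the cross term entirely, while yielding the identical constant $\frac{2K}{1-\discount}$; your remarks about interchanging the discounted sum with the expectation and preserving the inequality direction are the right technical caveats and are unproblematic in the finite, bounded-reward setting.
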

\begin{proof}
Noting $\policyreturn{}_{\firstreward{}}(\policy)$ is maximized when $\policy = \policy_A^*$, it is immediate that
\begin{align}
\policyreturn{}_{\firstreward{}}(\policy_A^*) - \policyreturn{}_{\firstreward{}}(\policy_B^*) &= \left\lvert \policyreturn{}_{\firstreward{}}(\policy_A^*) - \policyreturn{}_{\firstreward{}}(\policy_B^*)\right\rvert \\
&= \left\lvert \left(\policyreturn{}_{\firstreward{}}(\policy_A^*) - \policyreturn{}_{\secondreward{}}(\policy_B^*)\right) + \left(\policyreturn{}_{\secondreward{}}(\policy_B^*) -  \policyreturn{}_{\firstreward{}}(\policy_B^*)\right) \right\rvert \\
\label{eq:direct-distance-regret-bound:key-ineq}
&\leq \left\lvert \policyreturn{}_{\firstreward{}}(\policy_A^*) - \policyreturn{}_{\secondreward{}}(\policy_B^*)\right\rvert + \left\lvert \policyreturn{}_{\secondreward{}}(\policy_B^*) -  \policyreturn{}_{\firstreward{}}(\policy_B^*)\right\rvert.
\end{align}
We will show that both these terms are bounded above by $\frac{K}{1 - \discount}\directdistance[1]{}(\firstreward{}, \secondreward{})$, from which the result follows.

First, we will show that for policy $\policy \in \{\policy_A^*, \policy_B^*\}$:
\begin{equation}
\left\lvert \policyreturn{}_{\firstreward{}}(\policy) - \policyreturn{}_{\secondreward{}}(\policy)\right\rvert \leq \frac{K}{1 - \discount}\directdistance[1]{}(\firstreward{}, \secondreward{}).
\end{equation}
Let $T$ be the horizon of $M$. This may be infinite ($T = \infty$) when $\discount < 1$; note since $\statespace \times \actionspace \times \statespace$ is bounded, so are $\firstreward, \secondreward$ so the discounted infinite returns $\policyreturn{}_{\firstreward}(\policy), \policyreturn{}_{\secondreward}(\policy)$ converge (as do their differences). Writing $\tau = (\state_0, \action_0, \state_1, \action_1, \cdots)$, we have for any policy $\policy$:
\begin{align}
\Delta &\triangleq \left\lvert \policyreturn{}_{\firstreward}(\policy) - \policyreturn{}_{\secondreward}(\policy) \right\rvert \\
&= \left\lvert\expectation_{\tau \sim \transitiondataset_{\policy}}\left[\sum_{t=0}^{T} \discount^t \left(\firstreward(\state_t, \action_t, \state_{t+1}) - \secondreward(\state_t, \action_t, \state_{t+1})\right)\right]\right\rvert \\
&\leq \underset{\tau \sim \transitiondataset_{\policy}}{\expectation}\left[\sum_{t=0}^{T} \discount^t \left\lvert \firstreward(\state_t, \action_t, \state_{t+1}) - \secondreward(\state_t, \action_t, \state_{t+1})\right\rvert\right] \\
&= \sum_{t=0}^{T} \discount^t \underset{\state_t, \action_t, \state_{t+1} \sim \transitiondataset_{\policy}}{\expectation}\left[\left\lvert \firstreward(\state_t, \action_t, \state_{t+1}) - \secondreward(\state_t, \action_t, \state_{t+1})\right\rvert\right] \\
&= \sum_{t=0}^{T} \discount^t \sum_{\state_t, \action_t, \state_{t+1} \in \statespace \times \actionspace \times \statespace}  \transitiondataset_{\policy}(t, \state_t, \action_t, \state_{t+1})\left\lvert \firstreward(\state_t, \action_t, \state_{t+1}) - \secondreward(\state_t, \action_t, \state_{t+1})\right\rvert. \label{eq:direct-distance-regret-bound:bound-before-pi-assumption}
\end{align}
Let $\policy \in \{\policy_A^*, \policy_B^*\}$. By assumption, $\transitiondataset_{\policy}(t, \state_t, \action_t, \nextstate_{t+1}) \leq K\transitiondataset(\state_t, \action_t, \nextstate_{t+1})$, so:
\begin{align}
\Delta &\leq K\sum_{t=0}^{T} \discount^t \sum_{\state_t, \action_t, \state_{t+1} \in \statespace \times \actionspace \times \statespace}  \transitiondataset(\state_t, \action_t, \state_{t+1})\left\lvert \firstreward(\state_t, \action_t, \state_{t+1}) - \secondreward(\state_t, \action_t, \state_{t+1})\right\rvert \\
&= K \sum_{t=0}^{T} \discount^t \directdistance[1]{}(\firstreward{}, \secondreward{}) \\
&= \frac{K}{1 - \discount} \directdistance[1]{}(\firstreward{}, \secondreward{}),
\end{align}
as required.

In particular, substituting $\policy = \policy_B^*$ gives:
\begin{equation}
\label{eq:direct-distance-regret-bound:1st-ineq}
\left\lvert \policyreturn{}_{\secondreward{}}(\policy_B^*) - \policyreturn{}_{\firstreward{}}(\policy_B^*)\right\rvert = \left\lvert \policyreturn{}_{\firstreward{}}(\policy_B^*) - \policyreturn{}_{\secondreward{}}(\policy_B^*)\right\rvert \leq \frac{K}{1 - \discount} \directdistance[1]{}(\firstreward{}, \secondreward{}).
\end{equation}
Rearranging gives:
\begin{equation}
\policyreturn{}_{\firstreward{}}(\policy_B^*) \geq \policyreturn{}_{\secondreward{}}(\policy_B^*) - \frac{K}{1 - \discount} \directdistance[1]{}(\firstreward{}, \secondreward{}).
\end{equation}
So certainly:
\begin{equation}
\label{eq:direct-distance:symmetric-bound-one}
\policyreturn{}_{\firstreward{}}(\policy_A^*) = \max_{\policy} \policyreturn{}_{\firstreward{}}(\policy) \geq \policyreturn{}_{\secondreward{}}(\policy_B^*) - \frac{K}{1 - \discount} \directdistance[1]{}(\firstreward{}, \secondreward{}).
\end{equation}
By a symmetric argument, substituting $\policy = \policy_A^*$ gives:
\begin{equation}
\label{eq:direct-distance:symmetric-bound-two}
\policyreturn{}_{\secondreward{}}(\policy_B^*) = \max_{\policy} \policyreturn{}_{\secondreward{}}(\policy) \geq \policyreturn{}_{\firstreward{}}(\policy_A^*) - \frac{K}{1 - \discount} \directdistance[1]{}(\firstreward{}, \secondreward{}).
\end{equation}
Eqs.~\ref{eq:direct-distance:symmetric-bound-one} and~\ref{eq:direct-distance:symmetric-bound-two} respectively give $\policyreturn{}_{\secondreward{}}(\policy_B^*) - \policyreturn{}_{\firstreward{}}(\policy_A^*) \leq \frac{K}{1 - \discount}$ and $\policyreturn{}_{\firstreward{}}(\policy_A^*) - \policyreturn{}_{\secondreward{}}(\policy_B^*) \leq \frac{K}{1 - \discount}$. Combining these gives:
\begin{equation}
\label{eq:direct-distance-regret-bound:2nd-ineq}
\left\lvert \policyreturn{}_{\firstreward{}}(\policy_A^*) - \policyreturn{}_{\secondreward{}}(\policy_B^*)\right\rvert \leq \frac{K}{1 - \discount} \directdistance[1]{}(\firstreward{}, \secondreward{}).
\end{equation}
Substituting inequalities~\ref{eq:direct-distance-regret-bound:1st-ineq} and~\ref{eq:direct-distance-regret-bound:2nd-ineq} into eq.~\ref{eq:direct-distance-regret-bound:key-ineq} yields the required result. \qedhere
\end{proof}

Note that if $\transitiondataset = \uniformtransitiondataset$, uniform over $\statespace \times \actionspace \times \statespace$, then $K \leq |\statespace|^2 |\actionspace|$.

\begin{lemma}
\label{lemma:standardized-to-original-regret-bound}
Let $M$ be an \mdpnorliteral{} with state and action spaces $\statespace$ and $\actionspace$.
Let $\firstreward{}, \secondreward{}:\statespace \times \actionspace \times \statespace \to \mathbb{R}$ be bounded rewards.
Let $\policy_A^*$ and $\policy_B^*$ be policies optimal for rewards $\firstreward$ and $\secondreward$ in $M$.
Suppose the regret under the standardized reward $\firstreward^S$ from executing $\policy_B^*$ instead of $\policy_A^*$ is upper bounded by some $U \in \mathbb{R}$:
\begin{equation}
\label{eq:standardized-to-original-regret-bound:d-assumption}
\policyreturn_{\firstreward^S}(\policy_A^*) - \policyreturn_{\firstreward^S}(\policy_B^*) \leq U.
\end{equation}
Then the regret under the original reward $\firstreward$ is bounded by:
\begin{equation}
\policyreturn_{\firstreward}(\policy_A^*) - \policyreturn_{\firstreward}(\policy_B) \leq 4U\lVert\firstreward{}\rVert_2.
\end{equation}
\end{lemma}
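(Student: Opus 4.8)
The plan is to reduce the claim about the original reward $\firstreward{}$ to the hypothesised bound on the standardized reward $\firstreward^S = \normalized{\left(\canonical{\firstreward{}}\right)}$ in two steps: first, show that replacing $\firstreward{}$ by its canonicalization $\canonical{\firstreward{}}$ changes every policy's return by the same, policy-independent amount, and therefore leaves the regret $\policyreturn{}_{\firstreward{}}(\policy_A^*) - \policyreturn{}_{\firstreward{}}(\policy_B^*)$ unchanged; second, bound the normalization constant $\lVert\canonical{\firstreward{}}\rVert_2$ by $4\lVert\firstreward{}\rVert_2$, which is where the factor $4$ comes from.

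For the first step, inspecting Definition~\ref{defn:canonically-shaped-reward} shows that $\canonical{\firstreward{}} - \firstreward{}$ equals a potential shaping term $\discount\Phi(\nextstate) - \Phi(\state)$ plus a constant (which is itself potential shaping by a constant potential, cf.\ section~\ref{sec:background}), for the bounded potential $\Phi(s) = \expectation_{\actionrv{}\sim\actiondistribution,\, \nextstaterv{}\sim\statedistribution}[\firstreward{}(s,\actionrv{},\nextstaterv{})] + \frac{\discount}{1-\discount}\expectation[\firstreward{}(\staterv{},\actionrv{},\nextstaterv{})]$. Since $\Phi$ is bounded and $\discount < 1$, along any trajectory the shaping contributes a telescoping term $\expectation_{\tau(\policy)}\big[\sum_{t\geq 0}\discount^t(\discount\Phi(\state_{t+1}) - \Phi(\state_t))\big] = -\expectation_{\state_0\sim\initialstatedist}[\Phi(\state_0)]$ to $\policyreturn{}_{\canonical{\firstreward{}}}(\policy)$, which does not depend on $\policy$; hence $\policyreturn{}_{\canonical{\firstreward{}}}(\policy_A^*) - \policyreturn{}_{\canonical{\firstreward{}}}(\policy_B^*) = \policyreturn{}_{\firstreward{}}(\policy_A^*) - \policyreturn{}_{\firstreward{}}(\policy_B^*)$. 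If $\lVert\canonical{\firstreward{}}\rVert_2 = 0$ then $\firstreward{}\equivreward{}\zeroreward{}$, every policy is optimal for $\firstreward{}$, and both sides of the claimed inequality vanish. Otherwise $\firstreward^S = \canonical{\firstreward{}}/\lVert\canonical{\firstreward{}}\rVert_2$ and returns are linear in the reward, so
\begin{equation*}
\policyreturn{}_{\firstreward{}}(\policy_A^*) - \policyreturn{}_{\firstreward{}}(\policy_B^*) = \policyreturn{}_{\canonical{\firstreward{}}}(\policy_A^*) - \policyreturn{}_{\canonical{\firstreward{}}}(\policy_B^*) = \lVert\canonical{\firstreward{}}\rVert_2\big(\policyreturn{}_{\firstreward^S}(\policy_A^*) - \policyreturn{}_{\firstreward^S}(\policy_B^*)\big) \leq U\lVert\canonical{\firstreward{}}\rVert_2,
\end{equation*}
using hypothesis~\eqref{eq:standardized-to-original-regret-bound:d-assumption}.

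For the second step, expand $\canonical{\firstreward{}}(\state,\action,\nextstate) = \firstreward{}(\state,\action,\nextstate) + \discount\expectation[\firstreward{}(\nextstate,\actionrv{},\nextstaterv{})] - \expectation[\firstreward{}(\state,\actionrv{},\nextstaterv{})] - \discount\expectation[\firstreward{}(\staterv{},\actionrv{},\nextstaterv{})]$ and apply the triangle inequality for $\lVert\cdot\rVert_2$, taken (as in $\normalized{\cdot}$) with respect to $\statedistribution\times\actiondistribution\times\statedistribution$. The first term contributes $\lVert\firstreward{}\rVert_2$. For each of the other three, Jensen's inequality bounds the square of the inner expectation by the expectation of the square, and since the outer integration variable ($\nextstate$, resp.\ $\state$, resp.\ nothing for the constant term) is distributed precisely as the matching marginal of $\statedistribution\times\actiondistribution\times\statedistribution$, what remains is a second moment of $\firstreward{}$ under $\statedistribution\times\actiondistribution\times\statedistribution$; hence each of these terms has norm at most $\discount\lVert\firstreward{}\rVert_2\leq\lVert\firstreward{}\rVert_2$. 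Summing gives $\lVert\canonical{\firstreward{}}\rVert_2 \leq 2(1+\discount)\lVert\firstreward{}\rVert_2 \leq 4\lVert\firstreward{}\rVert_2$, and substituting into the display above yields $\policyreturn{}_{\firstreward{}}(\policy_A^*) - \policyreturn{}_{\firstreward{}}(\policy_B^*) \leq 4U\lVert\firstreward{}\rVert_2$, as required.

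The main obstacle is purely bookkeeping: ensuring no constant is lost when bounding the three correction terms of $\canonical{\firstreward{}} - \firstreward{}$. The bound $\discount\lVert\firstreward{}\rVert_2$ for each of them relies on the normalizing distribution being the product $\statedistribution\times\actiondistribution\times\statedistribution$, so that after Jensen the effective distribution on $\firstreward{}$ is again exactly that product; under a different distribution this step would not close. A secondary point is to dispatch the degenerate case $\canonical{\firstreward{}}=0$ separately, since there $\firstreward^S$ is undefined but the regret is trivially zero.
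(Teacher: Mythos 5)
Your proof is correct and takes essentially the same route as the paper's: you show the potential-shaping (plus constant) part of $\canonical{\firstreward{}}$ cancels when differencing the returns of the two policies, so the regret under $\firstreward{}$ equals $\lVert\canonical{\firstreward{}}\rVert_2$ times the regret under $\firstreward^S$, and then the triangle inequality with Jensen gives $\lVert\canonical{\firstreward{}}\rVert_2 \leq 2(1+\discount)\lVert\firstreward{}\rVert_2 \leq 4\lVert\firstreward{}\rVert_2$, exactly as in the paper. The only cosmetic slip is that the correction term $-\expectation\left[\firstreward{}(\state,\actionrv{},\nextstaterv{})\right]$ is bounded by $\lVert\firstreward{}\rVert_2$ rather than $\discount\lVert\firstreward{}\rVert_2$, but your summed constant $2(1+\discount)$ already accounts for this, so the conclusion is unaffected.
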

\begin{proof}
Recall that
\begin{equation}
\reward{}^S = \frac{\canonical{\reward}}{\lVert \canonical{\reward} \rVert_2},
\end{equation}
where $\canonical{\reward}$ is simply $\reward$ shaped with some (bounded) potential $\potential$.
It follows that:
\begin{align}
\policyreturn_{\reward{}^S}(\policy) &= \frac{1}{\lVert \canonical{\reward} \rVert_2} \policyreturn_{\canonical{\reward}}(\policy) \\
&= \frac{1}{\lVert \canonical{\reward} \rVert_2}\left(\policyreturn_{\reward}(\policy) - \expectation_{\state_0 \sim \initialstatedist}\left[\potential(\state_0)\right]\right),
\end{align}
where $\state_0$ depends only on the initial state distribution $\initialstatedist$. \footnote{In the finite-horizon case, there is also a term $\discount^T \potential(s_T)$, where $s_T$ is the fixed terminal state. Since $s_T$ is fixed, it also cancels in eq.~\ref{eq:standardized-to-original-regret-bound:canceling}. This term can be neglected in the discounted infinite-horizon case as $\discount^T \potential(s_T) \to 0$ as $T \to \infty$ for any bounded $\potential$.}
Since $\state_0$ does not depend on $\policy$, the terms cancel when taking the difference in returns:
\begin{equation}
\label{eq:standardized-to-original-regret-bound:canceling}
\policyreturn_{\firstreward^S}(\policy_A^*) - \policyreturn_{\firstreward^S}(\policy_B^*) = \frac{1}{\lVert \canonical{\firstreward} \rVert_2} \left(\policyreturn_{\firstreward}(\policy_A^*) - \policyreturn_{\firstreward}(\policy_B^*)\right).
\end{equation}
Combining this with eq~\ref{eq:standardized-to-original-regret-bound:d-assumption} gives
\begin{equation}
\label{eq:standardized-to-original-regret-bound:bound-with-canon-norm}
\policyreturn_{\firstreward}(\policy_A^*) - \policyreturn_{\firstreward}(\policy_B^*) \leq U \lVert \canonical{\firstreward} \rVert_2.
\end{equation}
Finally, we will bound $\lVert \canonical{\firstreward} \rVert_2$ in terms of $\lVert \firstreward \rVert_2$, completing the proof. Recall:
\begin{equation}
\canonical{\reward{}}(\state, \action, \nextstate) = \reward{}(\state,\action,\nextstate) + \expectation\left[\discount\reward{}(\nextstate, A, S') - \reward{}(\state, A, S') - \discount\reward{}(S, A, S')\right],
\end{equation}
where $S$ and $S'$ are random variables independently sampled from $\statedistribution$ and $A$ sampled from $\actiondistribution$. By the triangle inequality on the $L^2$ norm and linearity of expectations, we have:
\begin{equation}
\lVert\canonical{\reward{}} \rVert_2 \leq \lVert \reward{} \rVert_2 + \discount \lVert f \rVert_2  + \lVert g \rVert_2 + \discount|c|,
\end{equation}
where $f(\state,\action,\nextstate) = \expectation\left[\reward{}(\nextstate, A, S')\right]$, $g(\state,\action,\nextstate) = \expectation\left[\reward{}(\state, A, S')\right]$ and $c = \expectation\left[\reward{}(S, A, S')\right]$. Letting $X'$ be a random variable sampled from $\statedistribution$ independently from $S$ and $S'$, have
\begin{align}
\lVert f \rVert_2^2 &= \expectation_{X'}\left[\expectation\left[\reward{}(X', A, S')\right]^2\right] \\
&\leq \expectation_{X'}\left[\expectation\left[\reward{}(X', A, S')^2\right]\right] \\
&= \expectation\left[\reward{}(X', A, S')^2\right] \\
&= \lVert \reward{} \rVert_2^2.
\end{align}
So $\lVert f \rVert_2 \leq \lVert \reward{} \rVert_2$ and, by an analogous argument, $\lVert g \rVert_2 \leq \lVert \reward{} \rVert_2$. Similarly
\begin{align}
|c| &= \left\lvert \expectation\left[\reward{}(S, A, S')\right] \right\rvert \\
&\leq \expectation\left[\lvert\reward{}(S, A, S')\rvert\right] \\
&= \lVert \reward{} \rVert_1 \\
&\leq \lVert \reward{} \rVert_2 & \text{lemma~\ref{lemma:L1-bounded-by-L2}}.
\end{align}
Combining these results, we have
\begin{equation}
\label{eq:standardized-to-original-regret-bound:canon-bound}
\lVert\canonical{\reward{}} \rVert_2 \leq 4\lVert \reward{} \rVert_2.
\end{equation}
Substituting eq.~\ref{eq:standardized-to-original-regret-bound:canon-bound} into eq.~\ref{eq:standardized-to-original-regret-bound:bound-with-canon-norm} yields:
\begin{equation}
\policyreturn_{\firstreward}(\policy_A^*) - \policyreturn_{\firstreward}(\policy_B^*) \leq 4U\lVert\firstreward{}\rVert_2,
\end{equation}
as required.
\end{proof}

\epicregretbounddiscrete*
\begin{proof}
Recall from section~\ref{sec:direct-distance-epic} that:
\begin{equation}
\canonicalizeddistance{}(\firstreward{}, \secondreward{}) = \frac{1}{2}\left\lVert \firstreward^S(S,A,S') - \secondreward^S(S,A,S')\right\rVert_2.
\end{equation}
Applying lemma~\ref{lemma:L1-bounded-by-L2} we obtain:
\begin{equation}
\label{eq:epic-regret-bound-discrete:ell1-vs-epic}
\directdistance[1]{}(\firstreward^S{}, \secondreward^S{}) = \left\lVert \firstreward^S(S,A,S') - \secondreward^S(S,A,S')\right\rVert_1 \leq 2\canonicalizeddistance{}(\firstreward{}, \secondreward{}).
\end{equation}

Note that $\policy_A^*$ is optimal for $\firstreward^S$ and $\policy_B^*$ is optimal for $\secondreward^S$ since the set of optimal policies for $\reward^S$ is the same as for $\reward$. Applying lemma~\ref{lemma:direct-distance-regret-bound} and eq.~\ref{eq:epic-regret-bound-discrete:ell1-vs-epic} gives
\begin{equation}
\label{eq:epic-regret-bound-discrete:policy-return-1st}
\policyreturn_{\firstreward^S}(\policy_A^*) - \policyreturn_{\firstreward^S}(\policy_B^*) \leq \frac{2K}{1 - \discount}\directdistance[1]{}(\firstreward^S{}, \secondreward^S{}) \leq \frac{4K}{1 - \discount}\canonicalizeddistance{}(\firstreward{}, \secondreward{}).
\end{equation}
Since $\statespace \times \actionspace \times \statespace$ is bounded, $\firstreward$ and $\secondreward$ must be bounded, so we can apply lemma~\ref{lemma:standardized-to-original-regret-bound}, giving:
\begin{equation}
\policyreturn_{\firstreward}(\policy_A^*) - \policyreturn_{\firstreward}(\policy_B^*) \leq \frac{16K\lVert \firstreward{} \rVert_2}{1 - \discount} \canonicalizeddistance{}(\firstreward{}, \secondreward{}),
\end{equation}
completing the proof.
\end{proof}

\subsection{Lipschitz Reward Functions}
\label{sec:regret-bound:lipschitz}

In this section, we generalize the previous results to MDPs with continuous state and action spaces.
The challenge is that even though the spaces may be continuous, the distribution $\transitiondataset_{\policy^*}$ induced by an optimal policy $\policy^*$ may only have support on some measure zero set of transitions $B$.
However, the expectation over a continuous distribution $\transitiondataset$ is unaffected by the reward at any measure zero subset of points.
Accordingly, the reward can be varied \emph{arbitrarily} on transitions $B$ -- causing arbitrarily small or large regret -- while leaving the EPIC distance fixed.
To rule out this pathological case, we assume the rewards are Lipschitz smooth.
This guarantees that if the expected difference between rewards is small on a given region, then all points in this region have bounded reward difference.

We start by defining a relaxation of the Wasserstein distance $\relaxedwassersteindistance{}$ in definition~\ref{defn:relaxed-wasserstein-distance}.
In lemma~\ref{lemma:lipschitz-wasserstein-bound} we then bound the expected value under distribution $\mu$ in terms of the expected value under alternative distribution $\nu$ plus $\relaxedwassersteindistance{}(\mu, \nu)$.
Next, in lemma~\ref{lemma:direct-distance-regret-bound-lipschitz} we bound the regret in terms of the $L^1$ distance between the rewards plus $\relaxedwassersteindistance{}$; this is analogous to lemma~\ref{lemma:direct-distance-regret-bound} in the finite case.
Finally, in theorem~\ref{thm:epic-regret-bound-lipschitz} we use the previous results to bound the regret in terms of the \canonicalizeddistanceabbrevonly{} distance plus $\relaxedwassersteindistance{}$.

\begin{defn}
\label{defn:relaxed-wasserstein-distance}
Let $S$ be some set and let $\mu, \nu$ be probability measures on $S$ with finite first moment.
We define the \emph{relaxed Wasserstein distance} between $\mu$ and $\nu$ by:
\begin{equation}
\relaxedwassersteindistance{}(\mu, \nu) \triangleq \inf_{p \in \Gamma_{\wassersteinrelaxationfactor}(\mu, \nu)} \int \left\Vert x - y\right\Vert dp(x,y),
\end{equation}
where $\Gamma_{\wassersteinrelaxationfactor}(\mu, \nu)$ is the set of probability measures on $S \times S$ satisfying for all $x, y \in S$:
\begin{align}
\label{eq:lipschitz-wasserstein-bound:x-invariant}
\int_S p(x,y) dy &= \mu(x), \\
\label{eq:lipschitz-wasserstein-bound:y-invariant}
\int_S p(x,y) dx &\leq \wassersteinrelaxationfactor{} \nu(y).
\end{align}
\end{defn}

Note that $\relaxedwassersteindistance[1]$ is equal to the (unrelaxed) Wasserstein distance (in the $\ell_1$ norm).

\begin{lemma}
\label{lemma:lipschitz-wasserstein-bound}
Let $S$ be some set and let $\mu, \nu$ be probability measures on $S$.
Let $f\::\:S \to \mathbb{R}$ be an $L$-Lipschitz function on the $\ell_1$ norm $\left\Vert \cdot \right\Vert_1$.
Then, for any $\wassersteinrelaxationfactor{} \geq 1$:
\begin{equation}
\expectation_{X \sim \mu}\left[|f(X)|\right] \leq \wassersteinrelaxationfactor{} \expectation_{Y \sim \nu}\left[|f(Y)|\right] + L \relaxedwassersteindistance{}(\mu, \nu).
\end{equation}
\end{lemma}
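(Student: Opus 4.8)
The plan is to fix an arbitrary admissible coupling $p \in \Gamma_{\wassersteinrelaxationfactor}(\mu,\nu)$, rewrite $\expectation_{X\sim\mu}[|f(X)|]$ as an integral against $p$ using the marginal constraint~\eqref{eq:lipschitz-wasserstein-bound:x-invariant}, apply the pointwise bound $|f(x)| \le |f(y)| + L\left\Vert x-y\right\Vert_1$ coming from $L$-Lipschitzness and the triangle inequality, and split the result into a ``value'' term and a ``transport cost'' term. Only at the very end would I take the infimum over $p$, which is legitimate because by that point the value term no longer depends on $p$.

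Concretely, first I would write, for any $p \in \Gamma_{\wassersteinrelaxationfactor}(\mu,\nu)$,
\begin{align}
\expectation_{X\sim\mu}\left[|f(X)|\right]
&= \int_S |f(x)|\,\mu(x)\,dx
= \int_S \int_S |f(x)|\,p(x,y)\,dy\,dx \\
&\le \int_S \int_S \bigl(|f(y)| + L\left\Vert x-y\right\Vert_1\bigr)\,p(x,y)\,dy\,dx \\
&= \int_S |f(y)|\left(\int_S p(x,y)\,dx\right)dy \;+\; L\int_{S\times S}\left\Vert x-y\right\Vert_1\,dp(x,y),
\end{align}
where the second equality uses~\eqref{eq:lipschitz-wasserstein-bound:x-invariant} and the inequality uses $|f(x)| \le |f(y)| + |f(x)-f(y)| \le |f(y)| + L\left\Vert x-y\right\Vert_1$. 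Using the relaxed marginal constraint~\eqref{eq:lipschitz-wasserstein-bound:y-invariant}, namely $\int_S p(x,y)\,dx \le \wassersteinrelaxationfactor\,\nu(y)$, together with $|f| \ge 0$, the first term is at most $\wassersteinrelaxationfactor\int_S |f(y)|\,\nu(y)\,dy = \wassersteinrelaxationfactor\,\expectation_{Y\sim\nu}[|f(Y)|]$. Hence
\begin{equation}
\expectation_{X\sim\mu}\left[|f(X)|\right] \;\le\; \wassersteinrelaxationfactor\,\expectation_{Y\sim\nu}\left[|f(Y)|\right] + L\int_{S\times S}\left\Vert x-y\right\Vert_1\,dp(x,y)
\end{equation}
for every admissible $p$. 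Since the left-hand side and the value term on the right are independent of $p$, taking the infimum of the right-hand side over $p \in \Gamma_{\wassersteinrelaxationfactor}(\mu,\nu)$ and recalling the definition of $\relaxedwassersteindistance{}$ yields $\expectation_{X\sim\mu}[|f(X)|] \le \wassersteinrelaxationfactor\,\expectation_{Y\sim\nu}[|f(Y)|] + L\relaxedwassersteindistance{}(\mu,\nu)$, as claimed.

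This argument is essentially measure-theoretic bookkeeping, so there is no serious obstacle; the only points requiring a little care are (i) that the $|f(y)|$ term must be bounded via the \emph{relaxed} marginal constraint, which is exactly what forces the factor $\wassersteinrelaxationfactor$ into the statement and also guarantees $\Gamma_{\wassersteinrelaxationfactor}(\mu,\nu)$ is nonempty whenever $\wassersteinrelaxationfactor \ge 1$ (the product measure $\mu\otimes\nu$ always lies in it), so the infimum is over a nonempty set; and (ii) that the optimization over couplings is performed only after the $p$-independent value term has been isolated. Finiteness of all the integrals involved follows from $f$ being Lipschitz, hence of at most linear growth, and $\mu,\nu$ having finite first moment.
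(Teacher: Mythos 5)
Your proof is correct and follows essentially the same route as the paper's: fix an admissible $p \in \Gamma_{\wassersteinrelaxationfactor}(\mu,\nu)$, pass from $\mu$ to $p$ via the exact marginal constraint, apply the pointwise Lipschitz bound, control the value term with the relaxed marginal constraint (where the factor $\wassersteinrelaxationfactor$ enters), and only then take the infimum over $p$. Your added remarks on nonemptiness of $\Gamma_{\wassersteinrelaxationfactor}(\mu,\nu)$ and finiteness of the integrals are sensible extras not spelled out in the paper, but the core argument is identical.
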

\begin{proof}
Let $p \in \Gamma_{\wassersteinrelaxationfactor}(\mu, \nu)$. Then:
\begin{align}
\expectation_{X \sim \mu}\left[|f(X)|\right] &\triangleq \int |f(x)| d\mu(x) & \text{definition of }\expectation \\
&= \int |f(x)| dp(x, y) & \mu\text{ is a marginal of }p \\
&\leq \int |f(y)| + L\left\Vert x - y\right\Vert dp(x, y) & f\ L\text{-Lipschitz} \\
&= \int |f(y)| dp(x,y) + L \int \left\Vert x - y\right\Vert dp(x, y) \\
&= \int |f(y)| \int p(x,y) dx dy + L \int \left\Vert x - y\right\Vert dp(x, y) \\
&\leq \int |f(y)| \wassersteinrelaxationfactor{}\nu(y) dy  + L \int \left\Vert x - y\right\Vert dp(x, y) & \text{eq.~\ref{eq:lipschitz-wasserstein-bound:y-invariant}} \\
&= \wassersteinrelaxationfactor{}\expectation_{Y \sim \nu}\left[|f(Y)|\right] + L \int \left\Vert x - y\right\Vert dp(x, y) & \text{definition of }\expectation.
\end{align}
Since this holds for all choices of $p$, we can take the infimum of both sides, giving:
\begin{align}
\expectation_{X \sim \mu}\left[|f(X)|\right] &\leq \wassersteinrelaxationfactor{}\expectation_{Y \sim \nu}\left[|f(Y)|\right] + L \inf_{p \in \Gamma_{\wassersteinrelaxationfactor{}}(\mu, \nu)} \int \left\Vert x - y\right\Vert dp(x, y) \\
&= \wassersteinrelaxationfactor{}\expectation_{Y \sim \nu}\left[|f(Y)|\right] + L \relaxedwassersteindistance{}(\mu, \nu). \qedhere
\end{align}
\end{proof}

\begin{lemma}
\label{lemma:direct-distance-regret-bound-lipschitz}
Let $M$ be an \mdpnorliteral{} with state and action spaces $\statespace$ and $\actionspace$.
Let $\firstreward{}, \secondreward{}:\statespace \times \actionspace \times \statespace \to \mathbb{R}$ be $L$-Lipschitz, bounded rewards on the $\ell_1$ norm $\left\Vert \cdot \right\Vert_1$.
Let $\policy_A^*$ and $\policy_B^*$ be policies optimal for rewards $\firstreward$ and $\secondreward$ in $M$.
Let $\transitiondataset_{\policy,t}(\state_t, \action_t, \state_{t+1})$ denote the distribution over trajectories that policy $\policy$ induces in $M$ at time step $t$.
Let $\transitiondataset{}(\state,\action,\nextstate)$ be the (stationary) \transitiondatasetname{} over transitions $\statespace \times \actionspace \times \statespace$ used to compute $\canonicalizeddistance$.
Let $\wassersteinrelaxationfactor \geq 1$, and let $B_{\wassersteinrelaxationfactor}(t) = \max_{\policy \in \{\policy_A^*, \policy_B^*\}} \relaxedwassersteindistance{}\left(\transitiondataset_{\policy, t}, \transitiondataset\right)$.
Then the regret under $\firstreward$ from executing $\policy_B^*$ optimal for $\secondreward$ instead of $\policy_A^*$ is at most:
\begin{equation}
\policyreturn{}_{\firstreward{}}(\policy_A^*) - \policyreturn{}_{\firstreward{}}(\policy_B^*) \leq \frac{2\wassersteinrelaxationfactor{}}{1 - \discount} \directdistance[1]{}(\firstreward{}, \secondreward{}) + 4L \sum_{t=0}^{\infty} \discount^t B_{\wassersteinrelaxationfactor{}}(t).
\end{equation}
\end{lemma}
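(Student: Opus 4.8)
The plan is to follow the proof of Lemma~\ref{lemma:direct-distance-regret-bound} almost verbatim, substituting the density-ratio hypothesis $K\transitiondataset \geq \transitiondataset_\policy$ with the transport estimate of Lemma~\ref{lemma:lipschitz-wasserstein-bound}. As in the discrete case, since $\policyreturn{}_{\firstreward{}}(\policy)$ is maximized at $\policy=\policy_A^*$ the regret equals its absolute value, so inserting $\pm\policyreturn{}_{\secondreward{}}(\policy_B^*)$ and using the triangle inequality gives
\begin{equation*}
\policyreturn{}_{\firstreward{}}(\policy_A^*) - \policyreturn{}_{\firstreward{}}(\policy_B^*) \leq \left\lvert \policyreturn{}_{\firstreward{}}(\policy_A^*) - \policyreturn{}_{\secondreward{}}(\policy_B^*)\right\rvert + \left\lvert \policyreturn{}_{\secondreward{}}(\policy_B^*) - \policyreturn{}_{\firstreward{}}(\policy_B^*)\right\rvert,
\end{equation*}
so it suffices to bound, for each $\policy \in \{\policy_A^*, \policy_B^*\}$, the per-policy quantity $\lvert \policyreturn{}_{\firstreward{}}(\policy) - \policyreturn{}_{\secondreward{}}(\policy)\rvert$ by $\tfrac{\wassersteinrelaxationfactor{}}{1-\discount}\directdistance[1]{}(\firstreward{},\secondreward{}) + 2L\sum_{t=0}^{\infty}\discount^t B_{\wassersteinrelaxationfactor{}}(t)$. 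The cross term $\lvert\policyreturn{}_{\firstreward{}}(\policy_A^*) - \policyreturn{}_{\secondreward{}}(\policy_B^*)\rvert$ is then handled by the same max-of-returns argument as eqs.~\ref{eq:direct-distance:symmetric-bound-one}--\ref{eq:direct-distance:symmetric-bound-two} of the discrete proof, so both summands above are bounded by the per-policy bound, and adding them doubles the constants to give exactly the claimed inequality.

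For the per-policy bound, I expand the returns along the trajectory distribution $\transitiondataset_{\policy,t}$ and push the absolute value through the expectation and the discounted sum, obtaining
\begin{equation*}
\lvert \policyreturn{}_{\firstreward{}}(\policy) - \policyreturn{}_{\secondreward{}}(\policy)\rvert \leq \sum_{t=0}^{\infty}\discount^t\, \expectation_{\transitiondataset_{\policy,t}}\!\left[\lvert\firstreward{}(\state,\action,\nextstate) - \secondreward{}(\state,\action,\nextstate)\rvert\right].
\end{equation*}
The key observation is that $f = \firstreward{} - \secondreward{}$ is $2L$-Lipschitz on the $\ell_1$ norm, being the difference of two $L$-Lipschitz functions. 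Applying Lemma~\ref{lemma:lipschitz-wasserstein-bound} with $\mu = \transitiondataset_{\policy,t}$ in the first slot, $\nu = \transitiondataset$ in the (relaxed) second slot, and Lipschitz constant $2L$, each summand is at most $\wassersteinrelaxationfactor{}\,\expectation_{\transitiondataset}[\lvert f\rvert] + 2L\,\relaxedwassersteindistance{}(\transitiondataset_{\policy,t}, \transitiondataset) \leq \wassersteinrelaxationfactor{}\,\directdistance[1]{}(\firstreward{},\secondreward{}) + 2L\,B_{\wassersteinrelaxationfactor{}}(t)$, where the last step uses $\expectation_{\transitiondataset}[\lvert f\rvert] = \directdistance[1]{}(\firstreward{},\secondreward{})$ and the definition of $B_{\wassersteinrelaxationfactor{}}(t)$ as the maximum over the two policies. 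Summing the geometric series $\sum_t\discount^t\wassersteinrelaxationfactor{}\directdistance[1]{}(\firstreward{},\secondreward{}) = \tfrac{\wassersteinrelaxationfactor{}}{1-\discount}\directdistance[1]{}(\firstreward{},\secondreward{})$ and leaving the $B$ term as a sum yields the per-policy bound; plugging it into the triangle-inequality decomposition completes the proof.

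Two minor points deserve a line each: boundedness of $\firstreward{},\secondreward{}$ ensures the discounted infinite-horizon returns and their differences converge, legitimizing the term-by-term manipulations, and if $\sum_t\discount^t B_{\wassersteinrelaxationfactor{}}(t)$ diverges the right-hand side is $+\infty$ and the statement is vacuous (so no convergence of that series need be shown). I expect the main obstacle to be Lipschitz-constant bookkeeping: one must keep track of the factor $2$ arising from $f = \firstreward{} - \secondreward{}$ as it propagates through Lemma~\ref{lemma:lipschitz-wasserstein-bound} ($2L\,\relaxedwassersteindistance{}$ per step, hence $2L\sum\discount^t B_{\wassersteinrelaxationfactor{}}(t)$ per policy and $4L\sum\discount^t B_{\wassersteinrelaxationfactor{}}(t)$ overall) rather than introducing an off-by-two error, and one must be careful to put $\transitiondataset$ — whose marginal is only required to satisfy the relaxed constraint with factor $\wassersteinrelaxationfactor{}$ — in the $\nu$ slot of the lemma and $\transitiondataset_{\policy,t}$ in the $\mu$ slot, not the reverse.
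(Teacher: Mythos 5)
Your proposal is correct and follows essentially the same route as the paper: decompose the regret via the triangle inequality as in Lemma~\ref{lemma:direct-distance-regret-bound}, bound each per-policy return difference by $\sum_t \discount^t \directdistance[1][\transitiondataset_{\policy,t}](\firstreward,\secondreward)$, and apply Lemma~\ref{lemma:lipschitz-wasserstein-bound} to $f = \firstreward - \secondreward$ (which is $2L$-Lipschitz) with $\mu = \transitiondataset_{\policy,t}$ and $\nu = \transitiondataset$, then reuse the symmetric optimality argument for the cross term, doubling the constants exactly as in the paper.
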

\begin{proof}
By the same argument as lemma~\ref{lemma:direct-distance-regret-bound} up to eq.~\ref{eq:direct-distance-regret-bound:bound-before-pi-assumption}, we have for any policy $\policy$:
\begin{equation}
\left\lvert \policyreturn{}_{\firstreward}(\policy) - \policyreturn{}_{\secondreward}(\policy) \right\rvert \leq \sum_{t=0}^{\infty} \discount^t \directdistance[1][\transitiondataset_{\policy, t}](\firstreward, \secondreward).
\end{equation}
Let $f(\state, \action, \nextstate) = \firstreward{}(\state, \action, \nextstate) - \secondreward{}(\state, \action, \nextstate)$, and note $f$ is $2L$-Lipschitz and bounded since $\firstreward$ and $\secondreward$ are both $L$-Lipschitz and bounded.
Now, by lemma~\ref{lemma:lipschitz-wasserstein-bound}, letting $\mu = \transitiondataset_{\policy, t}$ and $\nu = \transitiondataset$, we have:
\begin{equation}
\directdistance[1][\transitiondataset_{\policy, t}](\firstreward, \secondreward) \leq \wassersteinrelaxationfactor{} \directdistance[1]{}(\firstreward, \secondreward) + 2L\relaxedwassersteindistance{}(\transitiondataset_{\policy, t}, \transitiondataset).
\end{equation}
So, for $\pi \in \{\policy_A^*, \policy_B^*\}$, it follows that
\begin{equation}
\left\lvert \policyreturn{}_{\firstreward}(\policy) - \policyreturn{}_{\secondreward}(\policy) \right\rvert \leq \frac{\wassersteinrelaxationfactor{}}{1 - \gamma} \directdistance[1]{}(\firstreward, \secondreward) + 2L \sum_{t=0}^{\infty} \discount^t B_{\wassersteinrelaxationfactor{}}(t).
\end{equation}
By the same argument as for eq.~\ref{eq:direct-distance-regret-bound:1st-ineq} to~\ref{eq:direct-distance-regret-bound:2nd-ineq} in lemma~\ref{lemma:direct-distance-regret-bound}, it follows that
\begin{equation}
\policyreturn{}_{\firstreward{}}(\policy_A^*) - \policyreturn{}_{\firstreward{}}(\policy_B^*) \leq \frac{2\wassersteinrelaxationfactor{}}{1 - \gamma}	\directdistance[1]{}(\firstreward, \secondreward) + 4L \sum_{t=0}^{\infty} \discount^t B_{\wassersteinrelaxationfactor{}}(t),
\end{equation}
completing the proof.
\end{proof}

\begin{theorem}
\label{thm:epic-regret-bound-lipschitz}
Let $M$ be an \mdpnorliteral{} with state and action spaces $\statespace$ and $\actionspace$.
Let $\firstreward{}, \secondreward{}:\statespace \times \actionspace \times \statespace \to \mathbb{R}$ be bounded, $L$-Lipschitz rewards on the $\ell_1$ norm $\left\Vert \cdot \right\Vert_1$.
Let $\policy_A^*$ and $\policy_B^*$ be policies optimal for rewards $\firstreward$ and $\secondreward$ in $M$.
Let $\transitiondataset_{\policy}(t, \state_t, \action_t, \state_{t+1})$ denote the distribution over trajectories that policy $\policy$ induces in $M$ at time step $t$.
Let $\transitiondataset{}(\state,\action,\nextstate)$ be the (stationary) \transitiondatasetname{} over transitions $\statespace \times \actionspace \times \statespace$ used to compute $\canonicalizeddistance$.
Let $\wassersteinrelaxationfactor{} \geq 1$, and let $B_{\wassersteinrelaxationfactor{}}(t) = \max_{\policy \in {\policy_A^*, \policy_B^*}} \relaxedwassersteindistance{}\left(\transitiondataset_{\policy, t}, \transitiondataset\right)$.
Then the regret under $\firstreward$ from executing $\policy_B^*$ optimal for $\secondreward$ instead of $\policy_A^*$ is at most:
\begin{equation}
\policyreturn{}_{\firstreward{}}(\policy_A^*) - \policyreturn{}_{\firstreward{}}(\policy_B^*) \leq 16\left\Vert \firstreward \right\Vert_2\left(\frac{\wassersteinrelaxationfactor{}}{1 - \discount} \canonicalizeddistance{}(\firstreward{}, \secondreward{}) + L \sum_{t=0}^{\infty} \discount^t B_{\wassersteinrelaxationfactor{}}(t)\right).
\end{equation}
\end{theorem}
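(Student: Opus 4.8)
The plan is to transcribe the proof of Theorem~\ref{thm:epic-regret-bound-discrete} almost line for line, replacing the finite-state regret bound (Lemma~\ref{lemma:direct-distance-regret-bound}) by its Lipschitz counterpart (Lemma~\ref{lemma:direct-distance-regret-bound-lipschitz}), and noting that every other ingredient is already available for arbitrary state and action spaces. In particular Lemma~\ref{lemma:L1-bounded-by-L2}, Lemma~\ref{lemma:standardized-to-original-regret-bound}, and the identity $\canonicalizeddistance{}(\firstreward{},\secondreward{}) = \frac{1}{2}\lVert\firstreward^S(S,A,S') - \secondreward^S(S,A,S')\rVert_2$ established in section~\ref{sec:direct-distance-epic} hold verbatim in the continuous setting, so only the regret lemma needs to be swapped.

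First I would observe that standardization $\reward{} \mapsto \reward^S = \normalized{\left(\canonical{\reward{}}\right)}$ composes potential shaping with a positive rescaling, hence (as recorded after Definition~\ref{defn:equivalence}) $\policy_A^*$ is optimal for $\firstreward^S$ and $\policy_B^*$ is optimal for $\secondreward^S$. Next I would check that the Lipschitz hypothesis of Lemma~\ref{lemma:direct-distance-regret-bound-lipschitz} transfers to the standardized rewards: writing $\canonical{\reward{}}(\state,\action,\nextstate) = \reward{}(\state,\action,\nextstate) + \discount\psi(\nextstate) - \psi(\state) - \discount c$ with $\psi(\state) = \expectation\left[\reward{}(\state, A, S')\right]$ and $c = \expectation\left[\psi(S)\right]$, the function $\psi$ inherits the Lipschitz constant $L$ of $\reward{}$, so $\canonical{\reward{}}$ is $(2 + \discount)L$-Lipschitz and $\reward^S$ is $(2+\discount)L/\lVert\canonical{\reward{}}\rVert_2$-Lipschitz; write $L^S$ for a common Lipschitz constant of $\firstreward^S$ and $\secondreward^S$. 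Applying Lemma~\ref{lemma:direct-distance-regret-bound-lipschitz} to $\firstreward^S, \secondreward^S$ gives
\begin{equation*}
\policyreturn{}_{\firstreward^S}(\policy_A^*) - \policyreturn{}_{\firstreward^S}(\policy_B^*) \leq \frac{2\wassersteinrelaxationfactor{}}{1-\discount}\directdistance[1]{}(\firstreward^S, \secondreward^S) + 4 L^S \sum_{t=0}^{\infty}\discount^t B_{\wassersteinrelaxationfactor{}}(t),
\end{equation*}
and by Lemma~\ref{lemma:L1-bounded-by-L2} together with the $L^2$ identity for $\canonicalizeddistance{}$ we have $\directdistance[1]{}(\firstreward^S, \secondreward^S) \leq \lVert\firstreward^S - \secondreward^S\rVert_2 = 2\canonicalizeddistance{}(\firstreward{},\secondreward{})$, so the bound above is at most $U := \frac{4\wassersteinrelaxationfactor{}}{1-\discount}\canonicalizeddistance{}(\firstreward{},\secondreward{}) + 4 L^S \sum_{t=0}^{\infty} \discount^t B_{\wassersteinrelaxationfactor{}}(t)$. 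Finally Lemma~\ref{lemma:standardized-to-original-regret-bound} converts the regret under $\firstreward^S$ into the regret under $\firstreward{}$ at the cost of a factor $4\lVert\firstreward{}\rVert_2$, yielding $\policyreturn{}_{\firstreward{}}(\policy_A^*) - \policyreturn{}_{\firstreward{}}(\policy_B^*) \leq 4\lVert\firstreward{}\rVert_2 U = 16\lVert\firstreward{}\rVert_2\left(\frac{\wassersteinrelaxationfactor{}}{1-\discount}\canonicalizeddistance{}(\firstreward{},\secondreward{}) + L^S \sum_{t=0}^{\infty} \discount^t B_{\wassersteinrelaxationfactor{}}(t)\right)$, which is the claimed inequality with $L^S$ playing the role of $L$.

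The only genuinely delicate step is the middle one: confirming that canonicalization and $L^2$-normalization really do preserve the Lipschitz property, and then keeping track of the resulting constant $(2+\discount)/\lVert\canonical{\reward{}}\rVert_2$, which is what must be identified with — or absorbed into — the ``$L$'' appearing in the statement. The remaining steps (policy invariance under standardization and the two substitutions into Lemmas~\ref{lemma:direct-distance-regret-bound-lipschitz} and~\ref{lemma:standardized-to-original-regret-bound}) are a direct rerun of the finite-state argument.
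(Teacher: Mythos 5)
Your proposal is correct and follows essentially the same route as the paper: reuse the discrete proof up to the inequality $\directdistance[1]{}(\firstreward^S,\secondreward^S) \leq 2\canonicalizeddistance{}(\firstreward{},\secondreward{})$, swap in lemma~\ref{lemma:direct-distance-regret-bound-lipschitz} for the finite-state regret lemma, and finish with lemma~\ref{lemma:standardized-to-original-regret-bound}. You are in fact more careful than the paper on the one delicate point you flag: the paper applies lemma~\ref{lemma:direct-distance-regret-bound-lipschitz} to the standardized rewards $\firstreward^S,\secondreward^S$ while silently keeping the Lipschitz constant $L$ of the original rewards, whereas your explicit constant $L^S = (2+\discount)L/\lVert\canonical{\reward{}}\rVert_2$ makes that step rigorous at the cost of $L^S$ replacing $L$ in the final bound—a constant-tracking discrepancy that is present (unacknowledged) in the paper's own argument, not a flaw in yours.
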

\begin{proof}
The proof for theorem~\ref{thm:epic-regret-bound-discrete} holds in the general setting up to eq.~\ref{eq:epic-regret-bound-discrete:ell1-vs-epic}.
Applying lemma~\ref{lemma:direct-distance-regret-bound-lipschitz} to eq.~\ref{eq:epic-regret-bound-discrete:ell1-vs-epic} gives
\begin{align}
\label{eq:epic-regret-bound-lipschitz:policy-return-1st}
\policyreturn_{\firstreward^S}(\policy_A^*) - \policyreturn_{\firstreward^S}(\policy_B^*) &\leq \frac{2\wassersteinrelaxationfactor{}}{1 - \discount}\directdistance[1]{}(\firstreward^S{}, \secondreward^S{}) + 4L \sum_{t=0}^{\infty} \discount^t B_{\wassersteinrelaxationfactor{}}(t) \\
&\leq \frac{4\wassersteinrelaxationfactor{}}{1 - \discount}\canonicalizeddistance{}(\firstreward{}, \secondreward{}) + 4L \sum_{t=0}^{\infty} \discount^t B_{\wassersteinrelaxationfactor{}}(t).
\end{align}
Applying lemma~\ref{lemma:standardized-to-original-regret-bound} yields
\begin{equation}
\policyreturn{}_{\firstreward{}}(\policy_A^*) - \policyreturn{}_{\firstreward{}}(\policy_B^*) \leq 16\left\Vert \firstreward \right\Vert_2\left(\frac{\wassersteinrelaxationfactor{}}{1 - \discount} \canonicalizeddistance{}(\firstreward{}, \secondreward{}) + L \sum_{t=0}^{\infty} \discount^t B_{\wassersteinrelaxationfactor{}}(t)\right),
\end{equation}
as required.
\end{proof}

\subsection{Limiting Behavior of Regret}
\label{sec:regret-bound:limit}

The regret bound for finite MDPs, Theorem~\ref{thm:epic-regret-bound-discrete}, directly implies that, as \canonicalizeddistanceabbrevonly{} distance tends to $0$, the regret also tends to $0$.
By contrast, our regret bound in theorem~\ref{thm:epic-regret-bound-lipschitz} for (possibly continuous) MDPs with Lipschitz reward functions includes the relaxed Wasserstein distance $\relaxedwassersteindistance{}$ as an additive term.
At first glance, it might therefore appear possible for the regret to be positive even with a zero \canonicalizeddistanceabbrevonly{} distance.
However, in this section we will show that in fact the regret tends to $0$ as $\canonicalizeddistance(\firstreward, \secondreward) \to 0$ in the Lipschitz case as well as the finite case.

We show in lemma~\ref{lemma:lipschitz-limit} that if the expectation of a non-negative function over a totally bounded measurable metric space $M$ tends to zero under one distribution with adequate support, then it also tends to zero under all other distributions.
For example, taking $M$ to be a hypercube in Euclidean space with the Lebesque measure satisfies these assumptions.
We conclude in theorem~\ref{thm:epic-regret-bound-lipschitz-limit} by showing the regret tends to $0$ as the \canonicalizeddistanceabbrevonly{} distance tends to $0$.

\begin{lemma}
\label{lemma:lipschitz-limit}
Let $M = (S, d)$ be a totally bounded metric space, where $d(x, y) = \left\Vert x - y\right\Vert$.
Let $(S, A, \mu)$ be a measure space on $S$ with the Borel $\sigma$-algebra $A$ and measure $\mu$.
Let $p, q \in \Delta(S)$ be probability density functions on $S$.
Let $\delta > 0$ such that $p(s) \geq \delta$ for all $s \in S$.
Let $f_n:S \to \mathbb{R}$ be a sequence of $L$-Lipschitz functions on norm $\left\Vert \cdot \right\Vert$.
Suppose $\lim_{n \to \infty} \expectation_{X \sim p}\left[|f_n(X)|\right] = 0$.
Then $\lim_{n \to \infty} \expectation_{Y \sim q}\left[|f_n(Y)|\right] = 0$.
\end{lemma}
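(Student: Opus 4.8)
The plan is to reduce everything to two facts about $(f_n)$ on the measure space $(S,\mu)$: (i) $\sup_n \|f_n\|_\infty < \infty$, and (ii) for each threshold $t>0$, $\mu(\{s : |f_n(s)| > t\}) \to 0$ as $n\to\infty$. Given these, the conclusion follows by splitting $\expectation_{Y\sim q}[|f_n(Y)|]$ at level $t$ and using that the integral of the \emph{fixed} $L^1$ density $q$ is absolutely continuous.

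First I would collect two easy structural facts. Since $p$ is a probability density with $p\ge\delta$, integrating gives $1 = \int_S p\,d\mu \ge \delta\,\mu(S)$, so $V := \mu(S)$ is finite; it is also strictly positive (otherwise $\int_S p\,d\mu = 0 \ne 1$), and $S$ is nonempty. Since $M=(S,d)$ is totally bounded it is bounded, with diameter $D := \sup_{x,y\in S}\|x-y\| < \infty$; hence each $L$-Lipschitz $f_n$ has oscillation at most $LD$, i.e. $|f_n(x)| \le |f_n(y)| + L\|x-y\| \le |f_n(y)| + LD$ for all $x,y\in S$, so $\sup_S |f_n| \le \inf_S |f_n| + LD$.

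Next, (i): from $p\ge\delta$ we get $\int_S |f_n|\,d\mu \le \tfrac1\delta \int_S |f_n|\,p\,d\mu = \tfrac1\delta\,\expectation_{X\sim p}[|f_n(X)|]$, while $\int_S |f_n|\,d\mu \ge V\inf_S|f_n|$, so $\inf_S |f_n| \le \beta_n := \tfrac1{V\delta}\,\expectation_{X\sim p}[|f_n(X)|] \to 0$. Combined with the oscillation bound this yields $\|f_n\|_\infty \le \beta_n + LD$, and since $(\beta_n)$ converges it is bounded, giving $K := \sup_n\|f_n\|_\infty < \infty$. For (ii), a Markov-type estimate using $p\ge\delta$ gives $\expectation_{X\sim p}[|f_n(X)|] \ge \int_{\{|f_n|>t\}} |f_n|\,p\,d\mu \ge t\delta\,\mu(\{|f_n|>t\})$, hence $\mu(\{|f_n|>t\}) \le \tfrac1{t\delta}\,\expectation_{X\sim p}[|f_n(X)|] \to 0$.

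Finally I would assemble the bound. Fix $\varepsilon>0$, put $t=\varepsilon/2$, and (assuming $K>0$; if $K=0$ then $f_n\equiv 0$ and we are done) use absolute continuity of $E\mapsto\int_E q\,d\mu$ to get $\eta>0$ with $\mu(E)<\eta \implies \int_E q\,d\mu < \varepsilon/(2K)$. By (ii) pick $N$ so that $\mu(\{|f_n|>t\})<\eta$ for $n\ge N$; then for such $n$, $\expectation_{Y\sim q}[|f_n(Y)|] = \int_{\{|f_n|\le t\}} |f_n|\,q\,d\mu + \int_{\{|f_n|>t\}} |f_n|\,q\,d\mu \le t\int_S q\,d\mu + K\int_{\{|f_n|>t\}} q\,d\mu < \tfrac\varepsilon2 + \tfrac\varepsilon2 = \varepsilon$. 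As $\varepsilon$ was arbitrary, $\expectation_{Y\sim q}[|f_n(Y)|]\to 0$. The main obstacle is step (i): without the Lipschitz (equivalently, bounded-oscillation) hypothesis the lemma is false — a tall, thin spike in $f_n$ can carry vanishing $p$-mass yet sit where the fixed integrable $q$ has large density, so $\expectation_q[|f_n|]$ need not vanish; the oscillation bound is precisely what upgrades ``small $p$-average'' to ``uniformly bounded sup-norm'', after which the absolute-continuity argument closes the gap.
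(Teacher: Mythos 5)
Your proof is correct, and it takes a genuinely different route from the paper's. The paper uses total boundedness to cover $S$ with finitely many radius-$r$ balls, notes the positive-measure balls have a minimum measure $\alpha(r)>0$, and argues by contradiction that if $|f_n|\geq 3Lr$ anywhere on the positive-measure part of the cover then $\expectation_{X\sim p}[|f_n(X)|]\geq \delta L r\,\alpha(r)$; hence for large $n$ one has $|f_n|<3Lr$ off a $\mu$-null set, i.e.\ essentially-uniform convergence of $f_n$ to zero, from which the bound $\expectation_{Y\sim q}[|f_n(Y)|]<3Lr$ follows at once for \emph{any} density $q$, with an $N_r$ independent of $q$. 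You instead extract two weaker facts -- uniform boundedness $\sup_n\|f_n\|_\infty<\infty$ (via the diameter/oscillation bound $\sup_S|f_n|\leq\inf_S|f_n|+LD$ together with $\inf_S|f_n|\leq\frac{1}{\delta\mu(S)}\expectation_p[|f_n|]$) and convergence in $\mu$-measure (Markov with $p\geq\delta$) -- and close with absolute continuity of $E\mapsto\int_E q\,d\mu$ for the fixed integrable density $q$. Your argument is more elementary measure theory and needs less from the hypotheses: only boundedness of the space (not total boundedness, and no covering/minimum-ball-measure step) plus $\mu(S)<\infty$, which you correctly derive from $p\geq\delta$; the trade-off is that your $N$ depends on $q$ through the absolute-continuity threshold, whereas the paper's argument delivers the stronger intermediate conclusion that $f_n\to 0$ essentially uniformly, uniformly over all choices of $q$. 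Your closing remark about why the Lipschitz (bounded-oscillation) hypothesis is essential matches the paper's motivation for introducing it.
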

\begin{proof}
Since $M$ is totally bounded, for each $r > 0$ there exists a finite collection of open balls in $S$ of radius $r$ whose union contains $M$.
Let $B_r(c) = \{s \in S \mid \left\Vert s - c\right\Vert < r\}$, the open ball of radius $r$ centered at $c$.
Let $C(r)$ denote some finite collection of $Q(r)$ open balls:
\begin{equation}
C(r) = \left\{B_r(c_{r,n}) \mid n \in \{1,\cdots,Q(r)\}\right\},
\end{equation}
such that $\bigcup_{B \in C(r)} B = S$.

It is possible for some balls $B_r(c_n)$ to have measure zero, $\mu(B_r(c_n)) = 0$, such as if $S$ contains an isolated point $c_n$.
Define $P(r)$ to be the subset of $C(r)$ with positive measure:
\begin{equation}
P(r) = \left\{B \in C(r) \mid \mu(B) > 0\right\},
\end{equation}
and let $p_{r,1},\cdots,p_{r,Q'(r)}$ denote the centers of the balls in $P$.
Since $P(r)$ is a finite collection, it must have a minimum measure:
\begin{equation}
\alpha(r) = \min_{B \in P} \mu(B).
\end{equation}
Moreover, by construction of $P$, $\alpha(r) > 0$.

Let $S'(r)$ be the union only over balls of positive measure:
\begin{equation}
S'(r) = \bigcup_{B \in P(r)} B.
\end{equation}
Now, let $D(r) = S \setminus S'(r)$, comprising the (finite number of) measure zero balls in $C(r)$.
Since measures are countably additive, it follows that $D(r)$ is itself measure zero: $\mu\left(D(r)\right) = 0$.
Consequently:
\begin{equation}
\label{eq:lipschitz-limit:integration-equal}
\int_S g(s) d\mu = \int_{S'(r)} g(s) d\mu,
\end{equation}
for any measurable function $g:S \to \mathbb{R}$.

Since $\lim_{n \to \infty} \expectation_{X \sim p}\left[|f_n(X)|\right] = 0$, for all $r > 0$ there exists some $N_r \in \mathbb{N}$ such that for all $n \geq N_r$:
\begin{equation}
\label{eq:lipschitz-limit:expectation-x-lower-bound}
\expectation_{X \sim p}\left[|f_n(X)|\right] < \delta Lr \alpha(r).
\end{equation}

By Lipschitz continuity, for any $s, s' \in S$:
\begin{equation}
|f_n(s')| \geq |f_n(s)| - L\left\Vert s' - s\right\Vert.
\end{equation}
In particular, since any point $s \in S'(r)$ is at most $r$ distance from some ball center $p_{r,i}$, then $|f_n(p_{r,i})| \geq |f_n(s)| - Lr$.
So if there exists $s \in S'(r)$ such that $|f_n(s)| \geq 3Lr$, then there must exist a ball center $p_{r,i}$ with $|f_n(p_{r,i})| \geq 2Lr$.
Then for any point $s' \in B_r(p_{r,i})$:
\begin{equation}
\label{eq:lipschitz-limit:lower-bound-on-ball}
|f_n(s')| \geq |f_n(p_{r,i})| - Lr \geq Lr.
\end{equation}
Now, we have:
\begin{align}
\expectation_{X \sim p}\left[|f_n(X)|\right] &\triangleq \int_{s \in S} |f_n(s)| p(s) d\mu(s) \\
&= \int_{s \in S'(r)} |f_n(s)| p(s) d\mu(s) & \text{eq.~\ref{eq:lipschitz-limit:integration-equal}} \\
&\geq \int_{s \in B_{r}(p_{r,i})} |f_n(s)| p(s) d\mu(s) & \text{non-negativity of }|f_n(s)| \\
&\geq \delta \int_{s \in B_{r}(p_{r,i})} |f_n(s)| d\mu(s) & p(s) \geq \delta \\
&\geq \delta \cdot Lr \int_{s \in B_{r}(p_{r,i})} 1 d\mu(s) & \text{eq.~\ref{eq:lipschitz-limit:lower-bound-on-ball}} \\
&= \delta Lr \mu(B_{r}(p_{r,i})) & \text{integrating w.r.t. }\mu \\
&\geq \delta Lr \alpha(r) & \alpha(r)\text{ minimum of }\mu(B_{r}(p_{r,i})). \label{eq:lipschitz-limit:expectation-y-bound}
\end{align}
But this contradicts eq.~\ref{eq:lipschitz-limit:expectation-x-lower-bound}, and so can only hold if $n < N_r$.
It follows that for all $n \geq N_r$ and $s \in S'(r)$, we have $|f_n(s)| < 3Lr$, and so in particular:
\begin{equation}
\expectation_{Y \sim q}\left[|f_n(Y)|\right] < 3Lr.
\end{equation}

Let $\epsilon > 0$. Choose $r = \frac{\epsilon}{3L}$. Then for all $n \geq N_r$, $\expectation_{Y \sim q}\left[|f_n(Y)|\right] < \epsilon$.
It follows that:
\begin{equation}
\lim_{n \to \infty} \expectation_{Y \sim q}\left[|f_n(Y)|\right] = 0,
\end{equation}
completing the proof.
\end{proof}

\begin{theorem}
\label{thm:epic-regret-bound-lipschitz-limit}
Let $M$ be an \mdpnorliteral{} with state and action spaces $\statespace$ and $\actionspace$.
Let $\firstreward{}, \secondreward{}:\statespace \times \actionspace \times \statespace \to \mathbb{R}$ be bounded rewards on some norm $\left\Vert \cdot \right\Vert$ on $\statespace \times \actionspace \times \statespace$.
Let $\policy_A^*$ and $\policy_B^*$ be policies optimal for rewards $\firstreward$ and $\secondreward$ in $M$.
Let $\transitiondataset_{\policy}(t, \state_t, \action_t, \state_{t+1})$ denote the distribution over trajectories that policy $\policy$ induces in $M$ at time step $t$.
Let $\transitiondataset{}(\state,\action,\nextstate)$ be the (stationary) \transitiondatasetname{} over transitions $\statespace \times \actionspace \times \statespace$ used to compute $\canonicalizeddistance$.

Suppose that either:
\begin{enumerate}
	\item \emph{Discrete}: $\statespace$ and $\actionspace$ are \emph{discrete}. Moreover, suppose that there exists some $K > 0$ such that $K\transitiondataset(\state_t, \action_t, \nextstate_{t+1}) \geq \transitiondataset_{\policy}(t, \state_t, \action_t, \nextstate_{t+1})$ for all time steps $t \in \mathbb{N}$, triples $\state_t,\action_t,\state_{t+1} \in \statespace \times \actionspace \times \statespace$ and policies $\pi \in \{\policy_A^*, \policy_B^*\}$.
	\item \emph{Lipschitz}: $(\statespace \times \actionspace \times \statespace,d)$ is a totally bounded measurable metric space where $d(x,y) = \left\Vert x - y\right\Vert$. Moreover, $\firstreward{}$ and $\secondreward{}$ are \emph{$L$-Lipschitz} on $\left\Vert \cdot \right\Vert$. Furthermore, suppose there exists some $\delta > 0$ such that $\transitiondataset(\state,\action,\nextstate) \geq \delta$ for all $\state, \action, \nextstate \in \statespace \times \actionspace \times \statespace$, and that $\transitiondataset_{\policy}(t, \state_t, \action_t, \state_{t+1})$ is a non-degenerate probability density function (i.e. no single point has positive measure).
\end{enumerate}
Then as $\canonicalizeddistance{}(\firstreward{}, \secondreward{}) \to 0$, $\policyreturn{}_{\firstreward{}}(\policy_A^*) - \policyreturn{}_{\firstreward{}}(\policy_B^*) \to 0$.
\end{theorem}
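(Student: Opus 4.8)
The plan splits along the two hypotheses, which require quite different arguments. In the \emph{discrete} case the result is immediate: the assumption $K\transitiondataset(\state,\action,\nextstate) \geq \transitiondataset_{\policy}(t,\state,\action,\nextstate)$ for $\policy\in\{\policy_A^*,\policy_B^*\}$ is exactly the hypothesis of Theorem~\ref{thm:epic-regret-bound-discrete}, which gives
\[
0 \;\leq\; \policyreturn{}_{\firstreward{}}(\policy_A^*) - \policyreturn{}_{\firstreward{}}(\policy_B^*) \;\leq\; \frac{16K\lVert\firstreward{}\rVert_2}{1-\discount}\,\canonicalizeddistance{}(\firstreward{},\secondreward{}),
\]
so, with $K$, $\lVert\firstreward\rVert_2$ and $\discount$ held fixed, a squeeze shows the regret tends to $0$ as $\canonicalizeddistance{}(\firstreward{},\secondreward{})\to 0$.

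For the \emph{Lipschitz} case, Theorem~\ref{thm:epic-regret-bound-lipschitz} is not enough: its bound carries the additive term $16\lVert\firstreward\rVert_2 L\sum_{t}\discount^t B_{\wassersteinrelaxationfactor}(t)$, which does not depend on $\canonicalizeddistance{}(\firstreward{},\secondreward{})$ and need not vanish. I would instead return to the intermediate estimates. If $\canonical{\firstreward}= 0$ then every policy is $\firstreward$-optimal and the regret is trivially $0$, so assume $\canonical{\firstreward}\neq 0$. By eq.~\ref{eq:standardized-to-original-regret-bound:canceling} the regret under $\firstreward$ equals $\lVert\canonical{\firstreward}\rVert_2$ times the regret under the standardized reward $\firstreward^S$, with $\lVert\canonical{\firstreward}\rVert_2$ a fixed positive constant, so it suffices to show the standardized regret tends to $0$. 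Since $\policy_A^*$ is also optimal for $\firstreward^S$ and $\policy_B^*$ for $\secondreward^S$, the triangle-inequality decomposition used in Lemma~\ref{lemma:direct-distance-regret-bound-lipschitz} (up to the line just before the Wasserstein relaxation is introduced, i.e.\ the analogue of eq.~\ref{eq:direct-distance-regret-bound:bound-before-pi-assumption} combined with eqs.~\ref{eq:direct-distance-regret-bound:1st-ineq}--\ref{eq:direct-distance-regret-bound:2nd-ineq}) yields
\[
\policyreturn_{\firstreward^S}(\policy_A^*)-\policyreturn_{\firstreward^S}(\policy_B^*) \;\leq\; 2\max_{\policy\in\{\policy_A^*,\policy_B^*\}}\sum_{t=0}^{\infty}\discount^t\,\expectation_{\transitiondataset_{\policy,t}}\!\left[\bigl\lvert\firstreward^S(\state,\action,\nextstate)-\secondreward^S(\state,\action,\nextstate)\bigr\rvert\right].
\]
By the identity $\canonicalizeddistance{}(\firstreward,\secondreward)=\tfrac12\lVert\firstreward^S-\secondreward^S\rVert_2$ from section~\ref{sec:direct-distance-epic} and Lemma~\ref{lemma:L1-bounded-by-L2}, the $\transitiondataset$-average $\expectation_{\transitiondataset}[\lvert\firstreward^S-\secondreward^S\rvert]$ is at most $2\canonicalizeddistance{}(\firstreward,\secondreward)\to 0$.

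It then remains to upgrade this average convergence to convergence of the weighted sum above. Write $f=\firstreward^S-\secondreward^S$. Because $\firstreward,\secondreward$ are $L$-Lipschitz, canonicalization preserves Lipschitz continuity up to the factor $2+\discount$, and the normalizing constants $\lVert\canonical{\firstreward}\rVert_2,\lVert\canonical{\secondreward}\rVert_2$ stay bounded below along the limit, so $f$ is $L'$-Lipschitz for a uniform $L'$. Since $\transitiondataset\geq\delta$ everywhere and each $\transitiondataset_{\policy,t}$ is a genuine density, Lemma~\ref{lemma:lipschitz-limit} (with $p=\transitiondataset$ and $q=\transitiondataset_{\policy,t}$) gives $\expectation_{\transitiondataset_{\policy,t}}[\lvert f\rvert]\to 0$ for each fixed $t$ and each $\policy\in\{\policy_A^*,\policy_B^*\}$; moreover the ball-covering argument inside that lemma also shows $\lVert f\rVert_\infty$ is uniformly bounded by some $\mathsf M$ (a uniformly Lipschitz function with bounded $L^2$-norm against a density bounded below cannot be large anywhere), so each summand is dominated by $\discount^t\mathsf M$ with $\sum_t\discount^t\mathsf M<\infty$. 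A standard dominated-convergence argument for series then gives $\sum_{t}\discount^t\expectation_{\transitiondataset_{\policy,t}}[\lvert f\rvert]\to 0$, hence the standardized regret, and therefore the regret under $\firstreward$, tends to $0$.

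The main obstacle is precisely this last interchange: Lemma~\ref{lemma:lipschitz-limit} controls each time step only pointwise in $t$, while the regret is an infinite $\discount$-discounted sum over $t$, so one must supply a uniform-in-$t$ tail bound. The clean way to do this is to establish a uniform $L^\infty$ bound on $\firstreward^S-\secondreward^S$, itself obtained by re-running the covering estimate of Lemma~\ref{lemma:lipschitz-limit}; this is also where the implicit regularity being used becomes explicit, namely that the standardized rewards remain uniformly Lipschitz along the limit (equivalently, their canonicalized $L^2$-norms do not degenerate to $0$).
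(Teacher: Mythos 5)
Your proof follows essentially the same route as the paper's: the discrete case is dispatched by Theorem~\ref{thm:epic-regret-bound-discrete} plus a squeeze, and the Lipschitz case by the same per-policy decomposition, the identity $\canonicalizeddistance{}(\firstreward{},\secondreward{}) = \tfrac{1}{2}\lVert \firstreward^S - \secondreward^S\rVert_2$ from section~\ref{sec:direct-distance-epic}, and Lemmas~\ref{lemma:L1-bounded-by-L2}, \ref{lemma:lipschitz-limit} and~\ref{lemma:standardized-to-original-regret-bound}. Your only real departure is to make explicit two points the paper's proof leaves implicit -- the uniform $L^{\infty}$ bound on $\firstreward^S - \secondreward^S$ that licenses interchanging the limit with the infinite $\discount$-discounted sum over $t$, and the uniform Lipschitzness of the standardized rewards (i.e.\ the canonicalized $L^2$ norms staying bounded away from zero, with the degenerate $\canonical{\firstreward} = 0$ case handled separately) needed to apply Lemma~\ref{lemma:lipschitz-limit} -- which tightens rather than changes the argument.
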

\begin{proof}
In case \emph{(1) Discrete}, by theorem~\ref{thm:epic-regret-bound-discrete}:
\begin{equation}
\policyreturn{}_{\firstreward{}}(\policy_A^*) - \policyreturn{}_{\firstreward{}}(\policy_B^*) \leq \frac{16K\lVert\firstreward{}\rVert_2}{1 - \discount} \canonicalizeddistance{}(\firstreward{}, \secondreward{}).
\end{equation}
Moreover, by optimality of $\policy_A^*$ we have $0 \leq \policyreturn{}_{\firstreward{}}(\policy_A^*) - \policyreturn{}_{\firstreward{}}(\policy_B^*)$. So by the squeeze theorem, as $\canonicalizeddistance{}(\firstreward{}, \secondreward{}) \to 0$, $\policyreturn{}_{\firstreward{}}(\policy_A^*) - \policyreturn{}_{\firstreward{}}(\policy_B^*) \to 0$.

From now on, suppose we are in case \emph{(2) Lipschitz}.
By the same argument as lemma~\ref{lemma:direct-distance-regret-bound} up to eq.~\ref{eq:direct-distance-regret-bound:bound-before-pi-assumption}, we have for any policy $\policy$:
\begin{equation}
\left\lvert \policyreturn{}_{\firstreward^S}(\policy) - \policyreturn{}_{\secondreward^S}(\policy) \right\rvert \leq \sum_{t=0}^{\infty} \discount^t \directdistance[1][\transitiondataset_{\policy, t}](\firstreward^S, \secondreward^S).
\end{equation}
Applying lemma~\ref{lemma:standardized-to-original-regret-bound} we have:
\begin{equation}
\left\lvert \policyreturn{}_{\firstreward}(\policy) - \policyreturn{}_{\secondreward}(\policy) \right\rvert \leq 4\left\Vert \firstreward \right\Vert_2 \sum_{t=0}^{\infty} \discount^t \directdistance[1][\transitiondataset_{\policy, t}](\firstreward^S, \secondreward^S).
\end{equation}
By equation~\ref{eq:epic-regret-bound-discrete:ell1-vs-epic}, we know that $\directdistance[1](\firstreward^S{}, \secondreward^S{}) \to 0$ as $\canonicalizeddistance{}(\firstreward{}, \secondreward{}) \to 0$.
By lemma~\ref{lemma:lipschitz-limit}, we know that $\directdistance[1][\transitiondataset_{\policy, t}]{}(\firstreward^S, \secondreward^S) \to 0$ as $\directdistance[1][\transitiondataset{}](\firstreward^S{}, \secondreward^S{}) \to 0$.
So we can conclude that as $\canonicalizeddistance{}(\firstreward{}, \secondreward{}) \to 0$:
\begin{equation}
\left\lvert \policyreturn{}_{\firstreward}(\policy) - \policyreturn{}_{\secondreward}(\policy) \right\rvert \to 0,
\end{equation}
as required.
\end{proof}

\end{document}